\pdfoutput=1

\documentclass{article}

\PassOptionsToPackage{numbers, compress}{natbib}


\usepackage[preprint]{neurips_2025}


\usepackage[utf8]{inputenc} 
\usepackage[T1]{fontenc}    
\usepackage{hyperref}       
\usepackage{url}            
\usepackage{booktabs}       
\usepackage{amsfonts}       
\usepackage{nicefrac}       
\usepackage{microtype}      
\usepackage{xcolor}         
\usepackage{amssymb}
\usepackage{amsmath}
\usepackage{amsthm}

\usepackage{float}
\usepackage[ruled, boxed]{algorithm2e}


\usepackage{wrapfig} 
\usepackage{multirow}
\usepackage{float}
\usepackage{tabularx}
\usepackage{times}
\usepackage{latexsym}
\usepackage[utf8]{inputenc}
\usepackage{booktabs} 
\usepackage{subcaption}
\usepackage{adjustbox}
\usepackage{enumitem}
\usepackage{caption} 
\hypersetup{
    colorlinks=true,        
    linkcolor=darkblue,     
    urlcolor=darkblue       
}

\definecolor{darkblue}{HTML}{000099}

\newtheorem{assumption}{Assumption}
\newtheorem{proposition}{Proposition}

\newtheorem{lemma}{Lemma}
\newtheorem{definition}{Definition}

\newenvironment{restatedproposition}[1]{%
    \proposition%
}{%
    \endproposition%
}

\title{Do we really have to filter out random noise in pre-training data for language models?}

%

\author{%
  Jinghan Ru$^{1}$, Yuxin Xie$^{1}$, Xianwei Zhuang$^{1}$, Yuguo Yin$^{1}$, Zhihui Guo$^{2}$, \\ {\bfseries Zhiming Liu$^{3}$, Qianli Ren$^{4}$, Yuexian Zou$^{1}$\thanks{Corresponding author.}} \\
  $^{1}$ School of Electronic and Computer Engineering, Peking University \\
  $^{2}$ University of Electronic Science and Technology of China \\
  $^{3}$ Hong Kong University of Science and Technology 
  $^{4}$ Sichuan University 
  \\
  \texttt{\{jinghanru, yuxinxie, xwzhuang, ygyin\}@stu.pku.edu.cn,} \\
  \texttt{zhihuiguo@std.uestc.edu.cn, zliugx@connect.ust.hk} \\
  \texttt{qianliren.scu@gmail.com, zouyx@pku.edu.cn} \\
}

\begin{document}
\maketitle
\begin{abstract}
  Web-scale pre-training datasets are the cornerstone of LLMs' success. However, text data curated from the Internet inevitably contains random noise caused by decoding errors or unregulated web content. In contrast to previous works that focus on low quality or synthetic data, our study \textbf{provides the first systematic investigation of such random noise through a cohesive ``What-Why-How'' framework.} Surprisingly, we observed that the resulting increase in the loss of next-token prediction (NTP) was significantly lower than the proportion of random noise even when the model was scaled up to 2.7B. We provide a theoretical justification for this phenomenon, which also elucidates the success of multilingual models and can be applied to multimodal models. On the other hand, experiments show that the model's performance in downstream tasks is not based solely on the NTP loss, which means that random noise may result in degraded downstream performance. To address the potential adverse effects, we introduce a novel plug-and-play Local Gradient Matching loss, which explicitly enhances the denoising capability of the downstream task head by aligning the gradient of normal and perturbed features without requiring knowledge of the model's parameters. Additional experiments on 8 language and 14 vision benchmarks further validate its effectiveness.
\end{abstract}

\section{Introduction}


Large language models (LLMs) have fundamentally transformed the research landscape in natural language processing. The remarkable performance of these autoregressive models is largely attributed to pre-training on extensive datasets, which are gathered by crawling text from the whole Internet. Given the sheer volume of these datasets, they inevitably encompass a wide variety of noise \citep{pretrainguide, wimbd}. 
Consequently, it is imperative to understand its impact, as the quality of pre-training data plays a decisive role in the effectiveness of LLMs \citep{llama}. 
Previous studies \citep{allenphysics, xie2023data} highlight that low quality data can significantly decrease the knowledge capacity and performance of a model, and recursive training of LLMs with synthetic data can lead to model collapse \citep{naturemc, colmmc}. 

However, \textbf{little attention has been paid to the impact of random noise within datasets}. Due to anti-crawling mechanisms, decoding errors, and huge amounts of unmaintained websites, the raw data obtained through web crawling inevitably contains a substantial amount of random noise \citep{webnoise, wavlm, webnoise1}. Although theoretically it may not be challenging to remove such noise, practical limitations in computational resources often result in incomplete data cleaning
\citep{dssurvey, dolma}. 
Therefore, it is of great importance to gain a thorough understanding of the effects of such random noise.

We conduct extensive experiments based on the OpenWebText dataset \citep{openwebtext} used to pre-train language models with the same architecture as GPT-2 and parameter size up to 2.7B. Specifically, to simulate random noise shown in Figure~\ref{fig:1}, we randomly generate a sequence of random integers within the range of 0 to 50256, the vocabulary size of GPT-2's tokenizer, to simulate the tokenization outcome of non-sensical text found on the Internet. 
Interestingly, we observe that the presence of random noise does not lead to a catastrophic failure in model training; instead, its effect on autoregressive loss is disproportionately small, e.g.,~the increase in loss is only about 1\% even with 20\% of the dataset being noisy. 
We provide a theoretical analysis to explain these phenomena, which also sheds light on the success of multilingual models \citep{atri} and multimodal LLMs \citep{vargpt, uniaudio}, indicating the broader implications of studying the random noise.

\begin{wrapfigure}{r}{0.5\columnwidth} 
  \centering
  \vspace{-5mm}
  \includegraphics[width=\linewidth]{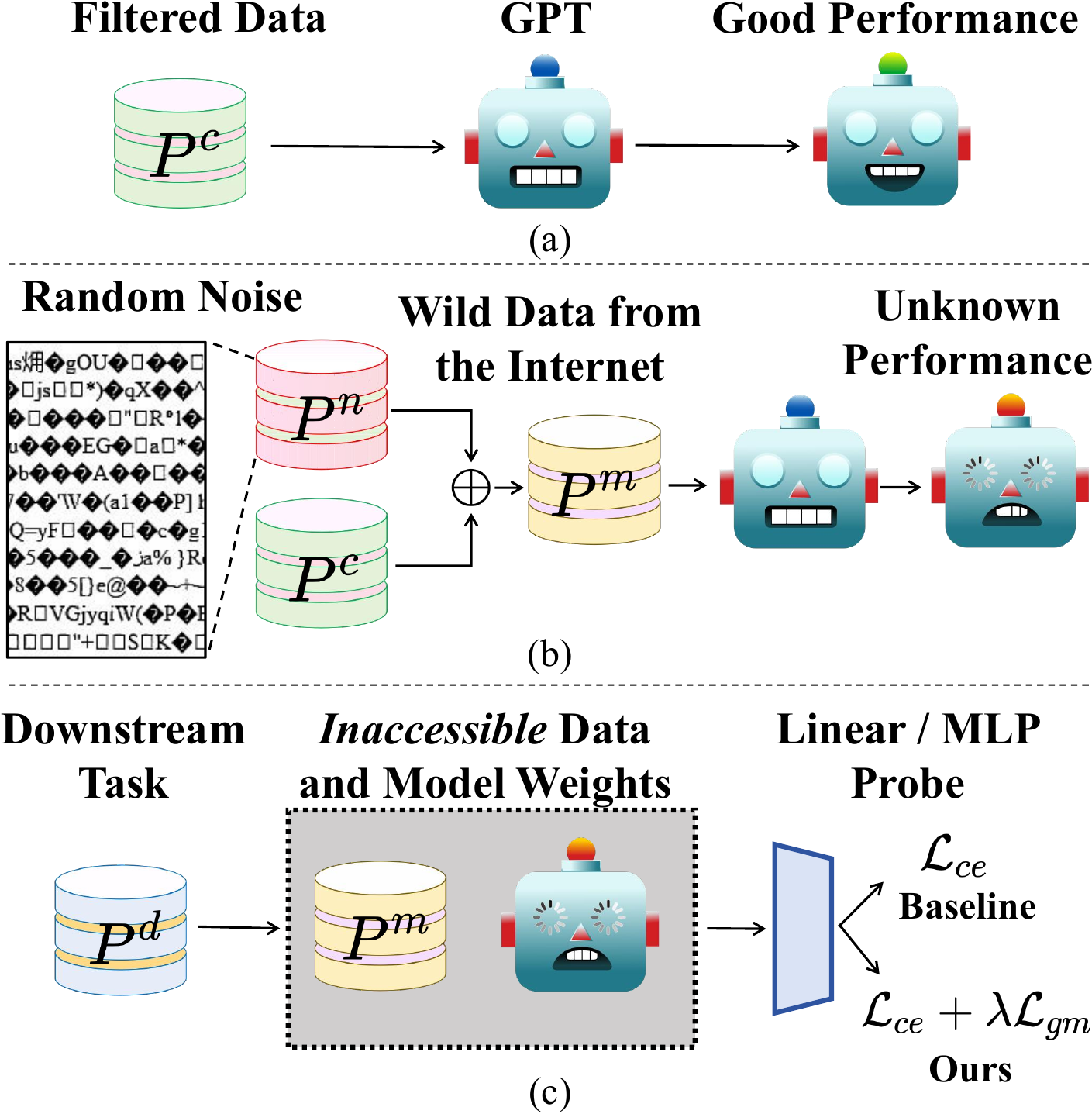} 
  \vspace{-4mm}
  \caption{Overview of the study and methodology. (a) The common scenario in which a GPT model, pre-trained on filtered data $P^c$, demonstrates robust performance. (b) When the pre-training dataset is contaminated with random noise $P^n$, the resultant language model may exhibit unpredictable behavior. (c) Our approach follows \citep{nml} and focuses on the effective fine-tuning of black-box noisy models for downstream tasks $P^d$.}
  \label{fig:1}
  \vspace{-5mm}
\end{wrapfigure}

On the other hand, further experiments reveal that a noisy model that exhibits a lower NTP loss experiences a 1.5\% decrease in accuracy on downstream tasks. This indicates that performance on downstream tasks is not solely rely upon the NTP loss. Given the common practice of fine-tuning a pre-trained foundation model rather than undergoing a full pre-training process from scratch, we opt to follow the work of \citep{nml} by exploring how to efficiently fine-tune language models only using extracted features for downstream tasks when the pre-training data and model weights are not accessible, which reflects real-world application scenarios for LLMs. To mitigate the potential adverse effects of noise, we propose a novel plug-and-play Local Gradient Matching (LGM) loss. This method involves artificially adding noise to the output features and minimizing the gradient difference between the noisy and original features. We also provide a theoretical analysis of the LGM loss. Interestingly, when applying the LGM loss to fine-tune clean models such as Llama-3 or ViT-L on 8 language and 14 vision datasets, we observe an unexpected improvement in accuracy, which effectively demonstrates the versatility and broad applicability of the LGM loss beyond its original intent of addressing noise-related issues. 


The remaining part is arranged as follows. In Section \ref{sec:2}, we summarize related works. In Section \ref{sec:3} and \ref{sec:4}, we follow a logical structure based on \textbf{``What-Why-How''}:
\vspace{-2mm}
\begin{itemize} 
   \item What: What is the effect of random noise? In Section \ref{sec:3}, we demonstrate through experiments how random noise affects NTP loss.

   \item Why: Why does it have this effect? The section further explores the underlying reasons by providing a rigorous theoretical analysis.

   \item How: How do we mitigate the potentially harmful effect on downstream tasks? In Section \ref{sec:4}, we introduce the LGM loss and provide a theoretical explanation.
\vspace{-2mm}
\end{itemize}
Section \ref{sec:4} further provides evidence of the novelty and effectiveness of LGM loss by conducting extensive experiments on 22 downstream tasks.

Our contributions are as follows: (1) We investigate the underexplored problem of random noise in pre-training datasets for language models. (2) We pre-train multiple GPT-2 models with parameter size up to 2.7B and the empirical results show that its influence on NTP loss is insignificant. Then we provide a theoretical analysis that can be extended to other domains. (3) We propose a novel blackbox fine-tuning LGM loss for downstream tasks, supported by comprehensive experimental and theoretical analysis. \textbf{Code, data, and model checkpoint weights are available at} \href{https://anonymous.4open.science/r/lmn-acl-E9D3}{this repo}.
\nopagebreak

\section{Related Works}
\label{sec:2}

\textbf{Pre-training Data Analysis for Language Model Training.} Elazar et.al \citep{wimbd} analyzed open-source datasets like The Pile and C4, uncovering significant amounts of low-quality content in these datasets. Exsiting works \citep{allenphysics, colmmc} highlighted the negative impact of such data on training. Despite these remarkable contributions, there remains a lack of understanding regarding the specific effects of random noise on language model performance. This paper aims to address this gap.

\textbf{Noisy Model Learning.} Our work draws significant inspiration from Noisy Model Learning (NML) \citep{nml}. In NML, the authors introduce noise into large datasets like ImageNet by randomly altering labels, then pre-train neural networks on these noisy datasets. The study reveals that moderate label noise enhances in-distribution (ID) sample classification, while out-of-distribution (OOD) performance deteriorates with increasing noise. This paper extends the concept of NML, presenting new theoretical insights and methodologies.

Due to space limitations, detailed related works are provided in Appendix~\ref{app:relatedwork}.
\vspace{-1.5mm}

\section{Revealing the Effect of Random Noise in Language Model Pre-training}
\label{sec:3}

In this section, we first pre-train multiple GPT-2 models on synthetic noisy OpenWebText corpus to investigate the impact of random noise in the pre-training data. We then provide a theoretical analysis of the results and validate our theory through experiments. Finally, we demonstrate that the insights gained from our investigation have broader applicability beyond the immediate scope of our study. In summary, Section~\ref{sec:3} focuses on understanding the impact of random noise by presenting detailed experiments and theoretical analyses. We delve into how random noise affects NTP loss and introduce insights into why these effects occur. It sets the foundation for our investigation.

The frequently used notation and their descriptions are shown in Appendix~\ref{sec:appa}.
\vspace{-1.5mm}
\subsection{Preliminary} Let $L$ denote the maximum context length of the language model and let $\mathcal{W}$ represent the model's vocabulary with size $V = |\mathcal{W}|$. We define $\mathcal{X}$ as the set of all discrete sentences that the model can represent, where $\mathcal{X} = \cup_{i=1}^{L}\{0, 1, \dots, V -1\}^{i} = \cup_{i=1}^{L}\mathcal{W}^{i}$ and $\{0, 1, \dots, V -1\}^{i}$ represents prefixes of length $i$. For any discrete set $A$, let $\Delta_A$ denote the set of all probability distributions defined on $A$. Given that next-token prediction (NTP) is actually a classification task given the prefix, we define joint probability distributions $P^c, P^n, P^m \in \Delta_{\mathcal{X} \times \mathcal{W}}$ where $P^c$ represents the distribution of clean data, $P^n$ represents the distribution of noise data, and $P^m$ represents the distribution of the mixed pre-training dataset. Since the noisy dataset can be viewed as the concatenation of clean data and random noise, it can be formalized by the Huber contamination model \citep{oodfz} as follows:
\vspace{-1mm}
\begin{equation}
  \label{eq:huber}
  P^m = \alpha P^n + (1 - \alpha) P^c
\end{equation}
\noindent where we use $\alpha$ to represent the noise proportion. An explanation of Equation~(\ref{eq:huber}) can be found in Appendix~\ref{app:huber}. For any joint probability distribution $P \in \Delta_{\mathcal{X} \times \mathcal{W}}$, let $P_X \in \Delta_{\mathcal{X}}$ and $P_{\cdot \vert X} \in \Delta_{\mathcal{W}}$ represent the marginal and conditional distribution of $P$ over $\mathcal{X}$ and $\mathcal{W}$. 

We use $\mathcal{H}$ to denote the hypothesis space (e.g., all possible parameters given the transformer architecture ). Define $h: \mathcal{X} \rightarrow \mathbb{R}^V \in \mathcal{H}$ as the language model and $\boldsymbol{p}^{h}_{\cdot \vert x}(w)$ as the $w$-th component of the probability distribution induced by $h(x)$. The NTP loss can be expressed as follows:
\vspace{-1mm}
\begin{equation}
  \label{eq:ntp}
  \mathcal{L}_{ntp}(P, h) = \mathbb{E}_{x \sim P_X} \mathbb{E}_{w \sim P_{\cdot | x}} \big[-\log(\boldsymbol{p}^h_{\cdot | x}(w))\big].
\end{equation}
\vspace{-5.5mm}

\subsection{Random Noise Data Generation}
\subsubsection{What is “Random Noise”?}
\label{sec:def}
Our study focuses on ``random noise'', informally defined as \textit{any token sequence that appears nonsensical with respect to the distributions humans can read}. In real-world text data crawled from the Internet, we encounter at least two types of random noise:
(1) \textit{Pure random data with a uniform distribution:} such noise may arise from hardware faults on backend servers, encrypted content, or website defense mechanisms. For example, Son et al. \citep{defweb} observe that “several defense techniques inject random noise during website rendering to degrade the attack success rate.”
\urldef{\oracleurl}\url{https://docs.oracle.com/cd//E19528-01//820-0890//WebSvr.html#wp35099}
(2) \textit{Decoding-error data with a distorted distribution:} this noise can originate from decoding errors, the vast volume of unmaintained websites \footnote{\scriptsize \url{https://community.cloudflare.com//t//website-showing-garbage-text}}, and even appears in Oracle’s official documentation \footnote{\scriptsize \oracleurl}. Although such data follows some distribution, it diverges drastically from clean data, and experiments show that a GPT-2 model trained on clean data exhibits significantly higher predictive entropy with these sequences.

Since neither of these noise types occurs in a normal dataset, in mathematical terms this implies that for any prefix $r$ sampled from $P^n_X$, the probability under the clean distribution $P^c_X(r)$ is zero. Thus, we make the following mild assumption as the formal definition of random noise:

\begin{assumption}
\label{assumption1}
$P^c$ and $P^n$ have disjoint support sets, i.e., $\mathrm{supp}(P^c)\cap \mathrm{supp}(P^n)=\emptyset$.
\end{assumption}
\vspace{-2mm}
Previous studies have focused on synthetic or low-quality data, which is similar to clean data distributions but contain incorrect or substandard information. An imperfect analogy is to treat next-token prediction as an image classification task: low-quality data corresponds to label noise in image classification, whereas random noise represents a fundamental corruption of the image distribution itself. This distinction marks the clear novelty of our work relative to prior efforts.

\vspace{-2mm}
\subsubsection{Data Generation Process}  
\vspace{-2mm}

Due to constrained computational resources, we are unable to collect a sufficiently large corpus of genuine random noise from live web crawls. Instead, we adopt a synthetic approach that faithfully emulates the two primary noise modalities mentioned above. We use OpenWebText \citep{openwebtext} as the clean dataset $D_c$ comprising $|D_c| \approx 8$ billion tokens as an alternative to the original WebText dataset.

\textbf{Uniform Random Noise.}
To mimic pure random corruption (e.g., hardware faults, encryption artifacts, or anti-crawling defenses \citep{defweb}), we generate sequences of token IDs following uniform distribution as the random noise data $D_n$. Concretely, let \(V=50256\) denote the tokenizer vocabulary size and \(\alpha\) the desired noise proportion. We sample
\begin{equation}
    \label{eq:dn}
    x_i \sim \mathrm{Uniform}\bigl(0, V-1\bigr)\,,\quad i=1,\dots, |D_n|,
\end{equation}
where \(|D_n|\) is chosen such that $\alpha=\frac{|D_n|}{|D_n|+|D_c|}$. Our experiments are primarily based on $D_n$. In the following text, unless otherwise specified, the term "random noise" refers to this particular type.

\textbf{Gaussian-Distributed Noise.}  
To capture decoding errors and web-rendering artifacts that follow certain skewed distribution, we further sample Gaussian noise data $D_g$ from a truncated discrete Gaussian distribution to mimic a learnable noise. Let \(\mu = (V-1)/2\) and \(\sigma\) be a chosen standard deviation (e.g., \(\sigma=500\)). For \(i=1,\dots, |D_g|\), we draw
\begin{equation}
    \label{eq:dg}
    x_i \sim \mathrm{DiscreteGaussian}(\mu, \sigma)\quad\text{clipped to }[0, V-1].
\end{equation}
\begin{wrapfigure}{r}{0.6\linewidth}
\vspace{-6mm}
\begin{algorithm}[H]
\caption{Synthetic Noise Generation}
\label{alg1}
\KwIn{Clean dataset \(D_C\), noise ratio \(\alpha\), maximum context length $L$, batch size $B$} \
\tcp{Data generation process} \
$D_n=\emptyset$ \\
\While{$\frac{|D_n|}{|D_n|+|D_c|} \le \alpha$} {
    sample $x_i$ according to Equation~(\ref{eq:dn}) \tcp{or (\ref{eq:dg})} \
    $D_n$ = {\ttfamily concat}$(D_n, x_i)$
}
$D_m$={\ttfamily concat}$(D_c, D_n)$ \\
\tcp{Pretraining Stage}
\While{training not converged} {
    \tcp{sample a mini-batch}
    {\ttfamily
    ix = torch.randint($|D_m|$ - $L$, ($B$)) \\
    x = [$D_m$[i:i+$L$] for i in ix] \\
    y = [$D_m$[i+1:i+1+$L$] for i in ix] \\
    }
    {\ttfamily l = cross\_entropy(x, y)} \\
    {\ttfamily backward\_loss\_and\_update\_optimizer()} \\
}
\end{algorithm}
\vspace{-8.375mm}
\end{wrapfigure}
Upon acquiring the noisy data, we directly concatenate it to $D_c$. Since token sampling during training is completely random, the positional arrangement of noise data - whether pre-appended, appended, or interleaved with clean data - becomes statistically irrelevant in practice. The complete data generation and pre-training procedures are formally described in Algorithm~\ref{alg1}. We set $\alpha$ to be 1\%, 5\%, and 20\% for $D_n$, and 5\% for $D_g$.

While our pioneering experiments employ a relatively simple simulation strategy, we argut that it nevertheless captures the essential characteristics of real-world random noise . By adopting Gaussian or random noise 
as an initial fully parameterizable testbed, we ensure analytical tractability and reproducibility, and it can readily be extended to more realistic data in future work.

\subsection{Experiments}
\begin{figure*}[t]
  \centering
  \begin{subfigure}{0.3\linewidth}
    \includegraphics[height=3.3cm, width=\linewidth]{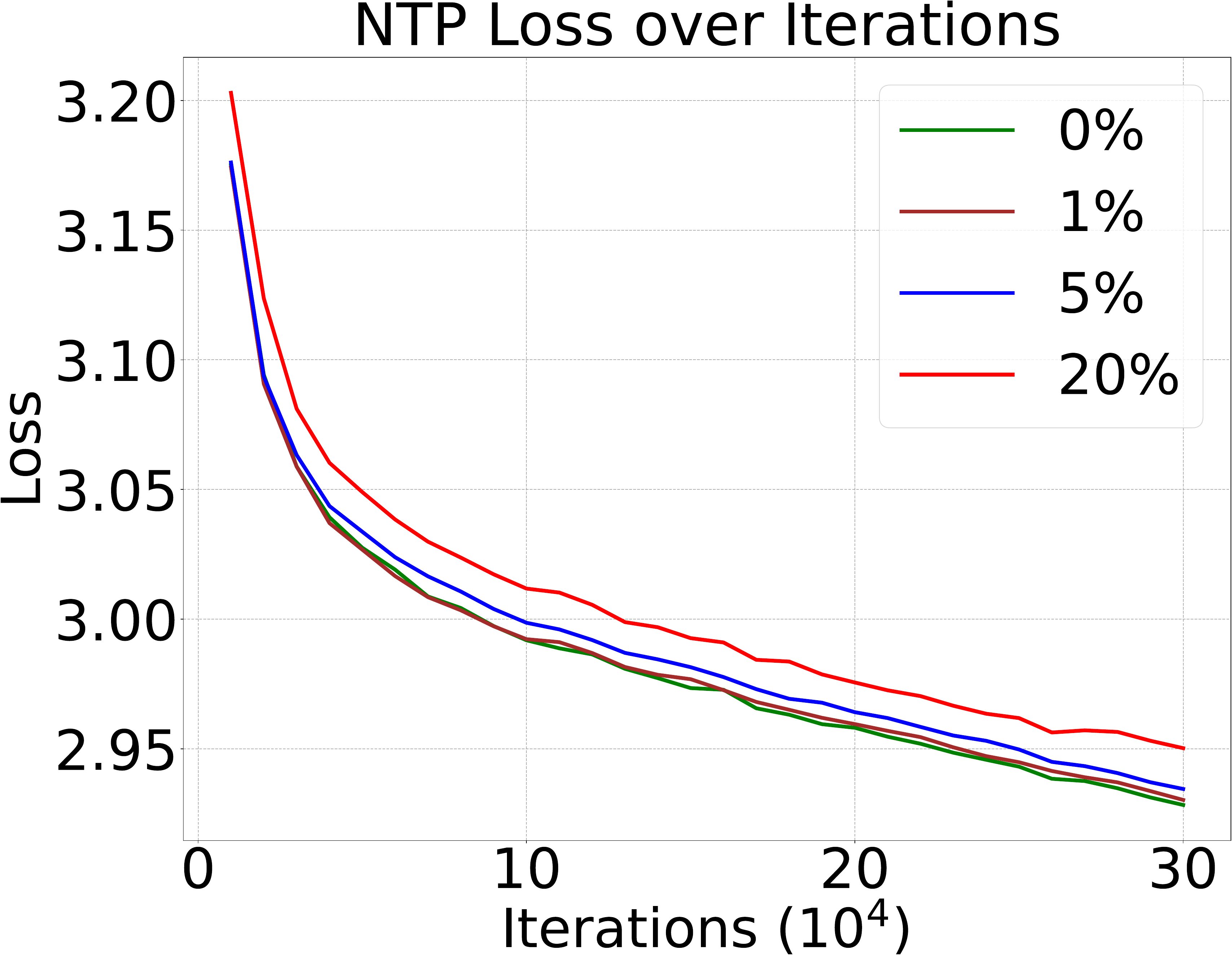}
    \caption*{(a)}
  \end{subfigure} \hfill
  \begin{subfigure}{0.3\linewidth}
    \includegraphics[height=3.3cm, width=\linewidth]{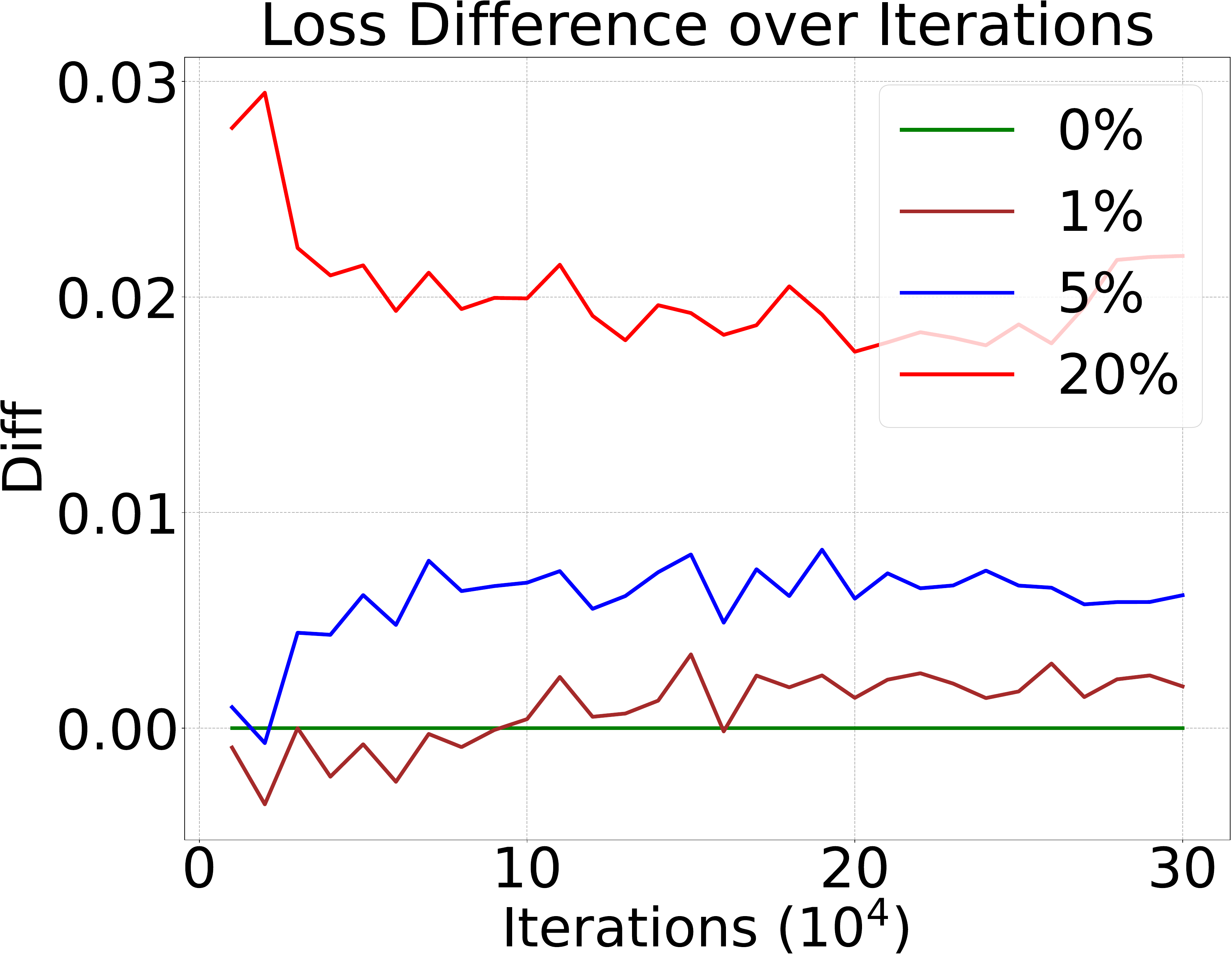}
    \caption*{(b)}
  \end{subfigure} \hfill
  \begin{subfigure}{0.35\linewidth}
    \includegraphics[height=3.3cm, width=\linewidth]{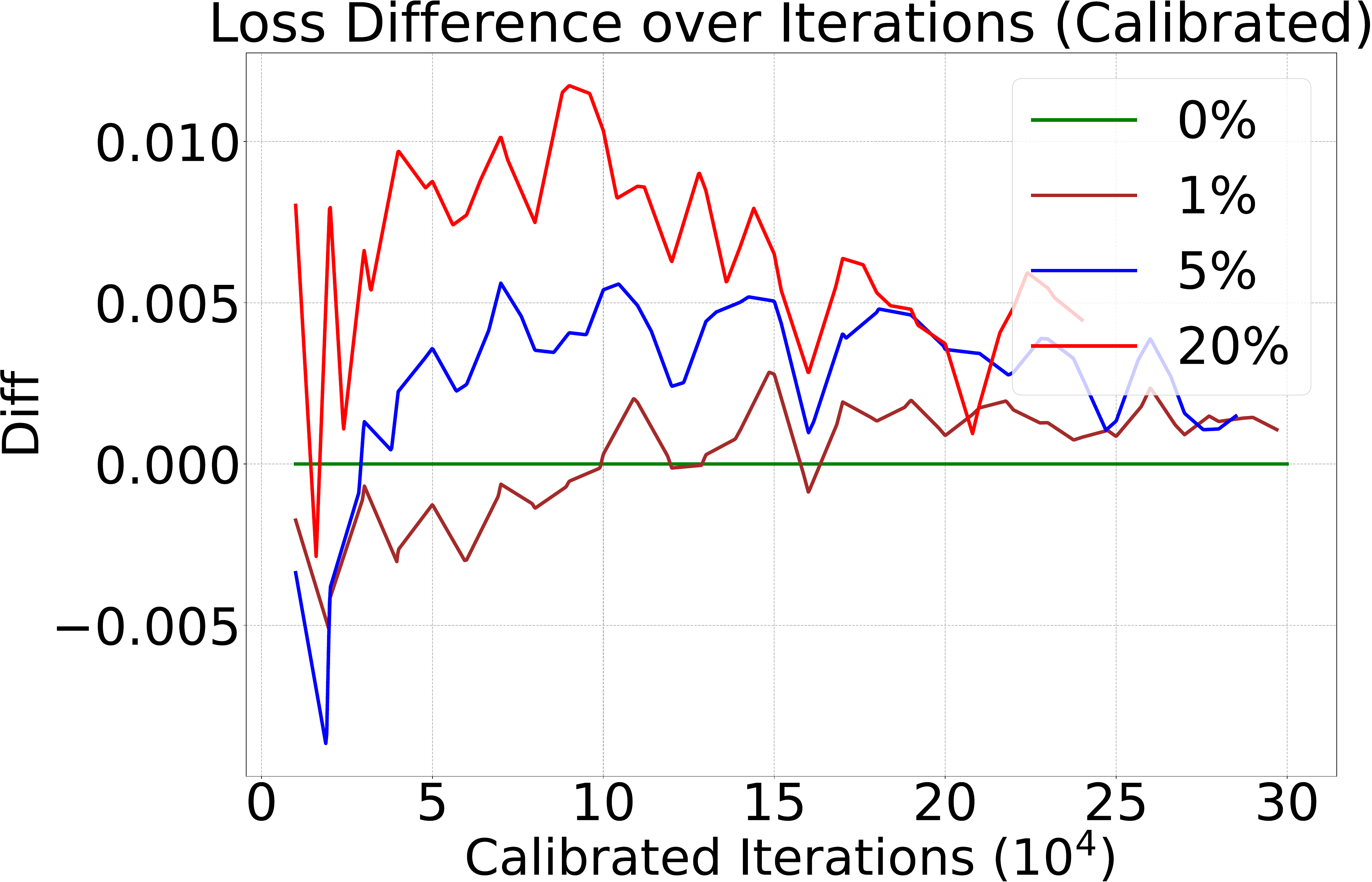}
    \caption*{(c)}
  \end{subfigure}
  \vspace*{-2.5mm}
  \caption{Next-token prediction loss on the clean OpenWebText validation set for GPT-2 124M models pre-trained on synthetic OpenWebText datasets with varying levels of random noise. (a) Trend of NTP loss as training proceeds. (b) Difference in NTP loss between the noisy and clean models after the same number of training iterations. (c) Difference in loss values after undergoing the same number of training iterations on clean OpenWebText data.}
  \label{fig:2}
  \vspace*{-4mm}
\end{figure*}
\begin{figure*}[t]
  \centering
  \begin{subfigure}{0.28\linewidth}
    \includegraphics[height=3.4cm, width=\linewidth]{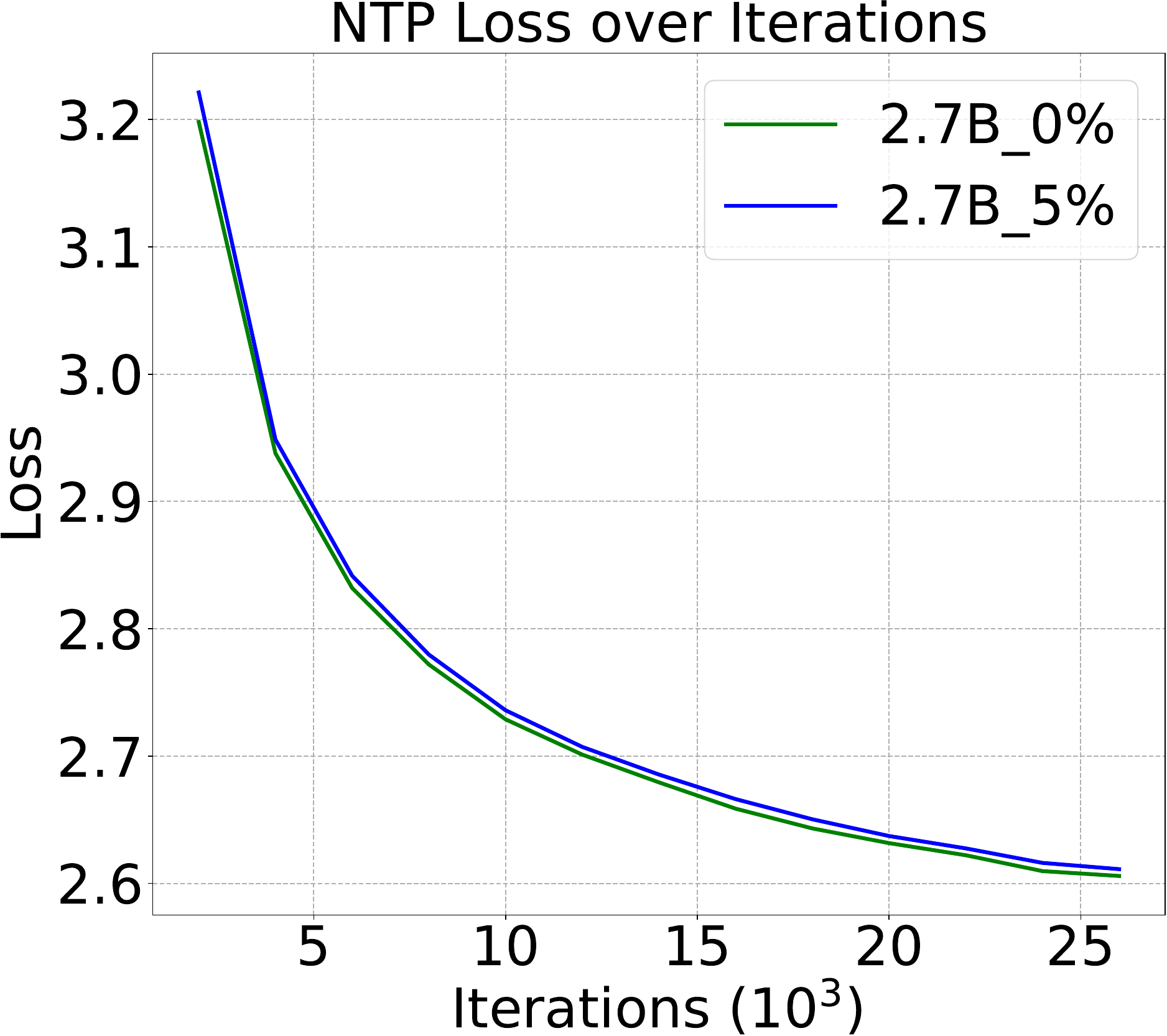}
    \caption*{(a)}
  \end{subfigure} \hfill
  \begin{subfigure}{0.30\linewidth}
    \includegraphics[height=3.4cm, width=\linewidth]{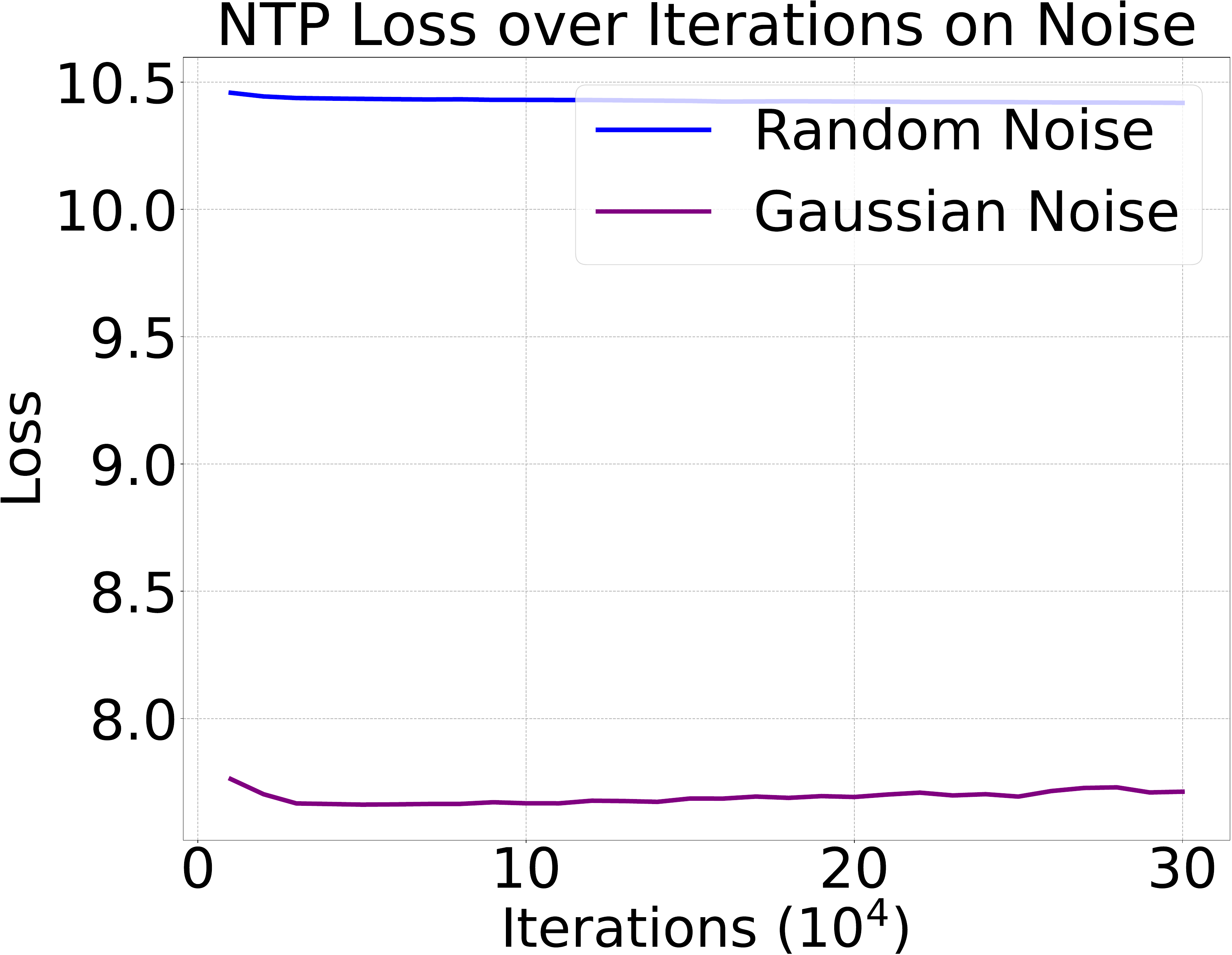}
    \caption*{(b)}
  \end{subfigure} \hfill
  \begin{subfigure}{0.32\linewidth}
    \includegraphics[height=3.4cm, width=\linewidth]{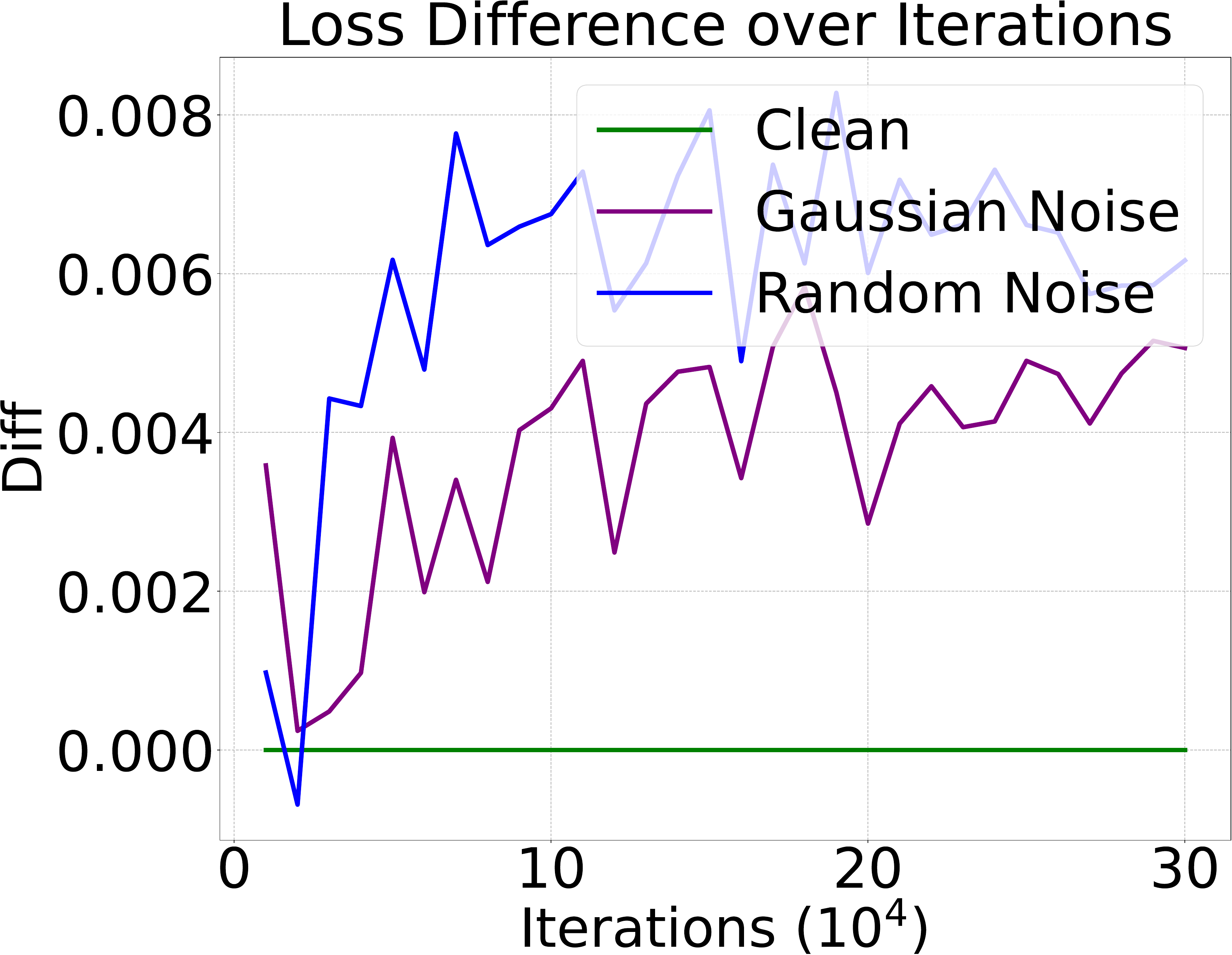}
    \caption*{(c)}
  \end{subfigure}
  \vspace*{-2.5mm}
  \caption{Validation experiments with the models trained with 5\% random noise.  (a) Loss trends on the OpenWebText validation set for GPT-2 2.7B model. (b) Comparison of the 124M training set loss between 5\% random noise and Gaussian noise. (c) Loss difference on the validation set for 124M models trained on datasets with 5\% random noise and 5\% Gaussian noise, respectively.}
  \label{fig:3}
  \vspace*{-7mm}
\end{figure*}

Our work is based on \href{https://github.com/karpathy/nanoGPT}{nanoGPT}. Specifically, we pretrain GPT-2 on clean or noisy dataset with two parameter configurations: 
The first adheres to the GPT-2 124M \citep{gpt2} architecture with 12 transformer layers, 12 attention heads, and 768-dimensional embeddings. The second follows the GPT-Neo 2.7B \citep{pile} specification comprising 32 layers, 20 attention heads, and 2560-dimensional embeddings. Constrained by computational resources, our primary experiments utilize the 124M variant, while a 2.7B model pre-trained on 5\% random noise data is strategically deployed to validate the scalability of our findings. 
After pretraining, the resulting model checkpoints are tested on the clean OpenWebText validation set, measuring the NTP loss for comparison. Further details regarding datasets and experimental parameters can be found in Appendix~\ref{sec:mer}.

In Figure~\ref{fig:2}, we illustrate the evolution of the NTP loss throughout the training process. Although random noise has a negative effect on the model's performance as expected, experimental results yield intriguing insights: 
the impact of random noise on the loss is disproportionately small. In fact, according to conventional statistical learning theory \citep{uml}, the performance of machine learning models degrades as the distribution shift between training and test data increases. For example, Ben-David et al. \citep{ben-david} have shown that the loss on the test set can be upper bounded by the sum of training set loss and total variation distance between the training and test distributions, which reaches the maximum value when the two distributions have disjoint support set.  \textbf{In other words, previous theory predicts that random noise will lead to worst-case model performance. However, our experiments show the contradictory results.}
For instance, 5\% of random noise only results in a 0.2\% increase in the NTP loss. This discrepancy becomes even smaller if the noisy models are calibrated to match the number of training iterations with the baselines trained on clean datasets.

As illustrated in Figure~\ref{fig:3}(a), the impact of random noise remains negligible even when the model size scales up to 2.7B parameters. Furthermore, when evaluating the trained model on Wikipedia and arXiv datasets for NTP loss (detailed in Appendix~\ref{arxiv}), we observe that random noise can even be beneficial in certain instances. These counter-intuitive experimental outcomes further corroborate the robustness of language models and provide insights into why pre-training on large-scale datasets that inevitably contain significant amounts of noise can still yield high-performing models. These somewhat unexpected findings naturally prompt us to explore the underlying reasons.

\subsection{Theoretical Analysis}
In the analysis below, we focus on the impact of random noise on NTP loss which is crucial for the performance on downstream tasks \citep{mathlm, whyhelp, sameloss, zcy2}. Specifically, we are interested in the difference of NTP Loss between a model $h^*$ trained on a noise-free dataset and a model $h$ trained with a noisy dataset. 



\begin{proposition}
\label{pro1}
Under Assumption~\ref{assumption1}, let $h^*$ be a model trained on $P^c$, with $\mathcal{L}_{ntp}(P^c,h^*)=-\log p_c$ and $\mathcal{L}_{ntp}(P^n,h^*)=-\log p_n$. When the model $h$ is trained on a mixed distribution $P^m$ which includes noise, it attempts to fit $P^n$, leading to an increase in the loss on the clean distribution $P^c$, such that $\mathcal{L}_{ntp}(P^c,h)=-\log(p_c-\epsilon)$ and $\mathcal{L}_{ntp}(P^n,h)=-\log(p_n+\epsilon/k)$ for some $\epsilon > 0$ ($k$ can be shown to be $\Omega(e^{\mathcal{L}_{ntp}(P^n,h)})$). Let $\eta = \alpha p_c - (1-\alpha)kp_n$.
We arrive at the following conclusions:

(1) If $\alpha \leq \frac{kp_n}{p_c + kp_n}$, then for any $0 < \epsilon <p_c$, we have $\mathcal{L}_{ntp}(P^m, h) \ge \mathcal{L}_{ntp}(P^m, h^*)$. This means that when $\alpha$ is sufficiently small, the global minimum on $P^m$ will not be affected by noise.

(2) If $\alpha > \frac{kp_n}{p_c + kp_n}$, then for $\epsilon < \eta$, it holds that $\mathcal{L}_{ntp}(P^m,h) < \mathcal{L}_{ntp}(P^m,h^*)$. This suggests that if $\alpha$ is large enough, the impact on the optimal hypothesis is at least as much as $\alpha p_c - (1-\alpha)kp_n$.

(3) When $\alpha < \frac{1}{3}$ and $k>\frac{\alpha(1-3\alpha)p_c}{(1-\alpha)(2-3\alpha)p_n}$, for $\epsilon \ge 3\eta$ we get $\mathcal{L}_{ntp}(P^m,h^*) < \mathcal{L}_{ntp}(P^m,h)$. Similarly, it can be shown that $\epsilon$ does not exceed $2\eta$ when $\alpha > \max\left(\frac{kp_n}{p_c + kp_n}, \frac{1}{2}\right)$ and $k > \frac{(2\alpha-1)p_c}{2(1-\alpha)p_n}$. This indicates that when $k$ is sufficiently large, the effect of noise is at most $\mathcal{O}(\alpha p_c - (1-\alpha)kp_n)$.

\end{proposition}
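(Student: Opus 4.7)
By Assumption~\ref{assumption1} and linearity of expectation, $\mathcal{L}_{ntp}(P^m, h) = \alpha\,\mathcal{L}_{ntp}(P^n, h) + (1-\alpha)\,\mathcal{L}_{ntp}(P^c, h)$, so under the hypothesized component losses I would introduce
\[
g(\epsilon) \;:=\; -\alpha\log(p_n + \epsilon/k) - (1-\alpha)\log(p_c - \epsilon), \qquad \epsilon \in [0, p_c),
\]
so that $g(\epsilon) = \mathcal{L}_{ntp}(P^m, h)$ and $g(0) = \mathcal{L}_{ntp}(P^m, h^*)$; the whole proposition then becomes a statement about the sign of $g(\epsilon) - g(0)$. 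A direct computation gives $g'(\epsilon) = (1-\alpha)/(p_c - \epsilon) - \alpha/(kp_n + \epsilon)$ and $g''(\epsilon) > 0$, so $g$ is strictly convex and its unique minimizer, found from $g'(\epsilon) = 0$, is $\epsilon^* = \alpha p_c - (1-\alpha)kp_n = \eta$.

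Parts (1) and (2) are immediate consequences of this structure. If $\eta \le 0$, equivalently $\alpha \le kp_n/(p_c + kp_n)$, the minimizer lies at or to the left of $0$, so $g$ is non-decreasing on $[0, p_c)$ and $g(\epsilon) \ge g(0)$ for every admissible $\epsilon$. If $\eta > 0$, then $g$ is strictly decreasing on $[0, \eta)$, so $g(\epsilon) < g(0)$ whenever $0 < \epsilon < \eta$. Strict convexity together with $g(\eta) < g(0)$ also forces $g(\epsilon) = g(0)$ to admit a unique second root $\eta^+ > \eta$, with $g(\epsilon) > g(0)$ on $(\eta^+, p_c)$. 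Part (3) then amounts to showing $\eta^+ \le 3\eta$ in the first case and $\eta^+ \le 2\eta$ in the second case.

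To establish $\eta^+ \le 3\eta$, I would prove $g(3\eta) \ge g(0)$, i.e., $\alpha\log(1 + 3\eta/(kp_n)) + (1-\alpha)\log(1 - 3\eta/p_c) \le 0$. Applying sign-aware bounds on the two logarithms, using the identity $(1-\alpha)/p_c - \alpha/(kp_n) = -\eta/(kp_n p_c)$ to cancel the leading linear-in-$\eta$ contribution, and then collecting the remaining quadratic and cubic terms reduces the inequality to the factored form $(1-\alpha)(2-3\alpha)kp_n - \alpha(1-3\alpha)p_c \ge 0$, which is precisely the stated hypothesis; the constraint $\alpha < 1/3$ enters as the requirement that $(1-3\alpha)$ be positive so that the resulting bound on $k$ is a genuine lower bound rather than vacuous. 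The $\epsilon \le 2\eta$ half runs identically with $\alpha > 1/2$ playing the analogous role for the coefficient $(2\alpha - 1)$. The main obstacle is this final algebraic step: low-order Taylor truncations of the two logarithms are too coarse, and since $3\eta/(kp_n)$ can approach or exceed $1$ when $kp_n$ is near the stated threshold, the naive Taylor series need not even converge in the relevant range, so one has to exploit the relation $\alpha p_c - (1-\alpha)kp_n = \eta$ globally~--- most cleanly by reparameterizing $x = 3\eta/(kp_n)$ and verifying the resulting one-parameter inequality $(1+x)^\alpha \big[((1-3\alpha)x + 3(1-\alpha))/(x + 3(1-\alpha))\big]^{1-\alpha} \le 1$ on the admissible range of $x$.
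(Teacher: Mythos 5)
Your overall strategy matches the paper's: both proofs reduce the claim, via the decomposition $\mathcal{L}_{ntp}(P^m, h) = \alpha\,\mathcal{L}_{ntp}(P^n, h) + (1-\alpha)\,\mathcal{L}_{ntp}(P^c, h)$ (the paper's Lemma~\ref{lemma1}, which you invoke correctly but tersely --- it does require a short argument that the conditional distribution $P^m_{\cdot|x}$ splits cleanly on the two disjoint supports, not just the marginal), to a one-variable analysis of the function you call $g(\epsilon)$ and the paper calls $f(\epsilon) = g(0) - g(\epsilon)$. Where you improve on the paper is in parts (1) and (2): the paper proves them with two separate elementary log inequalities ($\log(1+t) \le t$ for one direction, $\log t \ge 1 - 1/t$ for the other), whereas you observe once and for all that $g$ is strictly convex with unique minimizer $\epsilon^* = \eta$, from which both parts drop out immediately. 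That is a genuine simplification. For part (3), the paper and you both reparameterize and check a derivative sign; the paper uses $t = p_c/(kp_n)$, while you propose $x = 3\eta/(kp_n)$, which is an affine change of variable in $t$, so the arguments are essentially equivalent. Your proposed one-parameter inequality
\[
(1+x)^\alpha \left[\frac{(1-3\alpha)x + 3(1-\alpha)}{x + 3(1-\alpha)}\right]^{1-\alpha} \le 1, \qquad 0 < x \le \tfrac{3(1-\alpha)}{1-3\alpha},
\]
is indeed correct and the derivative of its logarithm factors cleanly into $\alpha x\bigl[(1-3\alpha)x - 3(1-\alpha)\bigr]$ over a positive denominator, which is negative on the stated range, closing the argument; the $2\eta$ case with $\alpha > 1/2$ factors even more simply to $\alpha(1-2\alpha)y^2$. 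However, you leave this final step as a plan rather than carrying it out, and your intermediate claim that ``collecting the remaining quadratic and cubic terms reduces the inequality to the factored form $(1-\alpha)(2-3\alpha)kp_n - \alpha(1-3\alpha)p_c \ge 0$'' is not right as a derivation route --- as you yourself note, Taylor truncation does not work here, so that sentence should be deleted and replaced by the explicit derivative computation. You also do not address the parenthetical claim that $k = \Omega(e^{\mathcal{L}_{ntp}(P^n,h)})$, which the paper establishes with a short side argument.
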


The proof can be found in Appendix~\ref{proof_pro1}. 
Proposition~\ref{pro1} primarily investigates the performance gap between models trained on $P^m$ and those on $P^c$. It is proved that when $\alpha$ is small enough, the presence of noise has no impact on the optimal model on $P^m$. Even as $\alpha$ approaches $\frac{1}{3}$ or even $\frac{1}{2}$, as long as $k$ is large enough (the analysis regarding $k$ and other parameters is detailed in Appendix~\ref{app:explain}), the loss induced by noise, $\epsilon$, does not exceed $\mathcal{O}(\alpha p_c - (1-\alpha)kp_n)$. Given that $k$ is much greater than 1, this implies $\epsilon$ is much smaller than $\alpha p_c$. This explains the observed experimental results.

With these theoretical results in hand, we then conduct multiple experiments to substantiate their validity. First, we plot the trend of NTP loss on random and Gaussian noise within the \textit{training set} throughout the learning process, as shown in Figure~\ref{fig:3}(b). It is evident that the loss decreases at a very slow rate, indicating that the model struggles to efficiently learn the distribution of noise. 
Furthermore, the loss on Gaussian noise is lower than that on the random noise. According to Proposition~\ref{pro1}, since the Gaussian distribution corresponds to a high $p_n$, we can \textbf{predict} that a model trained on Gaussian noise will exhibit a lower loss on $P^c$. Figure~\ref{fig:3}(c) confirms our prediction, thus validating the proportions.

\subsection{Generalized Impact beyond Random Noise}
In addition to providing explanations regarding the impact of random noise on pre-training language models, we aim to extend our proposed theory to other areas, therefore demonstrating the practical value of our research findings.

One immediate direction is the training of multilingual models \citep{mlbert1, mlbert2, yb}. Apparently, tokens corresponding to different languages are usually distinct, and the sentences in one language usually do not appear in another language. Therefore, the distributions of different languages naturally satisfy Assumption~\ref{assumption1}. For example, in an English-Chinese bilingual model, let $P^c$ represent English and $P^n$ represent Chinese. Supposing the pretraining corpus consists of an equal distribution of English and Chinese, and given that the two distributions are natural languages, we can assume that $p_c \approx p_n$, leading to $\epsilon \approx 0$. This provides a theoretical foundation for the success of multilingual models. See Appendix~\ref{app:multilin} for more details.


Beyond the language modality, we demonstrate that our theoretical framework naturally extends to emerging native multimodal autoregressive large models (MLLMs). For example, VARGPT \citep{vargpt,vargpt1} uses a visual tokenizer for image understanding and generation, while UniAudio \citep{uniaudio} achieves state-of-the-art audio generation by modeling the joint distribution of audio and text tokens. Taking audio foundation models as a representative case, they typically employ a neural codec \citep{almtoken} to discretize continuous audio signals into token sequences. The multimodal input is then formatted as a concatenated sequence with text tokens before audio tokens, enabling standard next-token prediction during pretraining to learn conditional audio generation from textual prompts.

This paradigm inherently requires simultaneous modeling of text and audio token distributions. Apparently, the sets of audio tokens and text tokens are disjoint, thereby satisfying Assumption~\ref{assumption1}. Therefore, Proposition~\ref{pro1} provides a theoretical justification for the empirical observation that joint distribution modeling does not compromise model performance. For example,UniAudio has reported that although they mask text tokens during NTP pretraining to focus solely on audio prediction, empirical results show no significant improvement in loss metrics compared to the full joint prediction approach. This alignment between theoretical predictions and empirical observations underscores the practical relevance of our framework for analyzing modern multimodal architectures.
\vspace{-4mm}

\section{Reducing the Noise with Local Gradient Matching}
\label{sec:4}

\begin{table*}[t]
  \centering
  \caption{Accuracy on 4 text classification benchmark. 0\% represents a model trained on $P^c$, 1\% and so on denote the proportion of random noise. $^*$ cited from \citep{mathlm}.}
  \resizebox{\textwidth}{!}{
    \begin{tabular}{c|cc|cc|cc|cc|cc}
      \toprule
                                    & \multicolumn{2}{c|}{SST-2}                            & \multicolumn{2}{c|}{SST-fine}                         & \multicolumn{2}{c|}{20newsgroup}                             & \multicolumn{2}{c|}{CR}                               & \multicolumn{2}{c}{Avg}         \\
                                    & Linear                    & MLP                       & Linear                    & MLP                       & Linear                    & MLP                       & Linear                    & MLP                       & Linear         & MLP            \\ \hline
      OpenAI's GPT-2$^*$            & 87.4                      & /                         & 49.2                      & /                         & 63.7                      & /                         & 86.8                      & /                         & 71.75          & /              \\ \hline
      0\%                           & 86.71 $\pm$ 0.85          & 87.36 $\pm$ 0.33          & 49.19 $\pm$ 0.32          & 49.18 $\pm$ 0.02          & 63.12 $\pm$ 0.37          & 62.70 $\pm$ 0.86           & 85.65 $\pm$ 0.88          & 84.86 $\pm$ 0.36          & 71.16          & 71.02          \\
      0\% + $\mathcal{L}_{gm}$      & \textbf{87.42 $\pm$ 0.73} & \textbf{87.86 $\pm$ 0.04} & \textbf{49.72 $\pm$ 0.27} & \textbf{49.81 $\pm$ 0.97} & \textbf{63.69 $\pm$ 0.59} & \textbf{62.95 $\pm$ 0.13} & \textbf{86.58 $\pm$ 0.22} & \textbf{86.45 $\pm$ 0.73} & \textbf{71.85} & \textbf{71.76} \\ \hline
      1\%                           & 87.25 $\pm$ 0.79          & \textbf{87.53 $\pm$ 0.27} & 49.32 $\pm$ 0.72          & 49.45 $\pm$ 0.56          & 63.71 $\pm$ 0.02          & 64.65 $\pm$ 0.06          & 84.86 $\pm$ 0.98          & 84.59 $\pm$ 0.59          & 71.28          & 71.55          \\
      1\% + $\mathcal{L}_{gm}$      & \textbf{87.64 $\pm$ 0.91} & 87.25 $\pm$ 0.44          & \textbf{49.59 $\pm$ 0.73} & \textbf{50.01 $\pm$ 0.05} & \textbf{63.92 $\pm$ 0.65} & \textbf{64.72 $\pm$ 0.76} & \textbf{85.12 $\pm$ 0.07} & \textbf{85.25 $\pm$ 0.29} & \textbf{71.56} & \textbf{71.80} \\ \hline
      5\%                           & 86.92 $\pm$ 0.98          & 87.23 $\pm$ 0.41           & 49.04 $\pm$ 0.11          & \textbf{50.09 $\pm$ 0.53} & 63.27 $\pm$ 0.79          & 62.09 $\pm$ 0.28          & 85.30 $\pm$ 0.63          & \textbf{84.32 $\pm$ 0.78} & 71.13          & \textbf{70.93} \\
      5\% + $\mathcal{L}_{gm}$      & \textbf{87.19 $\pm$ 1.02} & \textbf{87.61 $\pm$ 0.51} & \textbf{49.82 $\pm$ 0.17} & 48.95 $\pm$ 0.89          & \textbf{63.78 $\pm$ 0.93} & \textbf{62.37 $\pm$ 0.56} & \textbf{85.57 $\pm$ 0.43} & 84.19 $\pm$ 0.69          & \textbf{71.59} & 70.78         \\ \hline
      20\%                          & 86.60 $\pm$ 1.28           & 86.60 $\pm$ 0.81           & 49.45 $\pm$ 0.78          & 49.63 $\pm$ 0.01          & 63.47 $\pm$ 0.64          & 64.16 $\pm$ 0.92          & \textbf{85.32 $\pm$ 0.60} & \textbf{85.45 $\pm$ 0.86} & 71.26          & 71.26          \\
      20\% + $\mathcal{L}_{gm}$     & \textbf{87.2 $\pm$ 0.99}  & \textbf{86.87 $\pm$ 0.78} & \textbf{49.68 $\pm$ 0.55} & \textbf{50.40 $\pm$ 0.46}  & \textbf{63.58 $\pm$ 0.08} & \textbf{64.21 $\pm$ 0.78} & 85.25 $\pm$ 0.90          & 85.52 $\pm$ 0.24          & \textbf{71.42} & \textbf{71.75} \\ \hline
      Gaussian                      & 85.22 $\pm$ 0.24          & 86.82 $\pm$ 0.72          & 46.15 $\pm$ 0.51          & 49.59 $\pm$ 0.76          & 63.72 $\pm$ 0.35          & \textbf{64.40 $\pm$ 0.76}  & 84.06 $\pm$ 0.74          & \textbf{83.53 $\pm$ 0.70} & 69.78          & 71.08          \\
      Gaussian + $\mathcal{L}_{gm}$ & \textbf{85.94 $\pm$ 0.55} & \textbf{87.25 $\pm$ 0.36} & \textbf{48.23 $\pm$ 0.69} & \textbf{50.29 $\pm$ 0.70} & \textbf{64.06 $\pm$ 0.73} & 64.29 $\pm$ 0.94          & \textbf{84.46 $\pm$ 0.33} & 83.29 $\pm$ 0.47          & \textbf{70.67}          & \textbf{71.45}          \\
    \bottomrule
    \end{tabular}}
  \vspace{-2mm}
  \label{tab:1}
  \vspace{-2mm}
\end{table*}

\begin{table*}[t]
  \centering
  \caption{Accuracy of LLMs on 4 natural language understanding benchmark.}
  \resizebox{\textwidth}{!}{
    \begin{tabular}{c|cc|cc|cc|cc|cc}
      \toprule
                                                 & \multicolumn{2}{c|}{BBC}                              & \multicolumn{2}{c|}{Balanced COPA}                    & \multicolumn{2}{c|}{MRPC}                             & \multicolumn{2}{c|}{WiC}                              & \multicolumn{2}{c}{Avg}         \\
                                                 & Linear                    & MLP                       & Linear                    & MLP                       & Linear                    & MLP                       & Linear                    & MLP                       & Linear         & MLP            \\ \hline
      Llama-3-8B                                 & 96.90 $\pm$ 0.40          & 97.50 $\pm$ 0.20          & 69.00 $\pm$ 0.20          & \textbf{65.60 $\pm$ 0.50} & 72.00 $\pm$ 0.81          & 67.53 $\pm$ 0.93          & 64.14 $\pm$ 0.56          & 59.07 $\pm$ 0.34          & 75.51          & 72.42          \\
      Llama-3-8B + $\mathcal{L}_{gm}$            & \textbf{98.00 $\pm$ 0.50} & \textbf{98.20 $\pm$ 0.40} & \textbf{70.80 $\pm$ 1.70} & 64.80 $\pm$ 0.20          & \textbf{74.89 $\pm$ 0.40} & \textbf{74.14 $\pm$ 1.49} & \textbf{64.71 $\pm$ 0.94} & \textbf{64.21 $\pm$ 0.83} & \textbf{77.10} & \textbf{75.33} \\ \hline
      Llama-3-8B-Instruct                        & 96.80 $\pm$ 0.70          & 96.90 $\pm$ 0.30          & 87.80 $\pm$ 0.70          & 88.80 $\pm$ 0.60          & 72.57 $\pm$ 0.26          & 71.42 $\pm$ 0.13          & 65.92 $\pm$ 0.53          & 61.85 $\pm$ 0.59          & 80.77          & 79.74          \\
      Llama-3-8B-Instruct + $\mathcal{L}_{gm}$   & \textbf{97.70 $\pm$ 0.20} & \textbf{97.80 $\pm$ 0.40} & \textbf{88.40 $\pm$ 0.90} & \textbf{89.60 $\pm$ 0.50} & \textbf{77.79 $\pm$ 0.58} & \textbf{76.81 $\pm$ 0.20} & \textbf{68.64 $\pm$ 0.26} & \textbf{67.71 $\pm$ 0.51} & \textbf{83.13} & \textbf{82.98} \\ \hline
      Llama-3.2-3B-Instruct                      & 97.30 $\pm$ 0.60          & 97.20 $\pm$ 0.80          & 80.40 $\pm$ 0.90          & \textbf{79.60 $\pm$ 0.20} & 77.79 $\pm$ 0.52          & 72.57 $\pm$ 0.31          & 64.07 $\pm$ 0.82          & 57.50 $\pm$ 0.35          & 79.89          & 76.71          \\
      Llama-3.2-3B-Instruct + $\mathcal{L}_{gm}$ & \textbf{97.60 $\pm$ 0.10} & \textbf{97.80 $\pm$ 0.30} & \textbf{81.60 $\pm$ 1.00} & 79.40 $\pm$ 0.10          & \textbf{78.43 $\pm$ 0.78} & \textbf{76.57 $\pm$ 1.12} & \textbf{64.35 $\pm$ 0.62} & \textbf{62.64 $\pm$ 0.07} & \textbf{80.49} & \textbf{79.10} \\ \hline
      Qwen2.5-1.5B-Instruct                      & 97.00 $\pm$ 0.30          & 96.60 $\pm$ 0.70          & 80.80 $\pm$ 0.70          & 82.20 $\pm$ 0.50           & 74.49 $\pm$ 0.71          & 73.39 $\pm$ 0.90          & 65.92 $\pm$ 0.45          & 61.64 $\pm$ 0.20          & 79.55          & 78.45          \\
      Qwen2.5-1.5B-Instruct + $\mathcal{L}_{gm}$ & \textbf{97.40 $\pm$ 0.10} & \textbf{97.20 $\pm$ 0.80} & \textbf{84.00 $\pm$ 0.90} & \textbf{83.40 $\pm$ 0.30}  & \textbf{79.65 $\pm$ 0.62} & \textbf{78.37 $\pm$ 0.84} & \textbf{67.71 $\pm$ 0.49} & \textbf{66.92 $\pm$ 0.55} & \textbf{82.19} & \textbf{81.47} \\ \hline
      Qwen2.5-7B-Instruct                        & 96.30 $\pm$ 0.30          & 96.70 $\pm$ 0.50          & 94.60 $\pm$ 0.90          & 95.80 $\pm$ 0.40          & 83.71 $\pm$ 0.92          & 76.81 $\pm$ 0.51          & 68.92 $\pm$ 0.41          & 64.92 $\pm$ 0.18          & 85.88          & 83.55          \\
      Qwen2.5-7B-Instruct + $\mathcal{L}_{gm}$   & \textbf{97.10 $\pm$ 0.80} & \textbf{97.40 $\pm$ 0.20} & \textbf{95.60 $\pm$ 0.50} & \textbf{96.00 $\pm$ 0.80} & \textbf{84.98 $\pm$ 0.12} & \textbf{83.13 $\pm$ 0.49} & \textbf{72.28 $\pm$ 0.98} & \textbf{70.14 $\pm$ 0.94} & \textbf{87.49} & \textbf{86.66} \\ \bottomrule
      \end{tabular}}
    \vspace{-2mm}
    \label{tab:2}
    \vspace{-4mm}
\end{table*}

In Section~\ref{sec:3}, we know that the influence of noise on NTP loss is rather small. However, Figure~\ref{fig:3}(c) and Table~\ref{tab:1} show that the Gaussian noise-trained model with lower NTP loss suffers a 1.5\% decrease in accuracy in downstream tasks. Therefore, in order to tame the potential influence in downstream tasks, we propose a novel black-box fine-tuning method termed Local Gradient Matching loss. Extensive experiments across 8 natural language understanding and 14 image classification benchmark datasets further demonstrate that the proposed method consistently enhances performance across different backbones and modalities. We also provide a theoretical analysis.

It is important to emphasize that the rationale for investigating pre-training in Section~\ref{sec:3} while proposing a fine-tuning-based approach for downstream tasks in this section stems from current conventions: the prevailing practice is adapting off-the-shelf foundation models to downstream tasks through data-specific fine-tuning, rather than undertaking pre-training from scratch. Substantial random noise primarily emerges in industrial-scale datasets, but pre-training on such massive datasets exceeds our computational resources. Furthermore, our choice to fine-tune the downstream task head strictly follows \citep{nml}, aiming to enhance model performance under the assumption of black-box model constraints. Additionally, it should be clarified that our core experimental validation resides in Table~\ref{tab:1}, while the additional experiments conducted on both LLMs and visual models serve as supplementary evidence to demonstrate the generalization capability of our proposed method - this constitutes an extended validation rather than the primary focus of this paper.

\subsection{Method}
In the preceding analysis, we demonstrate that the population-level loss function is only marginally affected by random noise. However, during the SGD training process, its presence introduces certain noise into the gradients. 
Prior studies \citep{gradnoise1, gradnoise2} have shown that artificially added gradient noise can hurt the model's generalization. Therefore, we propose explicitly enhancing the denoising capabilities of the downstream task head by aligning local gradients.

\begin{table*}[t]
  \centering
  \caption{Average accuracy of 5 vision backbone models on \textbf{14} commonly-used vision datasets.}
  \resizebox{\textwidth}{!}{
  \begin{tabular}{c|cc|cc|cc|cc|cc}
  \toprule
  Model                  & \multicolumn{2}{c|}{EfficientNet-B3} & \multicolumn{2}{c|}{ResNetv2-152x2} & \multicolumn{2}{c|}{Swin-L}       & \multicolumn{2}{c|}{ConvNext-L} & \multicolumn{2}{c}{ViT-L}       \\
  Pre-training Data      & \multicolumn{2}{c|}{JFT-300M}        & \multicolumn{2}{c|}{ImageNet-21K}   & \multicolumn{2}{c|}{ImageNet-21K} & \multicolumn{2}{c|}{Laion-2B}   & \multicolumn{2}{c}{Laion-2B}    \\
  Fine-tuning Method     & Linear            & MLP              & Linear           & MLP              & Linear          & MLP             & Linear         & MLP            & Linear         & MLP            \\ \hline
  w/o $\mathcal{L}_{gm}$ & 73.27             & \textbf{76.62}            & 78.14            & 79.60            & 81.43           & 84.19           & 82.89          & 85.71          & 86.86          & 89.12          \\ \hline
  w/ $\mathcal{L}_{gm}$  & \textbf{74.02}    & 75.90   & \textbf{79.49}   & \textbf{79.94}   & \textbf{82.70}  & \textbf{84.42}  & \textbf{84.07} & \textbf{86.27} & \textbf{88.03} & \textbf{89.31} \\ \bottomrule
  \end{tabular}}
  \vspace{-2mm}
  \label{tab3}
  \vspace{-3mm}
\end{table*}

Specifically, let $C$ denote the number of classes in the downstream task, and let $g_{\theta}: \mathbb{R}^d \rightarrow \mathbb{R}^C$ represent the linear or MLP classification head parameterized by $\theta$. Let $t^*$ be the feature extracted by $h^*$, $t$ be the feature extracted by $h$, and $y$ be the corresponding label. 
\begin{wrapfigure}{r}{0.5\columnwidth} 
  \centering
  \includegraphics[width=\linewidth]{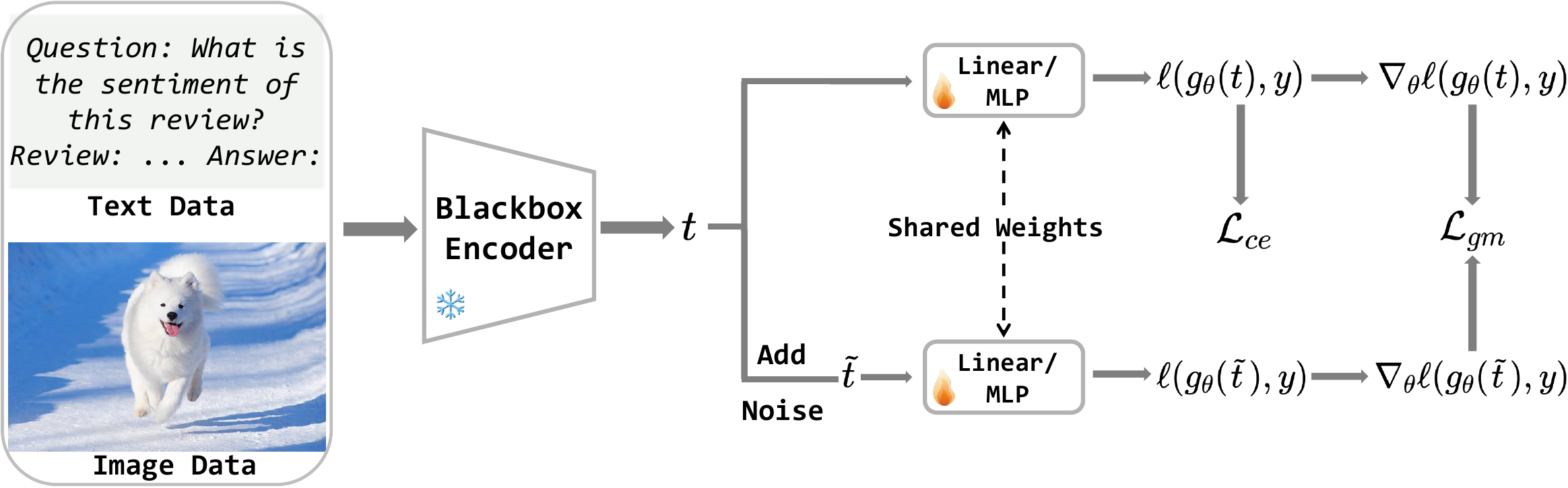}
  \vspace*{-6mm}
  \caption{Overview of the proposed Local Gradient Mathcing scheme.}
  \label{fig:5}
  \vspace*{-2mm}
\end{wrapfigure}
Let $\ell(\hat{y}, y )$ be the loss function(typically cross-entropy), and $\mathcal{L}_{ce}(\mathcal{D}, g_{\theta}) = \mathbb{E}_{(t, y) \sim \mathcal{D}} \ell(g_{\theta}(t), y)$ be the population-level loss where $\mathcal{D}$ represents the joint distribution of downstream features and labels. Due to the additional randomness introduced by $h$ as a result of noise, $t$ can be viewed as $t^*$ perturbed by minor disturbances. If both $t^*$ and $t$ were known, their distribution could be aligned to achieve denoising. However, in practical applications, it is challenging to obtain $t^*$. To construct contrastive sample pairs without $t^*$, we add Gaussian noise to $t$ to obtain $\hat{t}$:
\begin{table}[t]
  \centering
  \begin{minipage}[t]{0.44\textwidth} 
    \centering
    \vspace{0pt} 
    \caption{Evaluation of our method combined with SOTA fine-tuning techniques utilizing BERT-Large ($^*$ cited from \citep{noisebert2}).}
    \resizebox{\linewidth}{!}{
      \begin{tabular}{c|c|c|c|c}
      \toprule
      & RTE            & MRPC           & CoLA           & STS-B          \\ \hline
      L$^2$-SP$^*$                  & 70.58          & \textbf{87.74} & 60.54          & 89.38          \\
      L$^2$-SP + $\mathcal{L}_{gm}$ & \textbf{71.25} & 87.62          & \textbf{61.79} & \textbf{89.62} \\ \hline
      SMART$^*$                     & 72.23          & 87.86          & 63.16          & 90.11          \\
      SMART + $\mathcal{L}_{gm}$    & \textbf{72.94} & \textbf{88.61} & \textbf{63.28} & \textbf{90.42} \\ \hline
      LNSR$^*$                      & 73.31          & 88.50          & 63.35          & 90.23          \\
      LNSR + $\mathcal{L}_{gm}$    & \textbf{73.95} & \textbf{89.42} & \textbf{63.82} & \textbf{90.47} \\ \bottomrule
      \end{tabular}}
    \label{tab4}
  \end{minipage}%
  \hfill 
  \begin{minipage}[t]{0.545\textwidth}
    \centering
    \vspace{0pt} 
    \caption{Ablation Study. To investigate the effects of reducing $\mathcal{L}_{gm}$, we separately reduce the norm or increase the cosine similarity.}
    \resizebox{\linewidth}{!}{
    \begin{tabular}{c|cc}
      \toprule
      \multirow{2}{*}{Method}                                                                                 & \multicolumn{2}{c}{SST-2}       \\
                                                                                                              & Linear         & MLP            \\ \hline
      0\%                                                                                                     & 86.71          & 87.36          \\ \hline
      0\% + $||\nabla_{\theta}  \ell(g_{\theta}(t), y )||_2$                                                  & 87.04          & 87.24          \\ \hline
      0\% + $\cos (\nabla_{\theta}  \ell(g_{\theta}(t), y ), \nabla_{\theta}  \ell(g_{\theta}(\hat{t}), y ))$ & 86.89          & 87.52          \\ \hline
      0\% + $\mathcal{L}_{gm}$                                                                                & \textbf{87.42} & \textbf{87.86} \\ \bottomrule
      \end{tabular}
    }
    \label{tab5}
  \end{minipage}
\end{table}
\begin{equation}
  \label{eq:perturb}
  \hat{t} = t + \gamma \cdot \delta
\end{equation}
where $\delta \sim \mathcal{N}(\mathbf{0}, \mathbf{I}_n)$ denotes the standard normal distribution noise. Our objective is to minimize the discrepancy between the distributions of $g_{\theta}(t)$ and $g_{\theta}(\hat{t})$. Instead of the conventional regularization term $||g_{\theta}(t) - g_{\theta}(\hat{t})||_2$, we propose to align the gradient difference:
\vspace{-1mm}
\begin{equation}
  \label{eq:lgm}
  \begin{aligned}
    \mathcal{L}_{gm}(\theta) &= || \mathbb{E}_{(t, y) \sim \mathcal{D}} \nabla_{\theta}  \ell(g_{\theta}(t), y ) - \mathbb{E}_{(\hat{t}, y) \sim \hat{\mathcal{D}}} \nabla_{\theta}  \ell(g_{\theta}(\hat{t}), y ) ||_2
  \end{aligned}
\end{equation}
Intuitively, if the gradients with respect to $t$ and $\hat{t}$ can be perfectly aligned, then the classification head is insensitive to small perturbations in the input, suggesting that it possesses some denoising capability. Consequently, it should be able to mitigate the noise in $t$, bringing it closer to $t^*$.

\subsection{Theoretical Analysis}
\label{sec:4.2}
To theoretically support the proposed method, we investigate the properties of Equation~(\ref{eq:lgm}) and find that it can be upper bounded by the smoothness, input flatness, and loss function value at $\theta$. Concretely, since we set $\gamma$ in Equation~(\ref{eq:perturb}) to be small, the perturbation can be considered to distribute within an open ball $B(0, \rho)$. Consequently, we have the following result:
\begin{proposition}
  \label{pro2}
  Suppose $\ell(g_{\theta}(t), y)$ is $\beta$-smooth with $\rho$-input flatness $R_{\rho}(\theta)$ ({\rm{c.f.}} Appendix~\ref{app:4.2}), for any $\theta \in \Theta$:
  \begin{equation}
  \mathcal{L}_{gm}(\theta) \le 2 \beta + 2 \mathcal{L}_{ce}(\mathcal{D}, g_{\theta}) + R_{\rho}(\theta).
  \end{equation}
\end{proposition}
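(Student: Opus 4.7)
The plan is to bound $\mathcal{L}_{gm}(\theta)$ by decomposing the gradient difference via the triangle inequality and then controlling each piece using the $\beta$-smoothness of $\ell$ and the $\rho$-input flatness $R_\rho(\theta)$. First I would apply the triangle inequality to split $\mathcal{L}_{gm}(\theta)$ into $\|\mathbb{E}_{(t,y)\sim\mathcal{D}}\nabla_\theta \ell(g_\theta(t),y)\|_2 + \|\mathbb{E}_{(\hat{t},y)\sim\hat{\mathcal{D}}}\nabla_\theta \ell(g_\theta(\hat{t}),y)\|_2$, and then push the norm inside each expectation using Jensen's inequality. This reduces the task to uniformly bounding the expected gradient norm of $\ell$ evaluated at clean features $t$ and perturbed features $\hat{t}$.

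Next I would invoke the standard self-bounding property of nonnegative $\beta$-smooth functions: since $\ell(g_\theta(\cdot),y)\ge 0$ and is $\beta$-smooth in $\theta$, we have the pointwise estimate $\|\nabla_\theta \ell(g_\theta(t),y)\|_2 \le \sqrt{2\beta\,\ell(g_\theta(t),y)}$. A one-line AM--GM step (using $\sqrt{ab}\le (a+b)/2$, or more loosely $\sqrt{2\beta f}\le \beta + f$) then yields $\|\nabla_\theta \ell(g_\theta(t),y)\|_2 \le \beta + \ell(g_\theta(t),y)$. Taking expectations on both sides produces $\mathbb{E}\|\nabla_\theta \ell(g_\theta(t),y)\|_2 \le \beta + \mathcal{L}_{ce}(\mathcal{D},g_\theta)$, with the analogous bound $\beta + \mathcal{L}_{ce}(\hat{\mathcal{D}},g_\theta)$ for the perturbed branch.

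Finally, because $\hat{t}=t+\gamma\delta$ lies (effectively) inside the ball $B(t,\rho)$ by the choice of $\gamma$, the gap between the perturbed and clean population losses is exactly what the $\rho$-input flatness is designed to measure: $\mathcal{L}_{ce}(\hat{\mathcal{D}},g_\theta)\le \mathcal{L}_{ce}(\mathcal{D},g_\theta)+R_\rho(\theta)$. Adding the two gradient-norm bounds and substituting this inequality collapses the right-hand side to $2\beta + 2\mathcal{L}_{ce}(\mathcal{D},g_\theta) + R_\rho(\theta)$, the claimed inequality.

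The main obstacle I anticipate is aligning the precise appendix definition of $R_\rho(\theta)$ with the step that converts the gap $\mathcal{L}_{ce}(\hat{\mathcal{D}},g_\theta) - \mathcal{L}_{ce}(\mathcal{D},g_\theta)$ into $R_\rho(\theta)$: one must verify that the Gaussian perturbation $\gamma\delta$, though technically unbounded, is treated as lying in $B(0,\rho)$ (either by choosing $\gamma$ small relative to $\rho$, truncating the Gaussian, or defining $R_\rho$ as a supremum over perturbations of norm at most $\rho$ that dominates the expectation under $\hat{\mathcal{D}}$). The self-bounding and triangle/Jensen steps are routine; the only other subtlety is confirming that $\beta$-smoothness is to be read in $\theta$ (not jointly in $(\theta,t)$), which is all the argument requires.
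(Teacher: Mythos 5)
Your proposal is correct and follows essentially the same route as the paper's proof: triangle inequality on the gradient difference, Jensen to push the norm inside each expectation, a self-bounding lemma for nonnegative $\beta$-smooth functions, an AM--GM step, and finally identifying the perturbed-minus-clean loss gap with $R_\rho(\theta)$ under the assumption $\hat{t}-t\in B(0,\rho)$. The only cosmetic difference is the constant in the self-bounding step (you invoke $\|\nabla_\theta\ell\|_2\le\sqrt{2\beta\ell}$; the paper's Lemma~\ref{lemma2} gives $\|\nabla_\theta\ell\|_2\le 2\sqrt{\beta\ell}$), but both collapse to $\beta+\ell$ after AM--GM, and you correctly flag the same caveat the paper glosses over, namely that the Gaussian perturbation $\gamma\delta$ is treated as lying in $B(0,\rho)$ by choosing $\gamma$ small.
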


Proposition~\ref{pro2} demonstrates that $\mathcal{L}_{gm}$ is closely associated with the smoothness of the loss function in both the parameter space and the input space. As a flat minima is widely acknowledged to benefit the generalization of neural networks \citep{flat1, flat2}, it explains the effectiveness of $\mathcal{L}_{gm}$. The final loss function is:
  \vspace{-2mm}
\begin{equation}
  \mathcal{L} = \mathcal{L}_{ce} + \lambda \mathcal{L}_{gm}
  \label{eq:final}
    \vspace{-2mm}
\end{equation}

\subsection{Experiments}
We first conduct extensive experiments using trained GPT-2 models. Then, to further validate the novelty and effectiveness of the LGM loss, we conduct additional experiments using Llama-3 \citep{llama3} and vision models. These experiments are intended to showcase the generalizability of our approach beyond the specific context of GPT-2, demonstrating its applicability across different types of models and tasks. While these experiments enrich our study, \textbf{they are not the core focus but rather supplementary evidence} supporting the broader applicability of our proposed solution. 
Details can be found in Appendix~\ref{app:exp4-detail}.

\begin{wraptable}{r}{4cm} 
  \centering
  \caption{Hyperparameter sensitivity experiments on DTD with ConvNext as the backbone.}
  \resizebox{0.9\linewidth}{!}{
  \begin{tabular}{cc|cc}
      \toprule
      \multirow{2}{*}{$\gamma$} & \multirow{2}{*}{$\lambda$} & \multicolumn{2}{c}{DTD} \\
                                &                            & Linear      & MLP       \\ \hline
      0.001                     & 0.05                      & 76.48       & 78.42     \\ \hline
      0.05                      & 0.1                       & 76.81       & 79.21     \\ \hline
      0.1                       & 0.15                        & 76.59       & 79.37     \\ \bottomrule
      \end{tabular}
    }
  \label{tab6}
\end{wraptable}

We validate the performance of $\mathcal{L}_{gm}$ on models pre-trained with noisy data using four commonly used classification datasets: SST-2, SST-fine, 20newsgroup, and CR. The training hyperparameters follow those of \citep{mathlm}, where $\gamma=0.01$ and $\lambda=0.15$ apply to all four experiments. In line with the approach described by \citep{nml}, we freeze the model parameters and only fine-tune a linear or MLP classifier head. As shown in Table~\ref{tab:1}, our model achieves competitive results without reaching the number of training iterations of GPT-2, and $\mathcal{L}_{gm}$ consistently boosts performance.

\begin{figure*}[t]
  \centering
  \begin{subfigure}[t]{0.25\linewidth}
    \includegraphics[height=4cm, width=4cm]{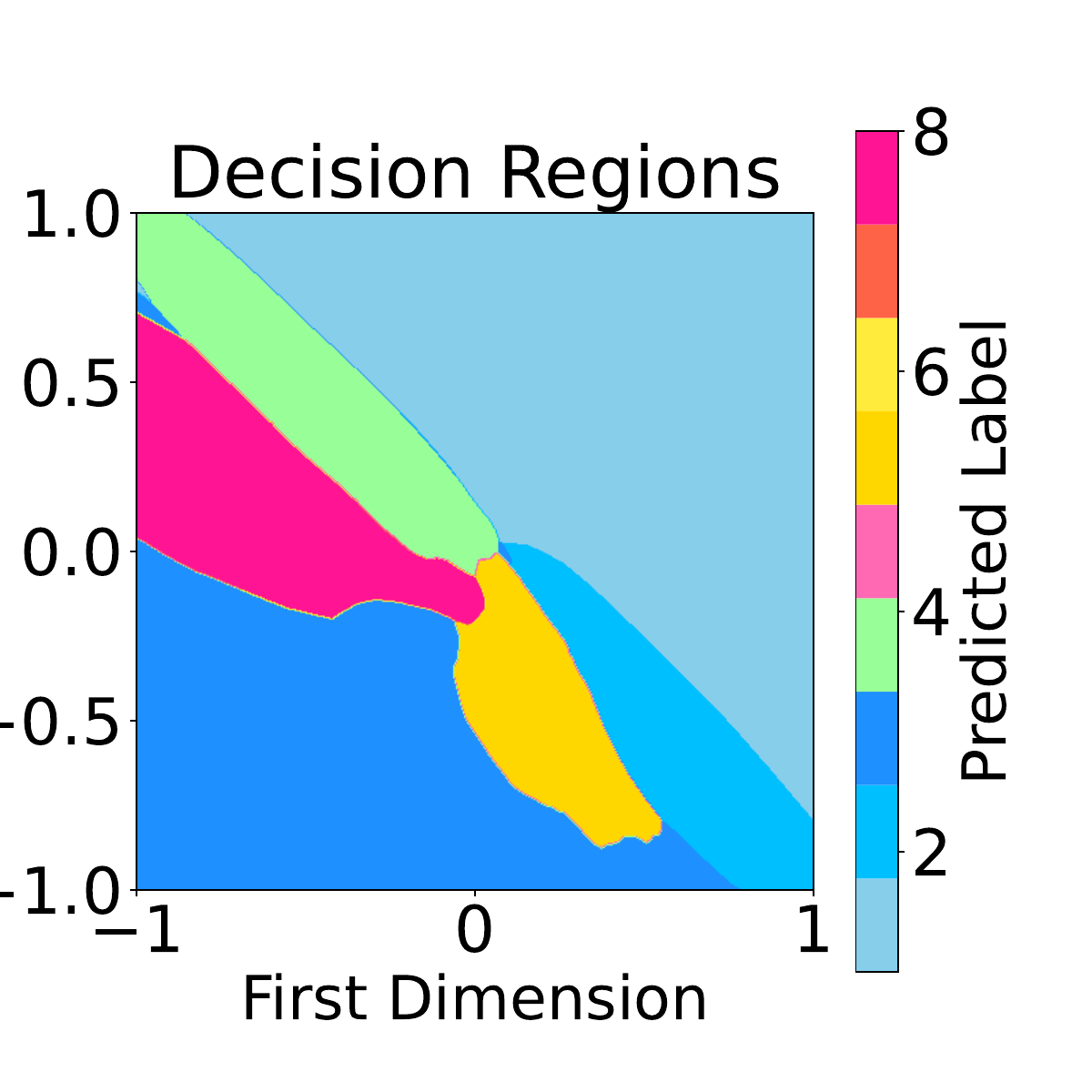}
    \caption*{(a) No regularization.}
  \end{subfigure} \hspace{5mm}
  \begin{subfigure}[t]{0.25\linewidth}
    \includegraphics[height=4cm, width=4cm]{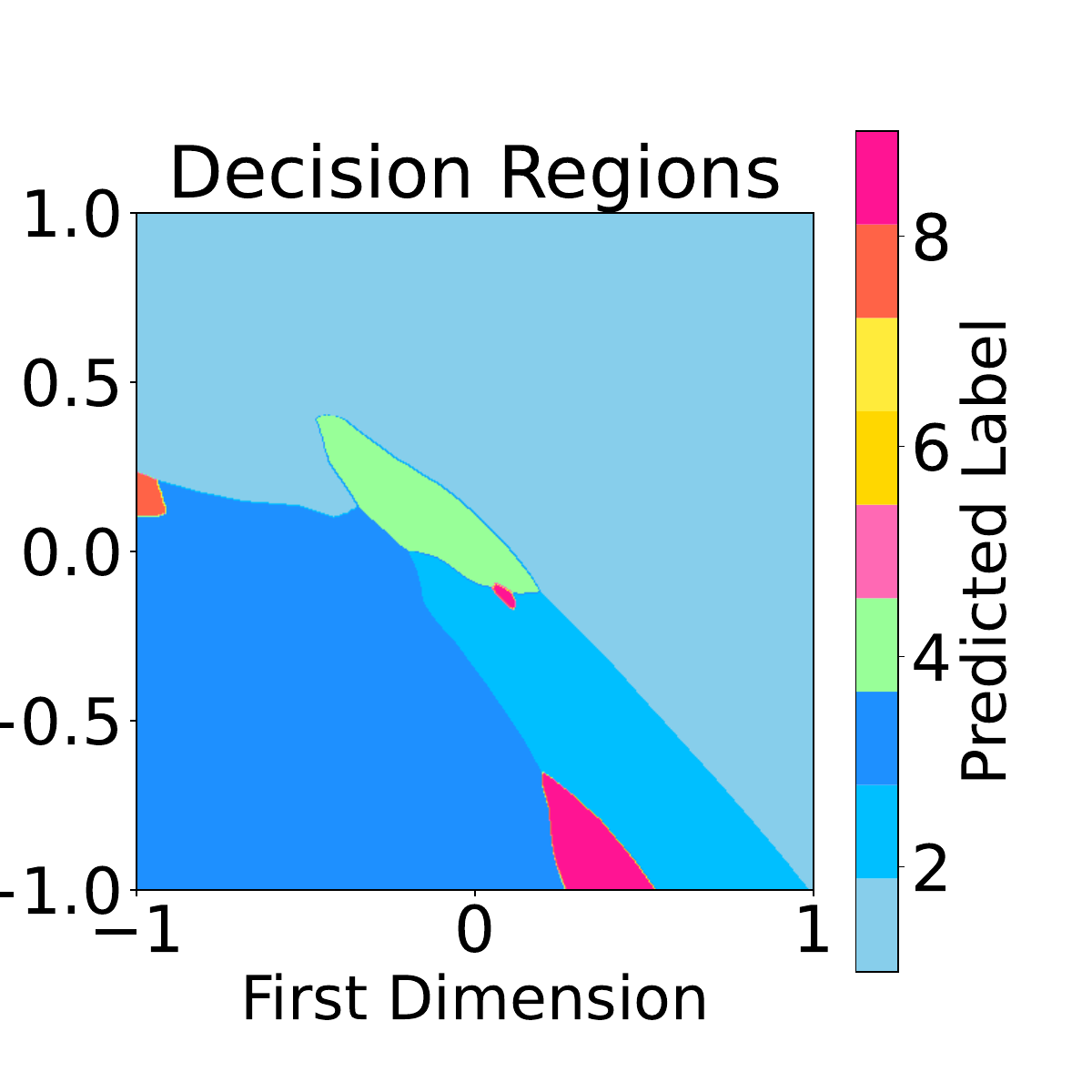}
    \caption*{(b) With $L^2$ regularization.}
  \end{subfigure} \hspace{5mm}
  \begin{subfigure}[t]{0.25\linewidth}
    \includegraphics[height=4cm, width=4cm]{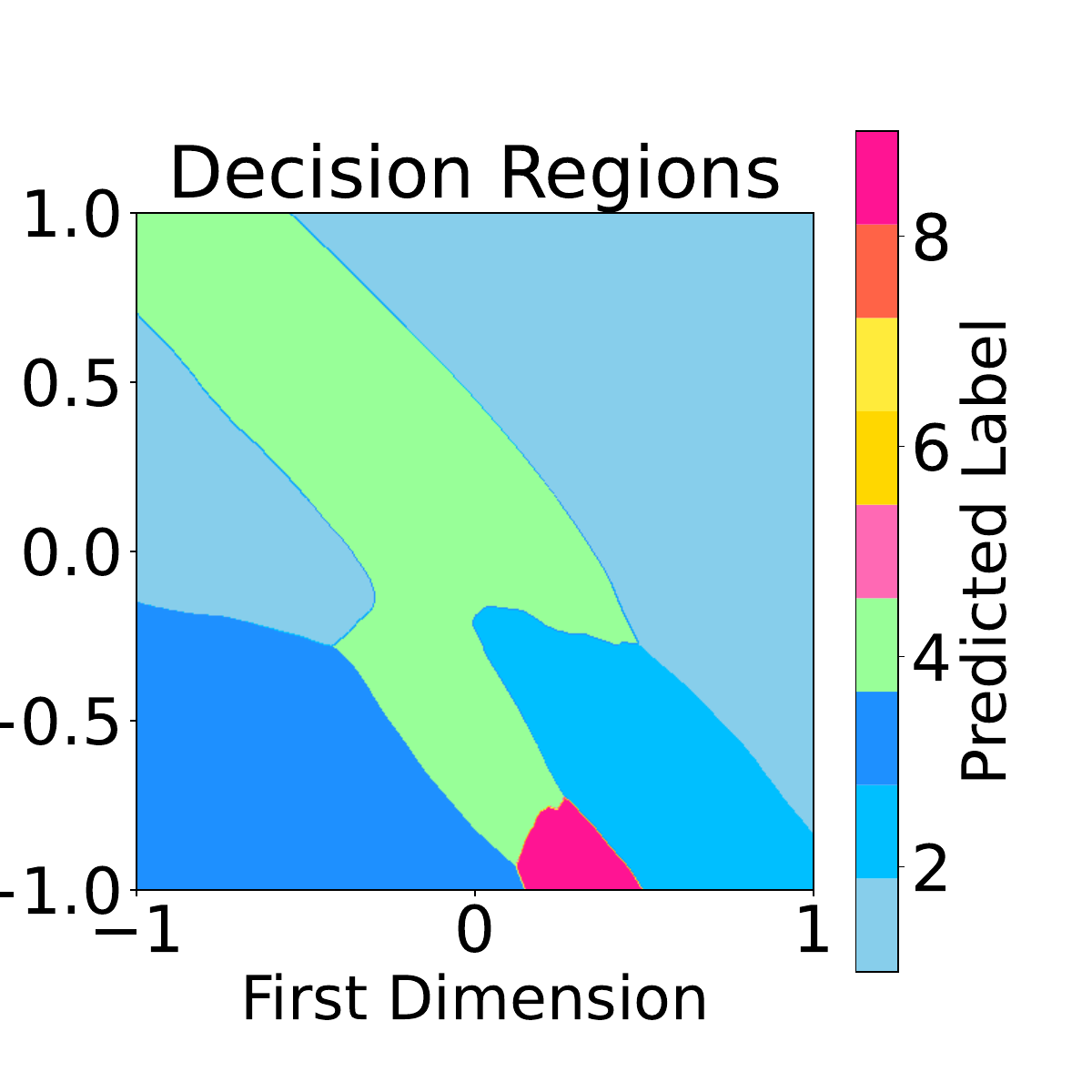}
    \caption*{(c) With $\mathcal{L}_{gm}$.}
  \end{subfigure} 
  \begin{subfigure}[t]{0.02\linewidth}
    \includegraphics[height=4cm, keepaspectratio]{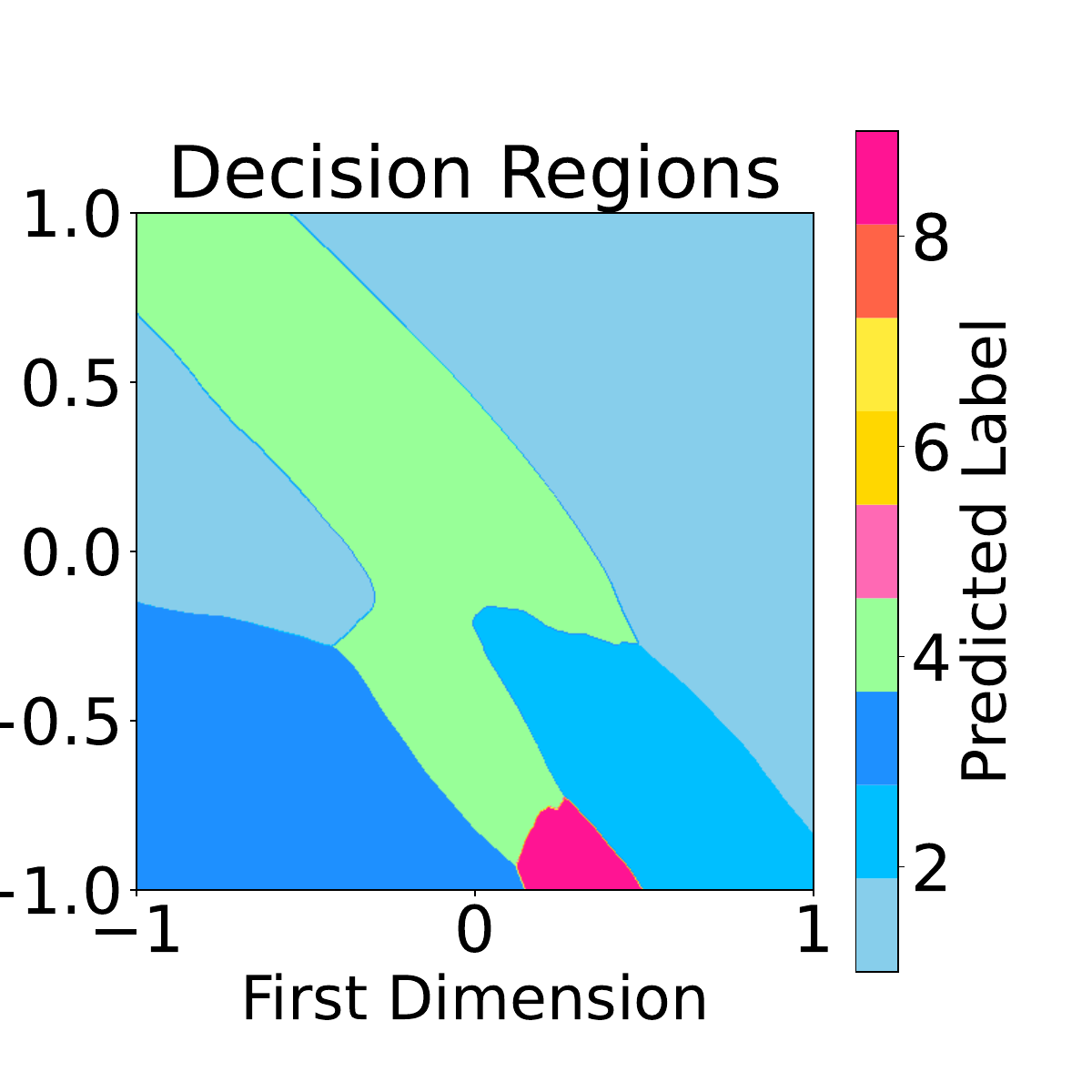} 
  \end{subfigure}

  \vspace*{-2.5mm}
  \caption{Visualization of input sensitivity for models trained with (a) no (b) $L^2$ (c) $\mathcal{L}_{gm}$ regularization. We randomly select a sample and introduce perturbations on a two-dimensional hyperplane, where different colors represent different labels, and green indicates the correct label.}
  \label{fig6}
  \vspace*{-4mm}
\end{figure*}

Results in Table~\ref{tab:2} indicate that our method provides a 3\% improvement across multiple NLU datasets with LLM backbone.
In addition, we select five commonly used backbone models in the visual domain and conduct experiments on fourteen datasets. The results are shown in Table~\ref{tab3}. It can be seen that our method is equally applicable to visual tasks, achieving a performance improvement of more than 1\% under the linear probe setting.

Furthermore, we visualize the sensitivity of different regularization terms to input perturbations, as illustrated in Figure~\ref{fig6}. 
Compared with other regularization methods, our loss function can increase the size of the region for correct decisions, thereby enhancing the model's robustness to input perturbations. 
We also carry out ablation studies and parameter sensitivity analyses, with results presented in Table~\ref{tab5} and Table~\ref{tab6}, which all demonstrate the effectiveness and robustness of LGM.

\section{Conclusion}
\label{sec:5}
In this paper, we investigate the random noise present in language model pre-training datasets, which is inevitable in real-world scenarios but receives little attention. We pre-train multiple GPT-2 models under varying noise levels and find that random noise has a minor impact on the pre-training loss. We then provide a theoretical explanation for this phenomenon and discover that our theory can elucidate the success of multilingual and multimodal models. Interestingly, we observe that slight noise can sometimes enhance a model's generalization ability. Then, building on the noisy model learning setup, we propose a novel local gradient matching loss. Extensive experiments across multiple datasets in both language and vision tasks, as well as with various backbone models, validate the effectiveness of our proposed method. We hope this work inspires more researchers to focus on data-centric AI.

\bibliographystyle{IEEEtran}
\bibliography{neurips_2025}

\newpage

\appendix

\section{Notations}
\label{sec:appa}

The commonly used notations and their descriptions are as follows.
\begin{table*}[h]
  \centering
  \begin{tabular}{|c|c|}
    \hline
    Notation                                                    & Description                                                                   \\ \hline
    $L$                                                         & context length                                                                \\ \hline
    $d$                                                         & embedding dimension                                                           \\ \hline
    $\mathcal{W}$                                               & vocabulary of words                                                           \\ \hline
    $V=|\mathcal{W}|$                                           & vocabulary size                                                               \\ \hline
    $\mathcal{X} = \cup_{i=1}^{L}\mathcal{W}^{i}$ & model input space                                                                     \\ \hline
    $\mathcal{H}$                                               & model space                                                                   \\ \hline
    $h:\mathcal{X} \rightarrow \mathbb{R}^V \in \mathcal{H}$    & language model                                                                \\ \hline
    $\Delta_A$                                                  & distribution defined on a discrete set $A$                                    \\ \hline
    $P^c \in \Delta_{\mathcal{X} \times \mathcal{W}}$           & distribution of clean data                                                    \\ \hline
    $P^n \in \Delta_{\mathcal{X} \times \mathcal{W}}$           & distribution of pure noise data                                               \\ \hline
    $P^m \in \Delta_{\mathcal{X} \times \mathcal{W}}$           & distribution of mixed noisy data                                              \\ \hline
    $\alpha$                                                    & proportion of noise in training data                                          \\ \hline
    $P_X$                                                       & marginal distribution of the joint distribution $P$                           \\ \hline
    $P_{\cdot \vert X}$                                         & conditional distribution of the joint distribution $P$                        \\ \hline
    ${\boldsymbol{p}}^{h}_{\cdot \vert x}(w)$                   & the $w$-th dimension of the probability distribution corresponding to $h(x)$  \\ \hline
    ${\text{supp}}(P^c)$                                        & support of distribution $P^c$                                                 \\ \hline
    $\mathcal{L}_{ntp}(P, h)$                                   & next-token prediction loss of model $h$ on the distribution $P$               \\ \hline
    $g_{\theta}: \mathbb{R}^d \rightarrow \mathbb{R}^C$         & downstream classification head              \\ \hline
    $\theta \in \Theta$                                         & parameters of $g$               \\ \hline
    $t \in \mathcal{T}$                                         & feature of downstream task data extracted by backbone model               \\ \hline
    $y \in \mathcal{Y}$                                         & label of downstream task data               \\ \hline
    $C=|\mathcal{Y}|$                                           & number of classes of the downstream task               \\ \hline
    $\ell(\hat{y}, y)$                                             & downstream task loss function, typically cross-entropy             \\ \hline
    $\mathcal{D}$                                               & joint distribution of downstream feature and label             \\ \hline
    $\mathcal{L}_{ce}(\mathcal{D}, g_{\theta})$                 & population-level loss with downstream data distribution $\mathcal{D}$ and head $g_{\theta}$               \\ \hline
    
  \end{tabular}
  \caption{\label{app-notation}
  Nomenclature.
  }
\end{table*}

\section{Proofs}
\label{app:proof}
\subsection{Explanation of Equation~(\ref{eq:huber})}
\label{app:huber}
Let $\mathcal{M}$ be a measurable space, and let $P_1$ and $P_2$ be probability measures defined on this space. We assume that $N_1$ samples are drawn from $P_1$ and $N_2$ samples from $P_2$. Define $\mu = \frac{N_1}{N_1 + N_2}$, so that $1 - \mu = \frac{N_2}{N_1 + N_2}$.

We aim to show that this collection of $ N_1 + N_2 $ samples can be regarded as drawn from a mixed distribution $$ P_3 = \mu P_1 + (1 - \mu) P_2 $$

First, define a new probability measure $ P_3 $ as $P_3(A) = \alpha P_1(A) + (1 - \alpha) P_2(A)$ for any measurable set $A \subseteq \mathcal{M}$. Here, $P_3$ is a convex combination of $P_1$ and $P_2$, and thus $P_3$ is also a valid probability measure \citep{gtm}.

For any measurable set $ A \subseteq \mathcal{M}$, we examine the probability that a single sample point falls in $A$ by law of total probability:
\begin{enumerate}[label=\textbullet]
  \item A sample from $P_1$ is selected with probability $\mu$, and within this case, the probability of landing in $ A $ is $ P_1(A) $.
  \item A sample from $P_2$ is selected with probability $1 - \mu$, and the probability of it falling in $A$ is $ P_2(A) $.
\end{enumerate}

Thus, the probability of any given sample point falling in $ A $ is
$$
\mu P_1(A) + (1 - \mu) P_2(A) = P_3(A)
$$


Since $N_1$ samples are drawn from $P_1$ and $N_2$ samples from $P_2$, these samples collectively follow the distribution $ P_3 $ as each individual sample's probability of being in any measurable set $ A $ is consistent with $ P_3(A) $. Therefore, drawing $N_1 + N_2$ samples in this manner is equivalent to drawing $N_1 + N_2$ samples from $ P_3 $.

\subsection{Proof of Proposition~\ref{pro1}}
\label{proof_pro1}

Before procedding to the proof, we first establish a useful lemma.
\begin{lemma}
\label{lemma1}
If Assumption \ref{assumption1} holds, then for any $h \in \mathcal{H}$, we have 
$$ \mathcal{L}_{ntp}(P^m, h) = \alpha \mathcal{L}_{ntp}(P^n, h)+(1-\alpha)\mathcal{L}_{ntp}(P^c, h) $$
\end{lemma}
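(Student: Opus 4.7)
The plan is to reduce the double expectation defining $\mathcal{L}_{ntp}$ to a single expectation over the joint distribution, at which point Equation~(\ref{eq:huber}) makes the conclusion immediate by linearity. For any joint $P \in \Delta_{\mathcal{X} \times \mathcal{W}}$ the chain rule $P(x,w) = P_X(x)\,P_{\cdot|x}(w)$ lets me rewrite Equation~(\ref{eq:ntp}) as $\mathcal{L}_{ntp}(P,h) = \sum_{(x,w)} P(x,w)\,[-\log \boldsymbol{p}^h_{\cdot|x}(w)]$; substituting the mixture form $P^m = \alpha P^n + (1-\alpha)P^c$ and distributing the sum then yields the stated identity.

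To justify the plug-in step at the level of marginals and conditionals (which is the form actually used in Equation~(\ref{eq:ntp})), I would invoke Assumption~\ref{assumption1} as follows. Partition the support of $P^m_X$ into $\text{supp}(P^c)$ and $\text{supp}(P^n)$; these are disjoint by assumption and together cover it. On $\text{supp}(P^c)$ one has $P^n_X(x)=0$, hence $P^m_X(x) = (1-\alpha)P^c_X(x)$ and $P^m_{\cdot|x} = P^c_{\cdot|x}$, and symmetrically on $\text{supp}(P^n)$ one gets $P^m_X(x)=\alpha P^n_X(x)$ with $P^m_{\cdot|x}=P^n_{\cdot|x}$. Splitting the outer expectation over $P^m_X$ according to this partition and recombining each piece into the corresponding NTP loss gives $(1-\alpha)\mathcal{L}_{ntp}(P^c,h) + \alpha\mathcal{L}_{ntp}(P^n,h)$, as desired.

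No step is technically hard; I do not anticipate any real obstacle beyond careful bookkeeping. The only subtlety worth flagging is the precise role of Assumption~\ref{assumption1}: without disjoint supports, the conditional of the mixture is an $x$-weighted average of $P^n_{\cdot|x}$ and $P^c_{\cdot|x}$ rather than one of them, so the pointwise identifications $P^m_{\cdot|x}=P^c_{\cdot|x}$ and $P^m_{\cdot|x}=P^n_{\cdot|x}$ fail. The loss decomposition itself would still follow from the joint-expectation argument, but the conditional-level proof that the notation of Equation~(\ref{eq:ntp}) invites would break, which is presumably why the lemma is stated with Assumption~\ref{assumption1} in force.
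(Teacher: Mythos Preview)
Your second route---partitioning by prefix support, identifying $P^m_{\cdot|x}$ with $P^c_{\cdot|x}$ or $P^n_{\cdot|x}$ on each piece, and recombining---is exactly the paper's proof: they establish $P^m_X = \alpha P^n_X + (1-\alpha)P^c_X$, split the outer sum accordingly, and invoke Assumption~\ref{assumption1} to replace the mixture conditional by the appropriate component conditional. Your first route, collapsing Equation~(\ref{eq:ntp}) to a single joint expectation $\sum_{(x,w)} P(x,w)\,[-\log\boldsymbol{p}^h_{\cdot|x}(w)]$ and then applying Equation~(\ref{eq:huber}) by pure linearity, is a genuine simplification the paper does not take; as you correctly observe, it renders Assumption~\ref{assumption1} unnecessary for the lemma itself. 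One small point worth making explicit in either argument: the conditional identification $P^m_{\cdot|x}=P^c_{\cdot|x}$ really needs the \emph{marginal} supports $\text{supp}(P^c_X)$ and $\text{supp}(P^n_X)$ to be disjoint, which is slightly stronger than disjointness of the joint supports as Assumption~\ref{assumption1} is literally phrased; the paper's motivating text (prefixes from $P^n_X$ have zero mass under $P^c_X$) makes clear this is the intended reading, and its own proof uses it tacitly as well.
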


\begin{proof}
  Let $x_i, \, i=1, 2, \ldots, |\mathcal{X}|$ denote all prefixes, and $w_j, \, j=1, 2, \ldots, V$ denote all tokens. For all $x \in \mathcal{X}$, by Equation~(\ref{eq:huber}), we have: 
  
  \begin{align}
  P^m_X(x) &= \sum_{j=1}^{V} P^m(x, w_j) = \sum_{j=1}^{V} \alpha P^n(x, w_j) + (1-\alpha) P^c(x, w_j) \notag \\
  &= \alpha \sum_{j=1}^{V} P^n(x, w_j) + (1-\alpha) \sum_{j=1}^{V} P^c(x, w_j) = \alpha P^n_X(x) + (1-\alpha) P^c_X(x) \label{eq:marginal}
  \end{align}
  This indicates that the marginal distribution possesses additivity. Consequently,
  \begin{align}
  \mathcal{L}_{ntp}(P^m, h) &= \mathbb{E}_{x \sim P^m_X} \mathbb{E}_{w \sim P^m_{\cdot | x}}-\log(\boldsymbol{p}^h_{\cdot | x}(w)) 
   = \sum_{i=1}^{|\mathcal{X}|} P^m_X(x_i) \cdot \mathbb{E}_{w \sim P^m_{\cdot | x_i}} -\log(\boldsymbol{p}^h_{\cdot | x_i}(w)) \notag \\
   &= \sum_{i=1}^{|\mathcal{X}|} [(1-\alpha) P^c_X(x_i) + \alpha P^n_X(x_i)]\cdot \mathbb{E}_{w \sim P^m_{\cdot | x_i}} -\log(\boldsymbol{p}^h_{\cdot | x_i}(w)) \tag*{(Equation~(\ref{eq:marginal}))} \\
   &= (1-\alpha)  \sum_{i=1}^{|\mathcal{X}|} P^c_X(x_i)\mathbb{E}_{w \sim P^m_{\cdot | x_i}} -\log(\boldsymbol{p}^h_{\cdot | x_i}(w)) + \alpha  \sum_{i=1}^{|\mathcal{X}|} P^n_X(x_i)\mathbb{E}_{w \sim P^m_{\cdot | x_i}} -\log(\boldsymbol{p}^h_{\cdot | x_i}(w)) \label{eq1}
   \end{align}
   The conditional distributions do not generally exhibit a linear relationship:
  $$P^m_{\cdot | x}(w|x) = \frac{P^m(x,w)}{P^m_X(x)}=\frac{(1-\alpha)P^c(x,w)+\alpha P^n(x,w)}{(1-\alpha)P^c_X(x)+\alpha P^n_X(x)} \ne  P^c_{\cdot | x}(w|x) \ne P^n_{\cdot | x}(w|x)$$
  However, if ${\text{supp}}(P^c) \cap {\text{supp}}(P^n)=\emptyset$, it immediately follows that:
  $$
    P^m_{\cdot | x}(w|x)=\frac{(1-\alpha)P^c(x,w)+\alpha P^n(x,w)}{(1-\alpha)P^c_X(x)+\alpha P^n_X(x)} = \begin{cases} 
      P^c_{\cdot | x}(w|x) & \text{if } (x,w) \in {\text{supp}}(P^c), \\
      P^n_{\cdot | x}(w|x) & \text{if } (x,w) \in {\text{supp}}(P^n).
    \end{cases} 
  $$
  Consequently,
  \begin{gather}
    \sum_{i=1}^{|\mathcal{X}|} P^c_X(x_i)\mathbb{E}_{w \sim P^m_{\cdot | x_i}} -\log(\boldsymbol{p}^h_{\cdot | x_i}(w)) 
    = \sum_{i=1}^{|\mathcal{X}|} P^c_X(x_i)\mathbb{E}_{w \sim P^c_{\cdot | {x_i}}} -\log(\boldsymbol{p}^h_{\cdot | x_i}(w)) 
    = \mathcal{L}_{ntp}(P^c, h) \label{eq2}
  \end{gather}
  Similarly,
  \begin{align}
  \sum_{i=1}^{|\mathcal{X}|} P^n_X(x_i)\mathbb{E}_{w \sim P^m_{\cdot | x_i}} -\log(\boldsymbol{p}^h_{\cdot | x_i}(w)) = \mathcal{L}_{ntp}(P^n, h) \label{eq3}
  \end{align}
  By substituting Equation~(\ref{eq2}) and Equation~(\ref{eq3}) into Equation~(\ref{eq1}), the proof is completed.
\end{proof}

Now we can prove Proposition~\ref{pro1}.
\begin{restatedproposition}{\ref{pro1}}
  Under Assumption~\ref{assumption1}, let $h^*$ be a model trained on $P^c$, with $\mathcal{L}_{ntp}(P^c,h^*)=-\log p_c$ and $\mathcal{L}_{ntp}(P^n,h^*)=-\log p_n$. When the model $h$ is trained on a mixed distribution $P^m$ which includes noise, it attempts to fit $P^n$, leading to an increase in the loss on the clean distribution $P^c$, such that $\mathcal{L}_{ntp}(P^c,h)=-\log(p_c-\epsilon)$ and $\mathcal{L}_{ntp}(P^n,h)=-\log(p_n+\epsilon/k)$ for some $\epsilon > 0$ ($k$ can be shown to be $\Omega(e^{\mathcal{L}_{ntp}(P^n,h)})$). Let $\eta = \alpha p_c - (1-\alpha)kp_n$.
We arrive at the following conclusions:

(1) If $\alpha \leq \frac{kp_n}{p_c + kp_n}$, then for any $0 < \epsilon <p_c$, we have $\mathcal{L}_{ntp}(P^m, h) \ge \mathcal{L}_{ntp}(P^m, h^*)$. This means that when $\alpha$ is sufficiently small, the global minimum on $P^m$ will not be affected by noise.

(2) If $\alpha > \frac{kp_n}{p_c + kp_n}$, then for $\epsilon < \eta$, it holds that $\mathcal{L}_{ntp}(P^m,h) < \mathcal{L}_{ntp}(P^m,h^*)$. This suggests that if $\alpha$ is large enough, the impact on the optimal hypothesis is at least as much as $\alpha p_c - (1-\alpha)kp_n$.

(3) When $\alpha < \frac{1}{3}$ and $k>\frac{\alpha(1-3\alpha)p_c}{(1-\alpha)(2-3\alpha)p_n}$, for $\epsilon \ge 3\eta$ we get $\mathcal{L}_{ntp}(P^m,h^*) < \mathcal{L}_{ntp}(P^m,h)$. Similarly, it can be shown that $\epsilon$ does not exceed $2\eta$ when $\alpha > \max(\frac{kp_n}{p_c + kp_n}, \frac{1}{2})$ and $k > \frac{(2\alpha-1)p_c}{2(1-\alpha)p_n}$. This indicates that when $k$ is sufficiently large, the effect of noise is at most $\mathcal{O}(\alpha p_c - (1-\alpha)kp_n)$.
\end{restatedproposition}

\begin{proof}
  We first establish that $k$ is $\Omega(e^{\mathcal{L}_{ntp}(P^n,h)})$, thereby ensuring that $\eta \ll \alpha p_c$. Note that
  \begin{align}
    \epsilon &= \frac{1}{e^{\mathcal{L}_{ntp}(P^c, h)}} - \frac{1}{e^{\mathcal{L}_{ntp}(P^c, h)}} = \frac{e^{\mathcal{L}_{ntp}(P^c, h)-\mathcal{L}_{ntp}(P^c, h)}-1}{e^{\mathcal{L}_{ntp}(P^c, h)}} \\
    \frac{\epsilon}{k} &= \frac{1}{e^{\mathcal{L}_{ntp}(P^n, h)}} - \frac{1}{e^{\mathcal{L}_{ntp}(P^n, h)}} = \frac{e^{\mathcal{L}_{ntp}(P^n, h)-\mathcal{L}_{ntp}(P^n, h)}-1}{e^{\mathcal{L}_{ntp}(P^n, h)}}
  \end{align}
  Therefore
  \begin{align}
   k = \frac{\epsilon}{\frac{\epsilon}{k}} &= e^{\mathcal{L}_{ntp}(P^n, h)-\mathcal{L}_{ntp}(P^c, h)} \cdot \frac{e^{\mathcal{L}_{ntp}(P^c, h)-\mathcal{L}_{ntp}(P^c, h)}-1}{e^{\mathcal{L}_{ntp}(P^n, h)-\mathcal{L}_{ntp}(P^n, h)}-1} \notag \\
   &> e^{\mathcal{L}_{ntp}(P^n, h)-\mathcal{L}_{ntp}(P^c, h)} \cdot \frac{\mathcal{L}_{ntp}(P^c, h)-\mathcal{L}_{ntp}(P^c, h)}{e^{\mathcal{L}_{ntp}(P^n, h)-\mathcal{L}_{ntp}(P^n, h)}} \notag \\
    &= 
    e^{\mathcal{L}_{ntp}(P^n, h)} \cdot \frac{\mathcal{L}_{ntp}(P^c, h)-\mathcal{L}_{ntp}(P^c, h)}{e^{\mathcal{L}_{ntp}(P^c, h)}} \label{eq4}
  \end{align}
  where $\frac{\mathcal{L}_{ntp}(P^c, h)-\mathcal{L}_{ntp}(P^c, h)}{e^{\mathcal{L}_{ntp}(P^c, h)}}$ only depends on $P^c$, $h$ and $h$. It is worth noting that when \( P^n \) is random noise, \( e^{\mathcal{L}_{ntp}(P^n, h)-\mathcal{L}_{ntp}(P^n, h)}-1 \) is close to $0$, which leads to \( k \) exceeding the lower bound established in Equation~(\ref{eq4}). Then:
  
  (1) If $\alpha \leq \frac{kp_n}{p_c + kp_n}$, we have:
  \begin{align}
    \mathcal{L}_{ntp}(P^m, h^*)-\mathcal{L}_{ntp}(P^m, h) &= (1-\alpha)(\mathcal{L}_{ntp}(P^c, h^*)-\mathcal{L}_{ntp}(P^c, h)) + \alpha (\mathcal{L}_{ntp}(P^n, h^*)-\mathcal{L}_{ntp}(P^n, h)) \notag \\
    &= (1-\alpha) \log \frac{p_c-\epsilon}{p_c} + \alpha \log \frac{p_n + \frac{\epsilon}{k}}{p_n} \label{eq:pro1_2} \\
    &\le (1-\alpha) \cdot \frac{-\epsilon}{p_c} + \alpha \cdot \frac{\frac{\epsilon}{k}}{p_n} \tag{$\log (1 + t) \le t$} \\
    &= \epsilon[\frac{(\alpha - 1)}{p_c} + \frac{\alpha}{kp_n}] = \epsilon \frac{\alpha p_c - (1-\alpha)kp_n}{k p_c p_n}
  \end{align}
  As $\alpha \leq \frac{kp_n}{p_c + kp_n} \iff \alpha p_c - (1-\alpha)kp_n \le 0$, for $\epsilon > 0$ we have $\mathcal{L}_{ntp}(P^m, h^*) \le \mathcal{L}_{ntp}(P^m, h)$.

  (2)when $\alpha > \frac{k p_n}{p_c + k p_n}$ and $\epsilon < \alpha p_c - (1 - \alpha)kp_n$, we have 
  \begin{align}
    \mathcal{L}_{ntp}(P^m, h^*)-\mathcal{L}_{ntp}(P^m, h) &= (1-\alpha) \log \frac{p_c-\epsilon}{p_c} + \alpha \log \frac{p_n + \frac{\epsilon}{k}}{p_n} \tag{Equation~(\ref{eq:pro1_2})}\\
    &\ge (1-\alpha) \frac{-\epsilon}{p_c-\epsilon} + \alpha \frac{\frac{\epsilon}{k}}{p_n+\frac{\epsilon}{k}} \tag{$\log t \ge 1 - \frac{1}{t}$} \\
    &= \epsilon (\frac{\alpha-1}{p_c-\epsilon} + \frac{\alpha}{kp_n+\epsilon}) \notag \\
    &= \frac{\epsilon}{(p_c-\epsilon)(kp_n+\epsilon)}[\alpha(p_c-\epsilon) - (1-\alpha)(kp_n+\epsilon)] \notag \\
    &= \frac{\epsilon}{(p_c-\epsilon)(kp_n+\epsilon)}[\alpha p_c - (1-\alpha)kp_n - \epsilon] \label{eq:lastpro1}
  \end{align}
  As $\epsilon < \alpha p_c - (1 - \alpha)kp_n < \alpha p_c < p_c$, by Equation~(\ref{eq:lastpro1}) we have $\mathcal{L}_{ntp}(P^m, h)-\mathcal{L}_{ntp}(P^m, h) > 0$. 

  (3) Let 
  \begin{equation}
  f(\epsilon) = (1-\alpha) \log \frac{p_c-\epsilon}{p_c} + \alpha \log \frac{p_n + \frac{\epsilon}{k}}{p_n} \overset{p_n'=kp_n}{=} (1-\alpha) \log \frac{p_c-\epsilon}{p_c} + \alpha \log \frac{p_n' + \epsilon}{p_n'}
  \end{equation}

  Take the derivative of $f(\epsilon)$:
  \begin{align}
    f'(\epsilon) &= (1-\alpha) \frac{-\frac{1}{p_c}}{1-\frac{\epsilon}{p_c}} + \alpha \frac{\frac{1}{p_n'}}{1+\frac{\epsilon}{p_n'}} = (1-\alpha) \frac{1}{\epsilon - p_c} + \alpha \frac{1}{p_n'+\epsilon} = \frac{[\alpha p_c-(1-\alpha)p_n']-\epsilon}{(p_c-\epsilon)(p_n'+\epsilon)}
  \end{align}

  Without loss of generality, assume $\eta > 0$, then $f(\epsilon)$ is monotonically increasing on $[0, \eta)$ and monotonically decreasing on  $(\eta, p_c)$. Therefore, to prove that $\mathcal{L}_{ntp}(P^m,h^*) < \mathcal{L}_{ntp}(P^m,h)$ for $\epsilon \geq 3\eta$, we only need to show $f(3\eta) < 0$ when $k>\frac{\alpha(1-3\alpha)p_c}{(1-\alpha)(2-3\alpha)p_n}$. Notice that 
  \begin{align}
    f(3\eta) &= (1-\alpha) \log(1-\frac{3\alpha p_c-3(1-\alpha)p_n'}{p_c}) + \alpha \log(1+\frac{3\alpha p_c-3(1-\alpha)p_n'}{p_n'}) \notag \\
    &= (1-\alpha) \log(1-3\alpha+\frac{3(1-\alpha)}{\frac{p_c}{p_n'}}) + \alpha \log(3\alpha-2+3\alpha \frac{p_c}{p_n'})
  \end{align}

  Let 
  \begin{equation}
    g_{3}(t)= (1-\alpha) \log(1-3\alpha+\frac{3(1-\alpha)}{t}) + \alpha \log(3\alpha-2+3\alpha t)
  \end{equation}
  Take the derivative:
  \begin{align}
    g'_3(t) &= (1-\alpha) \frac{1}{1-3\alpha+\frac{3(1-\alpha)}{t}} \frac{3(\alpha-1)}{t^2} + \alpha \frac{3\alpha}{3\alpha-2+3\alpha t} \\
    &= \frac{-3(1-\alpha)^2}{(1-3\alpha)t^2+(1-\alpha)t} + \frac{3\alpha^2}{3\alpha-2+3\alpha t} \\
    &= \frac{-3(1-\alpha)^2(3\alpha-2+3\alpha t) + 3\alpha^2 [(1-3\alpha)t^2+(1-\alpha)t]}{[(1-3\alpha)t^2+(1-\alpha)t](3\alpha-2+3\alpha t)} \\
    &= \frac{[\alpha t + (\alpha - 1)] [3 \alpha (1-3\alpha) t + 3(1-\alpha)(3\alpha - 2)]}{[(1-3\alpha)t^2+(1-\alpha)t](3\alpha-2+3\alpha t)}
  \end{align}
  First, consider the denominator. Since $\alpha < \frac{1}{3}$, it is clear that $(1-3\alpha)t^2 + (1-\alpha)t > 0$. Given that $t = \frac{p_c}{p_n'} > \frac{1-\alpha}{\alpha}$ (because $\eta > 0$), it follows that $3\alpha - 2 + 3\alpha t > 1 > 0$. Therefore, the denominator is always positive.
  Next, we consider the numerator. Since $\eta > 0$, it follows that $\alpha t + (\alpha - 1) > 0$. Therefore, when $t = \frac{p_c}{p_n'}=\frac{p_c}{k p_n} < \frac{(1-\alpha)(2-3\alpha)}{\alpha (1-3\alpha)}$, we have $g'_3(t) < 0$. This means that $g_3(t)$ is monotonically decreasing on $(\frac{1-\alpha}{\alpha}, \frac{(1-\alpha)(2-3\alpha)}{\alpha (1-3\alpha)}]$. Consequently, $f(3\eta) = g_3(t) \le g_3\left(\frac{1-\alpha}{\alpha}\right) = 0$.

  Following the same line of reasoning, when $\alpha > \frac{1}{2}$, we have
  \begin{align}
    f(2\eta) &= (1-\alpha) \log(1-\frac{2\alpha p_c-2(1-\alpha)p_n'}{p_c}) + \alpha \log(1+\frac{2\alpha p_c-2(1-\alpha)p_n'}{p_n'}) \notag \\
    &= (1-\alpha) \log(1-2\alpha+\frac{2(1-\alpha)}{\frac{p_c}{p_n'}}) + \alpha \log(2\alpha-1+2\alpha \frac{p_c}{p_n'})
  \end{align}

  Let
  \begin{equation}
    g_{2}(t)= (1-\alpha) \log(1-2\alpha+\frac{2(1-\alpha)}{t}) + \alpha \log(2\alpha-1+2\alpha t)
  \end{equation}
  Take the derivative:
  \begin{align}
    g'_2(t) &= (1-\alpha) \frac{1}{1-2\alpha+\frac{2(1-\alpha)}{t}} \frac{2(\alpha-1)}{t^2} + \alpha \frac{2\alpha}{2\alpha-1+2\alpha t} \\
    &= \frac{-2(1-\alpha)^2}{(1-2\alpha)t^2+2(1-\alpha)t} + \frac{2\alpha^2}{2\alpha-1+2\alpha t} \\
    &= \frac{2(1-2\alpha)(\alpha t + 1-\alpha)^2}{[(1-2\alpha)t^2+2(1-\alpha)t](2\alpha-1+2\alpha t)}
  \end{align}
  Therefore, when $\frac{1-\alpha}{\alpha} < t < \frac{2(1-\alpha)}{2\alpha-1}$, we have $g'_2(t) < 0$, which implies that $f(2\eta) < 0$.

\end{proof}




\subsection{Justification of Proposition~\ref{pro1}}
\label{app:explain}

\begin{figure*}
  \centering
  \begin{subfigure}{0.37\linewidth}
    \includegraphics[height=4.5cm, width=\linewidth]{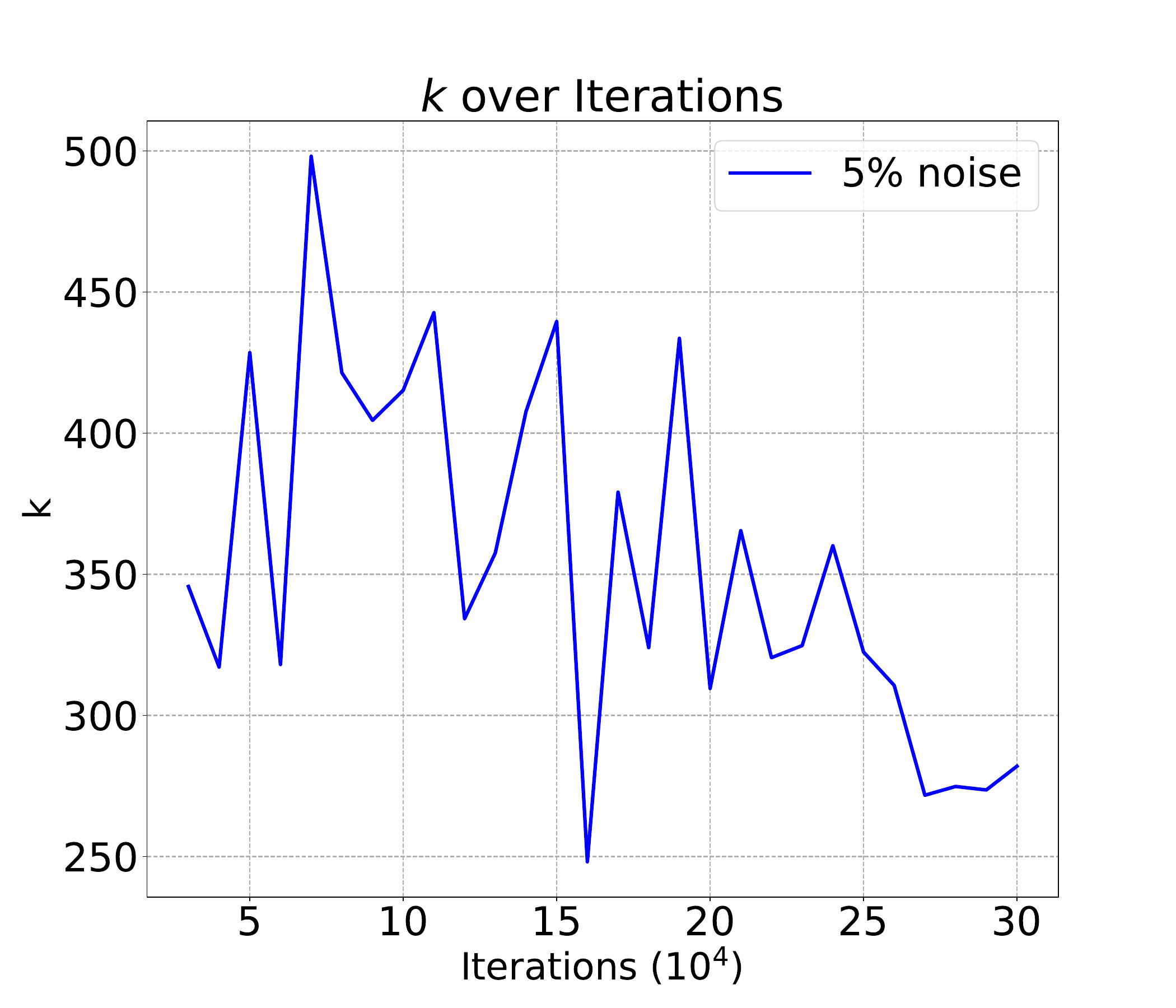}
    \caption*{(a)}
  \end{subfigure}
  \begin{subfigure}{0.55\linewidth}
    \includegraphics[height=4.5cm, width=\linewidth]{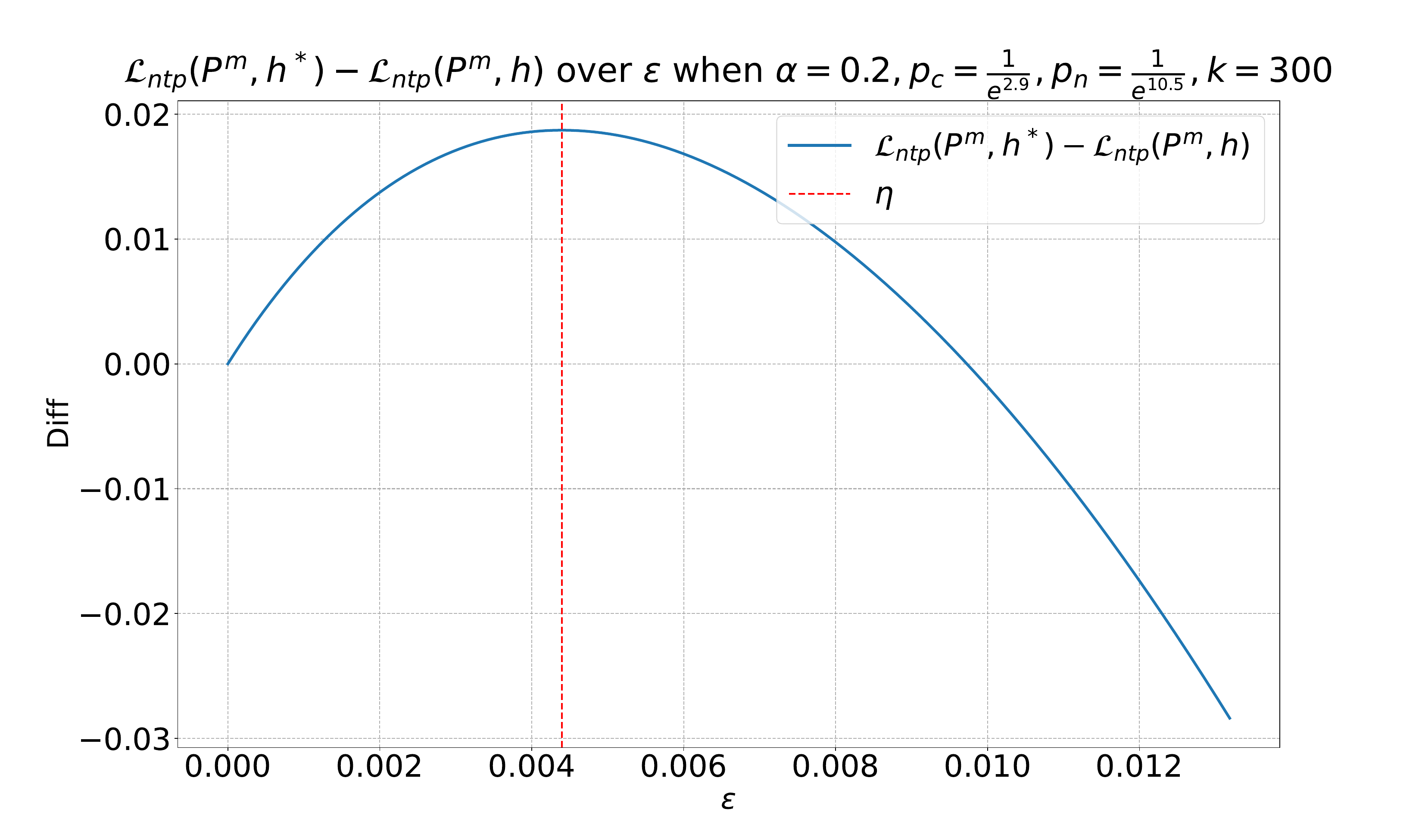}
    \caption*{(b)}
  \end{subfigure}

  \caption{Visualization of $k$ and $\mathcal{L}_{ntp}(P^m, h^*) - \mathcal{L}_{ntp}(P^m, h)$. (a) The trend of $k$ as it changes with training, plotted using the model trained on $P^c$ as $h^*$. (b) Visualization of $\mathcal{L}_{ntp}(P^m, h)$ when the parameter settings are consistent with the experiment.}
  \label{fig:app:k}

\end{figure*}

We plot the trend of $k$ in Figure~\ref{fig:app:k}(a). We compare checkpoints trained for the same iterations on both $P^c$ and $P^m$, where $p_c$ is calculated based on the loss of the model trained on $P^c$, and $p_n$ is determined by the loss of a model trained for 10,000 iterations on $P^m$ when evaluated on $P^n$. It can be observed that the value of $k$ corresponding to random noise is significantly greater than one, which supports the rationality of the assumption made in Proposition~\ref{pro1}.

On the other hand, to extend the proposed theory beyond uniformly distributed random noise (for instance, in multilingual models or Gaussian noise), it is necessary to ensure that $k$ does not become too small in these scenarios. This means that $\mathcal{L}_{ntp}(P^n,h^*)=-\log p_n$ should not be close to $\log V$. One trivial way to increase $p_n$ is to decrease V, the size of vocabulary. Apart from this, we provide two lines of reasoning to justify why $p_n$ can be made large:

(1) Numerous studies on compressing large language models, such as pruning \citep{llmjz1, llmjz2, llmjz3}, quantization \citep{llmlh1, llmlh2, llmlh3}, and distillation \citep{llmzl1, zl2}, have demonstrated that there exists a significant amount of redundancy within the parameters of large models. Therefore, we could first train a model on $P^c$ and then compress it, fine-tuning the surplus parameters on $P^n$. This approach would allow us to improve $p_n$ without altering $p_c$.

(2) A small proportion of data corresponding to $P^n$ can be introduced into $P^c$, making sure that $\alpha$ is extremely small. According to domain adaptation theory \citep{ben-david}, this would only slightly increase $\mathcal{L}_{ntp}(P^c)$. However, existing results \citep{mlbt1, mlbert1, mlbert2} indicate that pre-trained models like BERT or GPT on English text can exhibit strong multilingual capabilities with just a very limited amount of data. Consequently, compared to a model trained solely on $P^c$, the resulting model has a minor difference in $p_c$ but a relatively higher $p_n$.

Both thought experiments above demonstrate that there exist a lot of models within the parameter space $\mathcal{H}$ can perform well on $P^c$ while yielding non-trivial outcomes on $P^n$. Thus, we can ensure that models trained on mixed data distributions will have a sufficiently large $k$.

Additionally, in Figure~\ref{fig:app:k}(b), we illustrate how $\mathcal{L}_{ntp}(P^m)$ varies with changes in $\epsilon$, under settings identical to those used during pre-training. The results depicted in the figure are consistent with our theoretical derivations.

\subsection{Omitted Details in Section~\ref{sec:4.2}}
\label{app:4.2}

\begin{definition}[$\beta$-smooth \citep{unlabeled}]
  \label{def:smooth}
  A loss function $\ell(g_{\theta}(t), y)$ is $\beta$-smooth, if for any $(t, y) \in \mathcal{T} \times \mathcal{Y}$ and any $\theta, \theta' \in \Theta$,
  \begin{equation}
    ||\nabla_{\theta} \ell(g_{\theta}(t), y ) - \nabla_{\theta'} \ell(g_{\theta'}(t), y)||_2 \le \beta ||\theta - \theta'||_2
  \end{equation}
\end{definition}

\begin{definition}[$\rho$-input flatness]
  \label{def:flat}
  The $\rho$-input flatness $R_{\rho}(\theta)$ of loss function $ \ell(g_{\theta}(t), y )$ is defined as:
  \begin{equation}
    R_{\rho}(\theta) = \mathbb{E}_{(t, y) \sim \mathcal{D}} \sup_{\delta' \in B(0, \rho)}  \ell(g_{\theta}(t + \delta'), y ) -  \ell(g_{\theta}(t), y )
  \end{equation}
  where $B(0, \rho) = \{\delta' : ||\delta'||_2 < \rho\}$ is an open ball.
\end{definition}

\begin{lemma}
  \label{lemma2}
  If the loss function $\ell(g_{\theta}(t), y)$ is $\beta$-smooth, then 
  \begin{equation}
    ||\nabla_{\theta} \ell(g_{\theta}(t), y)||^2_2 \le 4 \beta \ell(g_{\theta}(t), y)
  \end{equation}
\end{lemma}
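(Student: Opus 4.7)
The plan is to establish the classical self-bounding property for nonnegative smooth functions, applied to $\ell(g_\theta(t), y)$ viewed as a function of $\theta$ with $(t, y)$ fixed. The route goes through the quadratic upper bound (``descent lemma'') implied by $\beta$-smoothness, and then exploits nonnegativity of the cross-entropy loss to close the inequality.

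First, I would derive the quadratic upper bound: for any $\theta, \theta' \in \Theta$,
\begin{equation}
\ell(g_{\theta'}(t), y) \le \ell(g_\theta(t), y) + \langle \nabla_\theta \ell(g_\theta(t), y), \theta' - \theta\rangle + \frac{\beta}{2}\|\theta' - \theta\|_2^2.
\end{equation}
This follows from Definition~\ref{def:smooth} by writing $\ell(g_{\theta'}(t), y) - \ell(g_\theta(t), y)$ as the line integral $\int_0^1 \langle \nabla_\theta \ell(g_{\theta + s(\theta'-\theta)}(t), y), \theta' - \theta\rangle\, ds$, adding and subtracting $\langle \nabla_\theta \ell(g_\theta(t), y), \theta' - \theta\rangle$, and applying Cauchy--Schwarz together with the $\beta$-Lipschitz bound on the gradient map.

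Next, I would specialize the inequality at the ``gradient step'' point $\theta' = \theta - \tfrac{1}{\beta}\nabla_\theta \ell(g_\theta(t), y)$. Substituting this choice collapses the right-hand side to $\ell(g_\theta(t), y) - \tfrac{1}{2\beta}\|\nabla_\theta \ell(g_\theta(t), y)\|_2^2$. Since $\ell$ is a cross-entropy loss it is nonnegative, so $\ell(g_{\theta'}(t), y) \ge 0$, and rearranging yields $\|\nabla_\theta \ell(g_\theta(t), y)\|_2^2 \le 2\beta\, \ell(g_\theta(t), y)$, which is stronger than the stated bound with the factor $4$.

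The only genuine subtlety is the implicit requirement that $\ell(g_\theta(t), y) \ge 0$ for all $\theta$, which holds for the cross-entropy loss used in $\mathcal{L}_{ce}$ but should be noted explicitly; beyond that, the argument is purely calculus on the quadratic upper bound and contains no real obstacle. The looser constant $4\beta$ in the statement leaves slack, so any convention mismatch between Definition~\ref{def:smooth} and a factor-of-two variant is absorbed without issue.
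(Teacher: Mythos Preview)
Your argument is correct and is precisely the standard self-bounding proof for nonnegative $\beta$-smooth functions; the paper itself does not reproduce a proof but simply cites an external source (Lemma~3.1 in the reference on smoothness), so your proposal is strictly more informative while matching what that cited lemma would contain. Your observation that the descent-lemma route actually yields the tighter constant $2\beta$ is also accurate; the looser $4\beta$ in the statement is harmless slack.
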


\begin{proof}
  See Lemma 3.1 in \citep{smoothness}.
\end{proof}

\begin{restatedproposition}{\ref{pro2}}
  Suppose $\ell(g_{\theta}(t), y)$ is $\beta$-smooth with $\rho$-input flatness $R_{\rho}(\theta)$, for any $\theta \in \Theta$:
  \begin{equation}
  \mathcal{L}_{gm}(\theta) \le 2 \beta + 2 \mathcal{L}_{ce}(\mathcal{D}, g_{\theta}) + R_{\rho}(\theta)
  \end{equation}
\end{restatedproposition}

\begin{proof}
  \begin{align}
    \mathcal{L}_{gm}(\theta) &= || \mathbb{E}_{(t, y) \sim {\mathcal{D}}} \nabla_{\theta}  \ell(g_{\theta}(t), y ) - \mathbb{E}_{(\hat{t}, y) \sim \hat{\mathcal{D}}} \nabla_{\theta}  \ell(g_{\theta}(\hat{t}), y ) ||_2 \\
    &\le || \mathbb{E}_{(t, y) \sim {\mathcal{D}}} \nabla_{\theta}  \ell(g_{\theta}(t), y )||_2 + ||\mathbb{E}_{(\hat{t}, y) \sim \hat{\mathcal{D}}} \nabla_{\theta}  \ell(g_{\theta}(\hat{t}), y ) ||_2 \tag{Triangle Inequality} \\
    &\le \mathbb{E}_{(t, y) \sim {\mathcal{D}}} ||\nabla_{\theta}  \ell(g_{\theta}(t), y )||_2 + \mathbb{E}_{(\hat{t}, y) \sim \hat{\mathcal{D}}} ||\nabla_{\theta}  \ell(g_{\theta}(\hat{t}), y ) ||_2 \tag{Jensen's Inequality} \\
    &\le \mathbb{E}_{(t, y) \sim {\mathcal{D}}} 2 \sqrt{\beta  \ell(g_{\theta}(t), y )} + \mathbb{E}_{(\hat{t}, y) \sim \hat{\mathcal{D}}} 2 \sqrt{\beta  \ell(g_{\theta}(\hat{t}), y )} \tag{Lemma~\ref{lemma2}} \\
    &\le \mathbb{E}_{(t, y) \sim {\mathcal{D}}} (\beta+ \ell(g_{\theta}(t), y )) + \mathbb{E}_{(\hat{t}, y) \sim {\hat{\mathcal{D}}}} (\beta+ \ell(g_{\theta}(\hat{t}), y )) \tag{AM-GM Inequality} \\
    &= 2\beta + 2 \mathbb{E}_{(t, y) \sim {\mathcal{D}}}  \ell(g_{\theta}(t), y ) + (\mathbb{E}_{(\hat{t}, y) \sim {\hat{\mathcal{D}}}}  \ell(g_{\theta}(\hat{t}), y ) - \mathbb{E}_{(t, y) \sim {\mathcal{D}}}  \ell(g_{\theta}(t), y )) \\
    &\le 2\beta + 2 \mathcal{L}_{ce}(\mathcal{D}, g_{\theta}) + R_{\rho}(\theta)
  \end{align}
  where the last inequality holds because $\hat{t} - t \in B(0, \rho)$.
\end{proof}

\section{Detailed Related Works}
\label{app:relatedwork}
\textbf{Data selection for language model training.} LLMs \citep{ydc2, zxw1, zxw2, xyx1, xyx2, wzy1, cmr1} have fundamentally change the landsape of current AI research \citep{xyx3, cmr2, zxw3, zxw4, ydc3, ydc4}. Large text corpora form the backbone of language models, with data quality being fundamental to their success and safety \citep{yyg1, yyg2}. \citep{wimbd} conducted a systematic analysis of open-source text datasets such as The Pile \citep{pile} (used to train Pythia), C4 \citep{c4} (used to train T5) and RedPajama (used to train LLaMA), revealing that they contain a significant amount of duplicate, toxic, synthetic, and low-quality content. Therefore, it is of great importance to thoroughly understand the impact of low-quality data within these pre-training datasets on the model's performance, reliability, and safety. \citep{allenphysics, allen33} systematically investigated the effect of low-quality data and found that such data can significantly reduce the model's knowledge capacity, sometimes by up to 20 times. Another research direction primarily focuses on the synthetic data of large language models, specifically examining the impacts of using data generated by LLMs for recursive training. The study by \citep{naturemc} was the first to explore this issue and introduced the concept of "model collapse", indicating that recursive training can lead to the loss of information in tail tokens, ultimately resulting in the model producing nonsensical content. \citep{colmmc} mainly provided a theoretical explanation for why model collapse occurs, supporting their arguments with experimental evidence. Consequently, the importance of data selection cannot be overstated. Given that data selection is an NP-hard problem in terms of combinatorial optimization \citep{xmy}, numerous heuristic algorithms have been proposed to expedite the process. \citep{pretrainguide} provided a comprehensive study on pre-training data selection and optimal ratios, offering practical recommendations. \citep{xzk1} proposed dataset pruning, an approach that assesses the impact of omitting training samples on model generalization and creates a minimal training subset with a controlled generalization gap. \citep{gptinfluence} evaluated the impact of individual training samples on the dynamics of GPT model training. \citep{quality_naacl} introduced the Instruction-Following Difficulty metric to assess the quality of instruction-tuning data. \citep{xie2023data} employed importance resampling for data selection. \citep{doremi, beyond} advocated for optimizing data composition and diversity. Despite these notable studies on data selection, they generally acknowledge that dataset noise degenerates model performance but lack a detailed understanding of how and to what extent, particularly in the case of random noise which is inevitable in large-scale datasets. Although \citep{gibberish} investigated the influence of gibberish input, the random noise within the pre-training dataset is still underexplored. This paper aims to bridge the gap.

\textbf{Learning from Noisy Distributions.} The majority of machine learning algorithms assume that training and test samples are independently and identically distributed (i.i.d.), a condition that is often not met in real-world scenarios. For instance, LLMs are pre-trained on datasets with all kinds of noise while their performance is evaluated by the user whose distribution is usually clean and meets real-world scenarios, which violates the i.i.d. assumption. Domain adaptation \citep{yzq, mlc1, zcy1} addresses this issue when the distribution of the training data differs from that of the test data. Although domain adaptation methods attempt to reduce the statistical distribution discrepancy \citep{zkd1, kkw1} or employ adversarial training \citep{jjl1, iosda} to minimize the gap between source and target domains, they typically require access to unlabeled test data under a semi-supervised learning setup, which is impractical for LLM training. Another reason domain adaptation cannot be directly applied here is that domain adaptation theory \citep{ben-david} focuses on the performance of a model trained on one distribution when it is applied to another different but related distribution. This kind of bounds can be easily derived by Lemma~\ref{lemma1}. However, what we aim to investigate here is the extent of performance loss when comparing a model trained on one distribution (noisy dataset) to a model trained on another distribution (clean dataset).

Apart from domain adaptation, there has been extensive research directly investigating noisy training sets. Noisy label learning \citep{ls1, ls2} have explored the impact of incorrect labels on model performance. Regarding input feature noise, \citep{smoothgrad} added perturbations to individual image inputs to enhance model interpretability, and \citep{purenoise} added white noise image into the training dataset to tackle the class imbalance problem. However, most of these efforts have concentrated on image classification and do not consider the pre-training paradigm.

\textbf{Fine-tuning Pre-trained Models.} The approach of initially pre-training model weights on large-scale datasets and subsequently fine-tuning them with downstream data has become the de facto standard in the fields of computer vision \citep{yyq, vargpt, cmr3, zxw5}, natural language processing \citep{whyhelp, ptft3, zxw6, cmr4} and audio processing \citep{uniaudio, zxw7, zxw8}. For instance, \citep{noisebert1, noisebert2} proposed enhancing the performance of models by increasing their resistance to minor perturbations in intermediate layers. Meanwhile, \citep{smart} improved model robustness by adding regularization terms. Besides full-parameter fine-tuning, numerous parameter-efficient fine-tuning algorithms have been extensively studied. \citep{adapter} introduced adapters into the original model architecture, optimizing only these parameters during fine-tuning. \citep{coop, cocoop} efficiently fine-tuned CLIP models \citep{clip} using learnable soft prompts. \citep{lora} optimized models through learning low-rank residual weights. These methods achieved performance close to that of full-parameter fine-tuning while maintaining the generalization ability of the original models. However, they all require access to the model's weights and loading them into GPU memory, which can be challenging for today's large models, especially when state-of-the-art models' parameters are not publicly available. Therefore, in this paper, we follow the NML setup and explore efficient ways to fine-tune the downstream task head under a black-box scenario.

\textbf{Implicit Regularization and Sharpness-aware Minimization.} Achieving good generalization in neural networks optimized using gradient descent algorithms has long been a research focus in deep learning theory. \citep{gradnorm2} explored the properties of stochastic gradient descent (SGD), finding that SGD implicitly constrains the gradient norm. Based on this observation, Sharpness-aware minimization (SAM) \citep{sharpness, sharpness0, sharpness2, xzk2} improves generalization by incorporating the gradient norm as a regularization term. Our method can be seen as drawing inspiration from SAM but differs in that our optimization objective is the model's resilience to input noise rather than seeking flat minima in the parameter space.

\section{Experiments in Section~\ref{sec:3}}
\label{sec:mer}

\subsection{Pre-training Dataset}


\textbf{OpenWebText Dataset.} The OpenWebText dataset \citep{openwebtext} is a large-scale corpus of English text data, developed to serve as an open-access alternative to proprietary dataset WebText that is utilized by OpenAI for training their GPT-2 models. This dataset originates from the analysis of outbound links clicked on Reddit, undergoing multiple stages of filtering to exclude non-English content, duplicate entries, copyrighted materials, and texts lacking in quality. These links generally direct to web pages available to the public, often shared or debated on Reddit, thereby covering a broad spectrum of subjects that mirror online popular interests and discussions. The dataset includes roughly 18 million documents, amounting to about 20GB of compressed plain text data in uint16 format. Since measures have been implemented to ensure the dataset's reliability by filtering out unsuitable content, we consider it a clean and noise-free dataset.



\subsection{Training Details of GPT-2}
\label{app:hyper1}
Our work is based on the source code of nanoGPT\footnote{\url{https://github.com/karpathy/nanoGPT}}. Specifically, we utilized the GPT-2 tokenizer with vocabulary size $V=50256$ to process the OpenWebText dataset, and then appended randomly generated noise to the end of the training set before commencing training. The model's context length is set to \(L = 1024\), with an embedding dimension $d=768$. The GPT-2 model consists of 12 self-attention layers, totaling approximately 124 million parameters. For optimization, we employed AdamW \citep{adamw, adam1} with a learning rate of 6e-4, weight decay of 0.1, and \(\beta\) values of 0.9 and 0.95 for \(\beta_1\) and \(\beta_2\), respectively. A cosine annealing scheduler was used to gradually adjust the learning rate down to 6e-5. We configured the batch size to 16, with a gradient accumulation step of 40, allowing each iteration to process 655,360 tokens (16 * 40 * 1024). Training proceeded for a total of 300,000 iterations.

\subsection{Experiments on Other Text Corpus}
\label{arxiv}
\begin{figure*}[t]
  \centering
  \begin{subfigure}{0.23\linewidth}
    \includegraphics[height=2.5cm, width=\linewidth]{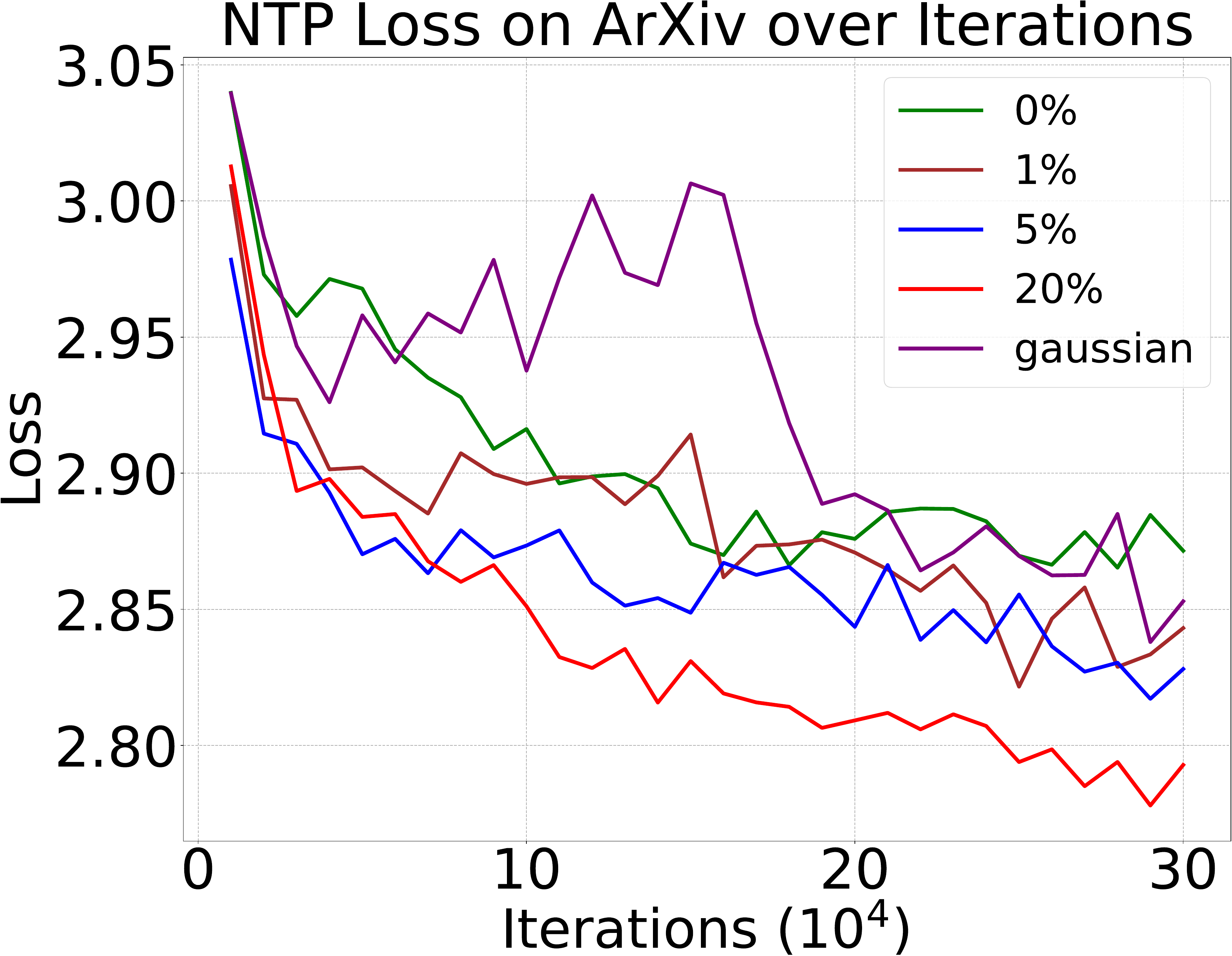}
    \caption*{(a)}
  \end{subfigure} \hfill
  \begin{subfigure}{0.24\linewidth}
    \includegraphics[height=2.5cm, width=\linewidth]{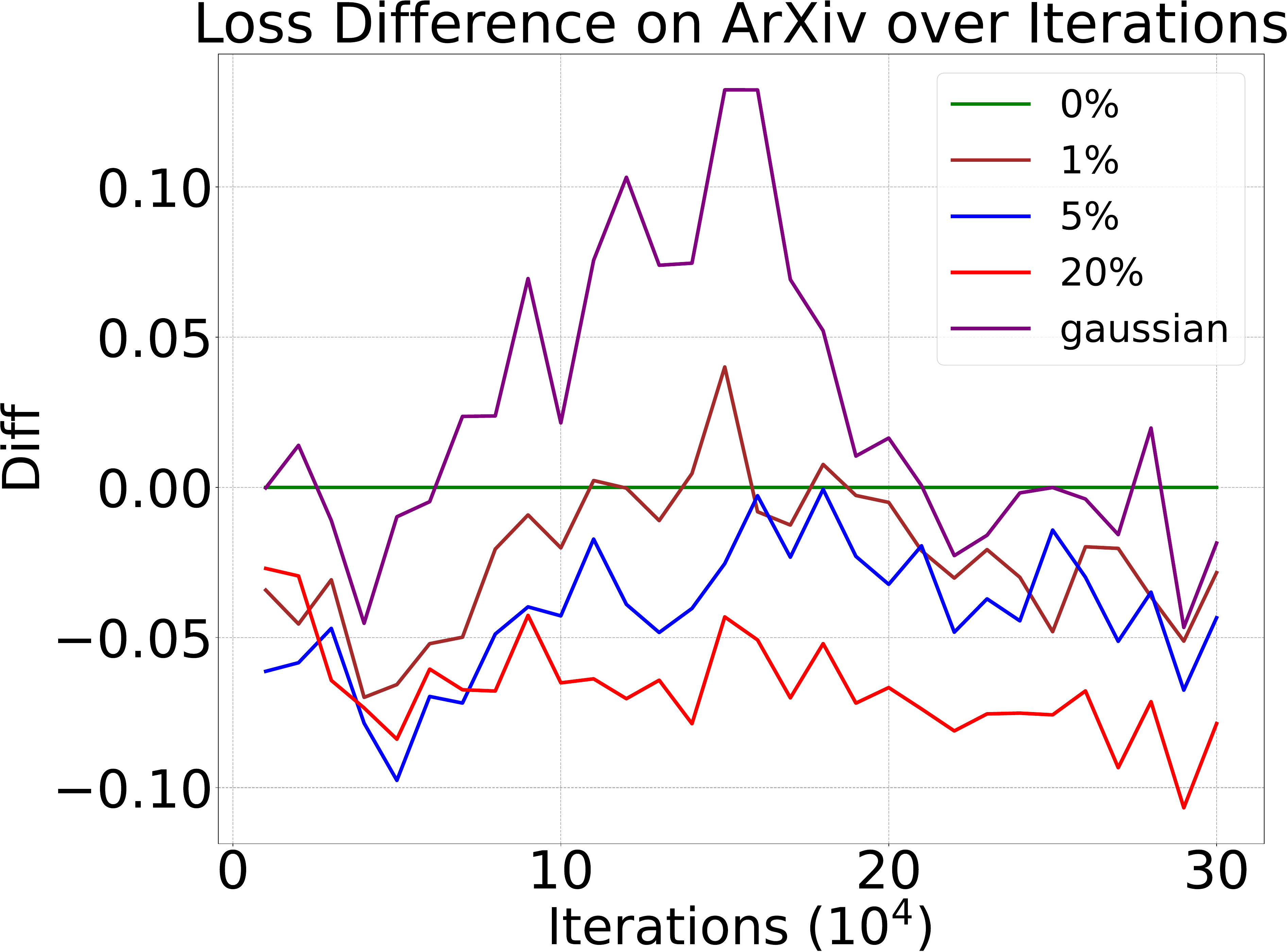}
    \caption*{(b)}
  \end{subfigure} \hfill
  \begin{subfigure}{0.23\linewidth}
    \includegraphics[height=2.5cm, width=\linewidth]{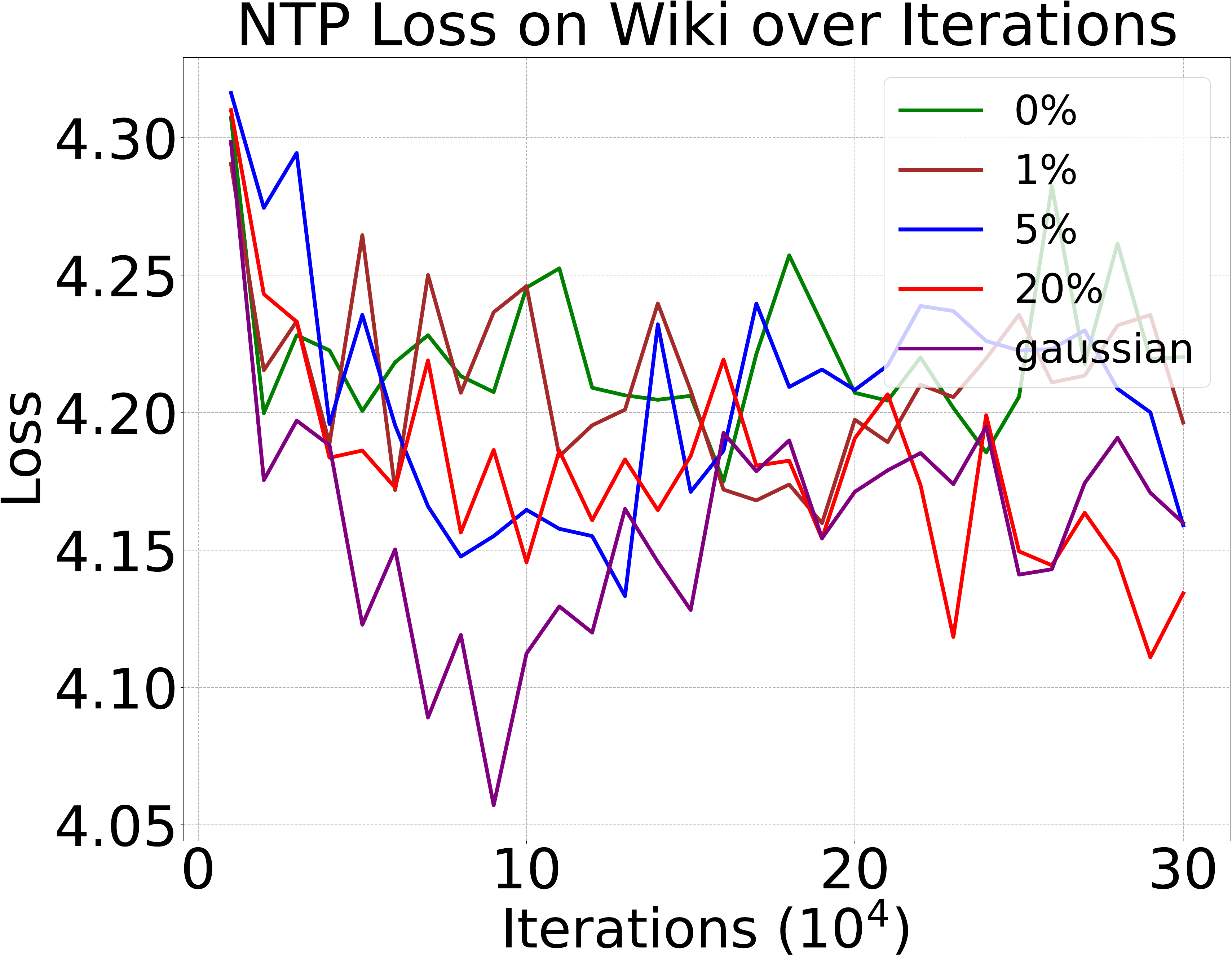}
    \caption*{(c)}
  \end{subfigure} \hfill
  \begin{subfigure}{0.24\linewidth}
    \includegraphics[height=2.5cm, width=\linewidth]{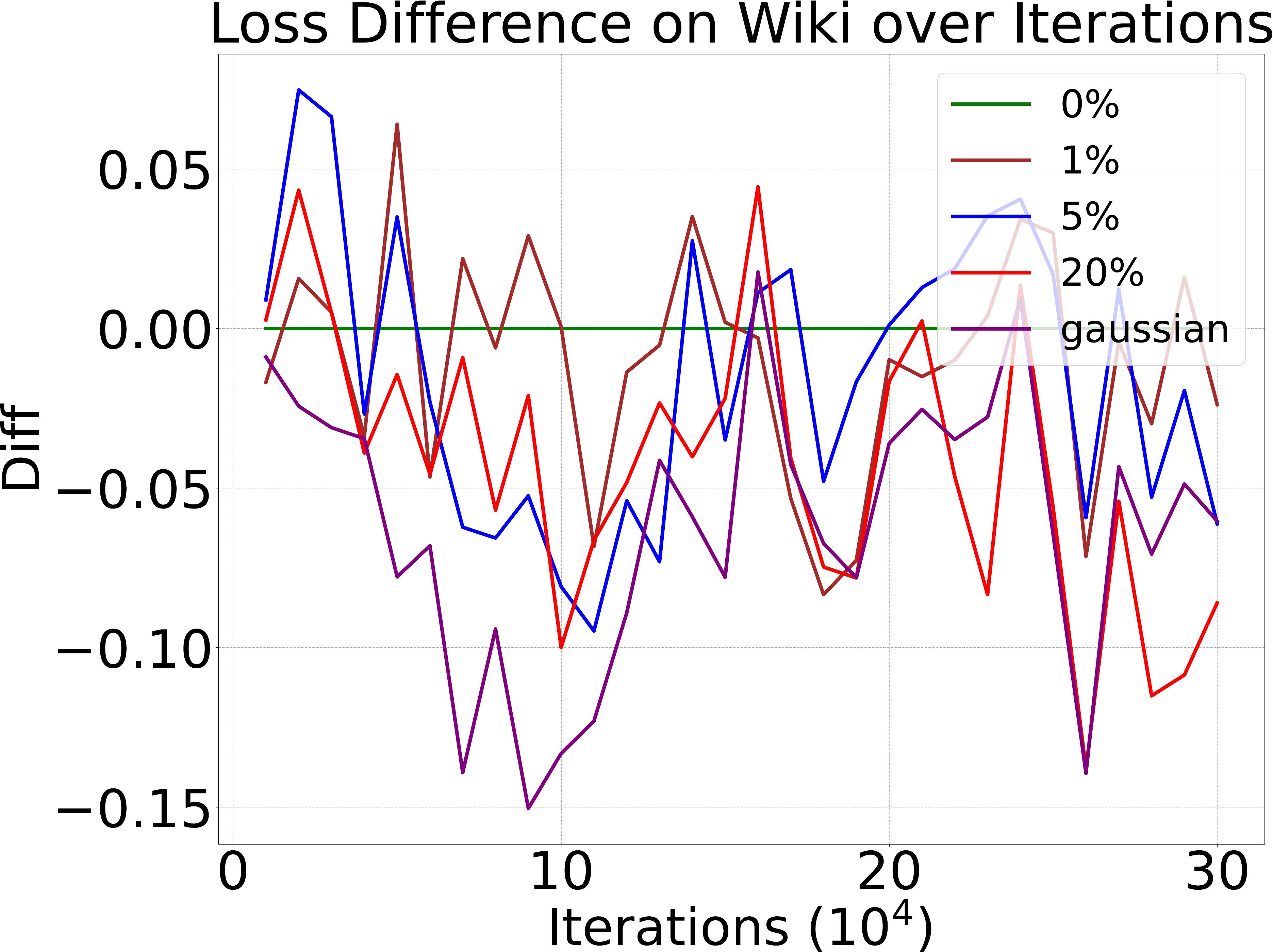}
    \caption*{(d)}
  \end{subfigure}
  \vspace*{-2.5mm}
  \caption{Loss and its difference across different types and levels of noise within the ArXiv and Wikipedia corpora.}
  \label{fig:4}
  \vspace*{-4mm}
\end{figure*}
To further investigate the impact of random noise on model generalization, we evaluate the next-token prediction loss of the trained models on data crawled from arXiv and Wikipedia. The results are illustrated in Figure~\ref{fig:4}. Surprisingly, models trained with added noise outperformed those trained on $P^c$. This counterintuitive finding aligns with previous work in visual domains \citep{purenoise}, suggesting that incorporating random noise into training sets might enhance model robustness. Additionally, we observe that the performance of models subjected to Gaussian noise varies across different datasets. These observations warrant further investigation.

\subsection{Synthetic Results about Multilingual Models}
\label{app:multilin}
\begin{figure*}
  \centering
  \begin{subfigure}{0.47\linewidth}
    \includegraphics[height=4cm, width=\linewidth]{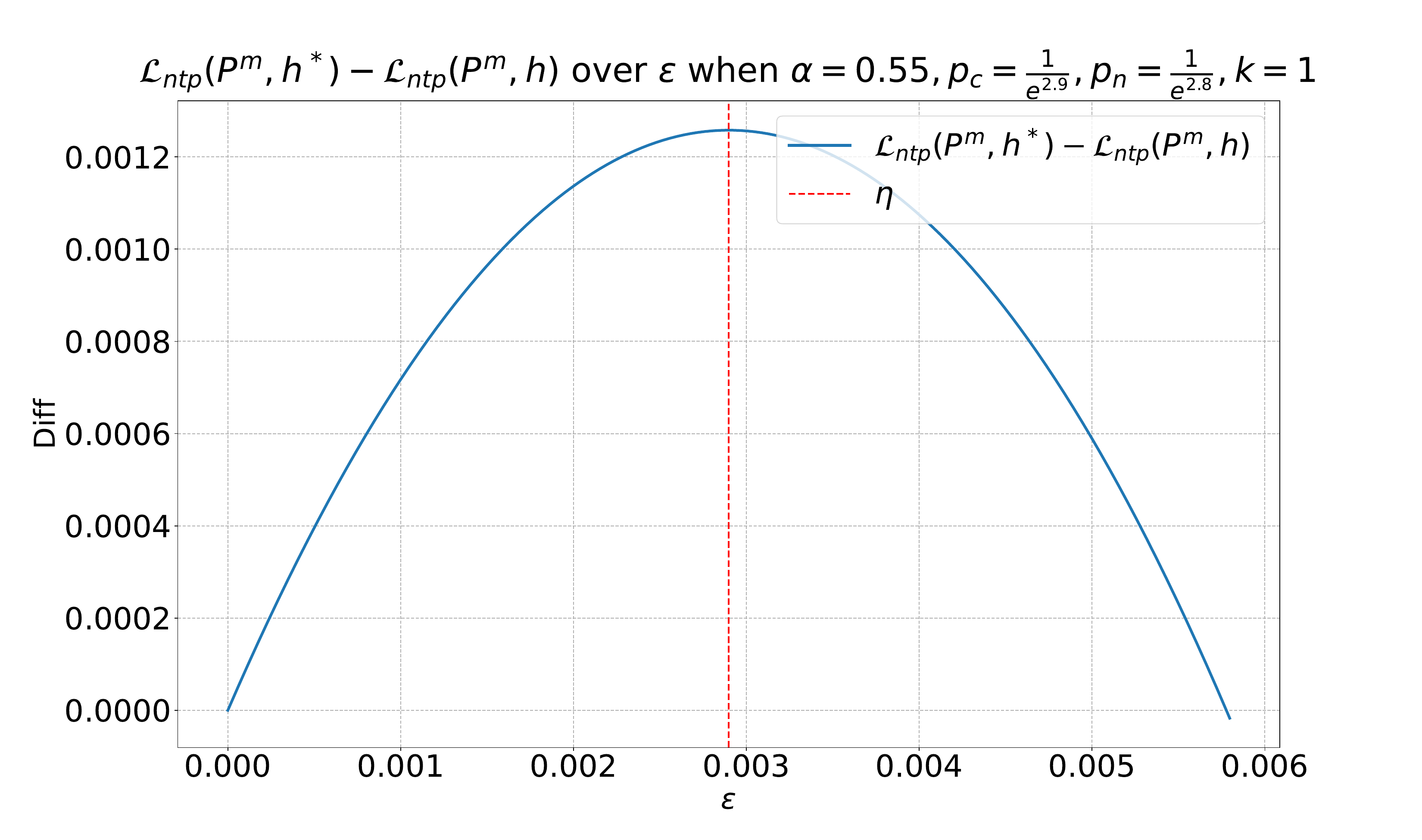}
    \caption*{(a)}
  \end{subfigure}
  \begin{subfigure}{0.47\linewidth}
    \includegraphics[height=4cm, width=\linewidth]{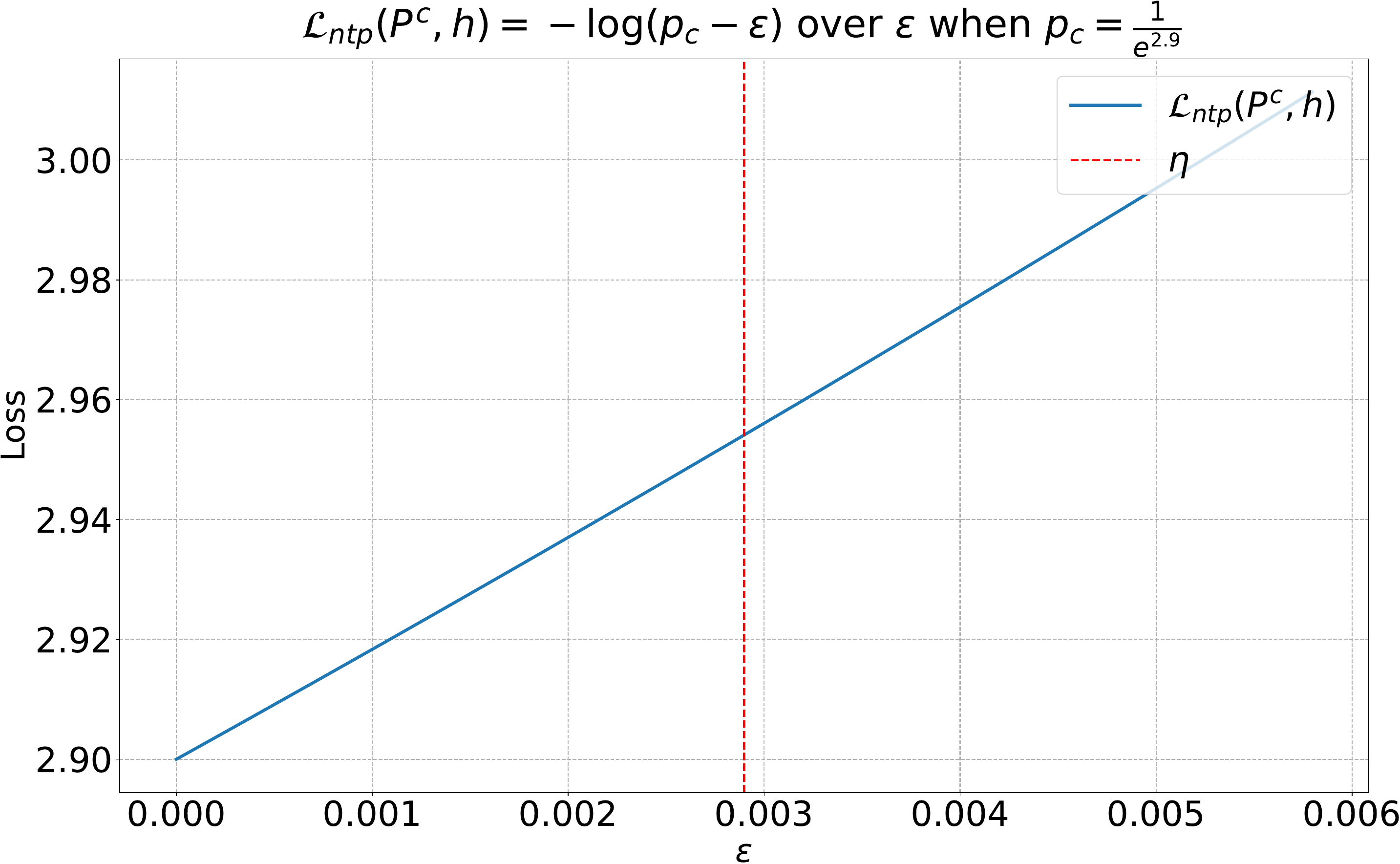}
    \caption*{(b)}
  \end{subfigure}

  \caption{Visualization of (a) $\mathcal{L}_{ntp}(P^m, h^*) - \mathcal{L}_{ntp}(P^m, h)$ and (b) $\mathcal{L}_{ntp}(P^c, h)$ with $\alpha=0.55,k=1, \mathcal{L}_{ntp}(P^c, h^*)=2.9, \mathcal{L}_{ntp}(P^n, h^*)=2.8$.}
  \label{fig:app:ml}
\end{figure*}

To illustrate our theory's explanatory power concerning multilingual models, we have plotted the scenario where $h^*$ is influenced by $P^n$ under the conditions $\alpha=0.55$, $k=1$, $\mathcal{L}_{ntp}(P^c, h^*)=2.9$, and $\mathcal{L}_{ntp}(P^n, h^*)=2.8$, as shown in Figure~\ref{fig:app:ml}. This setup simulates a model trained on a roughly 1:1 multilingual corpus, where the capacity of one language is affected by the data from another language. As can be observed from the figure, the impact on $p_c$ does not exceed $2\eta = 2(\alpha p_c - (1-\alpha)p_n)$, which translates to an increase of no more than 0.1 in $\mathcal{L}_{ntp}(P^n, h)$. This finding strongly supports the success of multilingual models from a theoretical perspective.

\subsection{Hardware}
\label{app:hard}
We conducted the pre-training process on a server equipped with 8 NVIDIA GeForce RTX 4090 GPUs. It takes approximately 70 hours to train one model using eight 4090 GPUs, so pre-training five GPT-2 models in total requires 2,800 GPU hours. Pre-training a 2.7B model takes about a week on 4 NVIDIA A100 GPUs.

\section{Experiments in Section~\ref{sec:4}}
\label{app:exp4-detail}
\subsection{Detailed Setup for Downstream Natural Language Understanding Experiments}
\subsubsection{Datasets}
We utilize 8 commonly-used text classification benchmark: SST-2, SST-fine, 20newsgroup, CR, BBC, Balanced COPA, MRPC, WiC. The detailed information can be found in Table~\ref{tab:language-data}.

\begin{table}[h]
  \centering
  \begin{tabular}{c|c|c|c}
  \toprule
  Dataset                     & Classes & Train Size & Test Size \\ \hline
  SST-2 \citep{sst2}          & 2       & 6.92k      & 1.82k     \\
  SST-fine \citep{sstfine}    & 5       & 8.54k      & 2.21k     \\
  20newsgroup \citep{20ng}    & 20      & 11.3k      & 7.53k     \\
  CR \citep{cr}               & 2       & 3.39k      & 376       \\
  BBC \citep{bbc}             & 5       & 1.23k      & 1k        \\
  Balanced COPA \citep{bcopa} & 2       & 1k         & 500       \\
  MRPC \citep{mrpc}           & 2       & 3.67k      & 1.73k     \\
  WiC \citep{wic}             & 2       & 5.43k      & 1.4k      \\ \bottomrule
  \end{tabular}
  \caption{Details of the 8 natural language understanding dataset.}
  \label{tab:language-data}
\end{table}

\subsubsection{Prompts}
Since classification tasks can be processed as seq2seq tasks by adding prompts \citep{ilya, mathlm}, we design a unique prompt for each dataset and task. This approach transforms the inputs into a format that large language models can process. The specific designs are shown in Table \ref{tab:prompt}.

\newcolumntype{Y}{>{\centering\arraybackslash}X}
\begin{table}[h]
  \centering
  \begin{tabularx}{0.85\textwidth}{c|Y}
    \toprule
    Dataset       & Prompts                                                                                                                                                                    \\ \hline
    SST-2         & \{text\} this movie is                                                                                                                                                     \\
    SST-fine      & \{text\} this movie is                                                                                                                                                     \\
    20newsgroup   & \{text\} This article is about                                                                                                                                             \\
    CR            & \{text\} the sentiment is                                                                                                                                                  \\
    BBC           & Please classify the topic of the following news: \{text\} Answer:                                                                                                            \\
    Balanced COPA & Given the premise: \{premise\} Find the most plausible alternative for the \{question\}. Option 1: \{choice1\} Option 2: \{choice2\} Which option is more plausible? \\
    MRPC          & Sentence 1: \{text1\} Sentence 2: \{text2\} Is this a paraphrase?                                                                                                       \\
    WiC           & Task: Determine if the word \{phrase1\} has the same meaning in the two sentences below. Sentence 1: \{sentence1\} Sentence 2: \{sentence2\} Your answer:                  \\ \bottomrule
    \end{tabularx}
    \caption{Details of the prompts applied to each dataset.}
  \label{tab:prompt}
\end{table}

\subsubsection{Hyperparameters}
For all experiments in Section~\ref{sec:4}, we utilize a two-layer MLP with hidden dimension equals to feature dimension and ReLU activation function.

For all experiments shown in Table~\ref{tab:1}, we set $\gamma$ in Equation~(\ref{eq:perturb}) to be 0.01 and $\lambda$ in Equation~(\ref{eq:final}) to be 0.15. Following the setup as described by \citep{mathlm}, for each dataset, we conduct a grid search on the validation set to identify the optimal learning rate and batch size. We train for a total of ten epochs with the learning rate ranging within \{3e-4, 6e-4\} and batch size options including \{8, 12, 16, 32\}. For samples without a designated validation set, we randomly select 10\% of the training set samples to form a validation set for the purpose of selecting the best parameters.

For the experiments listed in Table \ref{tab:2}, we set the batch size to 8 and the learning rate to 6e-4 for all linear probe tasks. For all MLP probe tasks, the learning rate is set to 1e-4. Regarding $\gamma$ and $\lambda$, we conduct a grid search on the validation set to find the optimal values.

\subsection{Detailed Setup for Downstream Vision Experiments}
\subsubsection{Datasets}
We select 14 image classification datasets, which serve as a common benchmark for evaluating model performance in the vision community \citep{coop, nml}. Specific information about these 14 datasets is provided in Table \ref{tab:vision-data}.
\begin{table}[h]
  \centering
  \begin{tabular}{c|ccc}
  \toprule
  Dataset                    & Classes & Train Size & Test Size \\ \hline
  StanfordCars \citep{car}   & 196     & 8144       & 8041      \\
  Caltech101 \citep{c101}    & 102     & 3060       & 6084      \\
  CIFAR-10 \citep{c10}       & 10      & 50000      & 10000     \\
  CIFAR-100 \citep{c10}     & 100     & 50000      & 10000     \\
  DTD \citep{dtd}            & 47      & 1880       & 1880      \\
  EuroSAT \citep{sat}        & 10      & 21600      & 5400      \\
  FGVCAircraft \citep{fgvc}  & 102     & 6667       & 3333      \\
  Flowers102 \citep{f102}    & 102     & 2040       & 6149      \\
  Food101 \citep{f101}       & 101     & 75750      & 25250     \\
  OxfordPet \citep{pet}      & 37      & 3680       & 3669      \\
  PatchCamelyon \citep{pcam} & 2       & 262144     & 32768     \\
  RESISC45 \citep{r45}       & 45      & 25200      & 6300      \\
  Rendered SST2 \citep{sst2}   & 2    & 6920       & 1821      \\
  SVHN \citep{svhn}          & 10      & 73257      & 26032     \\ \bottomrule
  \end{tabular}
  \caption{Details of the 14 vision dataset.}
  \label{tab:vision-data}
\end{table}

\subsubsection{Models}
We use five pre-trained general-purpose visual backbone models that cover a variety of architectures, datasets, and training methods. Detailed information is provided in Table \ref{tab:vision-model}.
\begin{table}[h]
  \centering
  \begin{tabular}{c|ccc}
  \toprule
  Model                       & Pre-training Dataset                                                                           & Size   \\ \hline
EfficientNet-B3 \citep{efn} & \begin{tabular}[c]{@{}c@{}}ImageNet-1K \citep{in1k} \\ and JFT-300M \citep{j300m}\end{tabular} & 12.3M  \\
ResNetv2-152x2 \citep{resnet} & ImageNet-21K \citep{in21k}                                                                     & 321.7M \\
Swin-L \citep{swinl}        & ImageNet-21K                                                                                   & 196.7M \\
ConvNext-L \citep{conv}     & \begin{tabular}[c]{@{}c@{}}Laion-2B \citep{laion} \\ and ImageNet-1K\end{tabular}              & 200.1M \\
ViT-L \citep{vit}           & Laion-2B                                                                                       & 428M   \\ \hline
  \end{tabular}
  \caption{Details of the 5 vision models.}
  \label{tab:vision-model}
\end{table}

\subsubsection{Hyperparameters}
\label{app:hyper2}
In our study, similar to the approach detailed in \citep{nml}, we primarily contrast our proposed method with MLP and LP tuning. For the optimization process, we employ AdamW for fine-tuning the modules over 30 epochs, utilizing a cosine learning rate scheduler. Specifically, for LP, we configure the learning rate at 0.1 without applying any weight decay. In contrast, both the MLP tuning and our method use a more conservative learning rate of 0.001 alongside a weight decay of 1e-4.

\subsubsection{Detailed Experimental Results}
In Table~\ref{tab3}, due to space limitations, we only present the average results, while detailed results are shown in Table~\ref{tab:final}.

\begin{table}[H]
  \centering
  \resizebox{\textwidth}{!}{
    \begin{tabular}{c|ll|ll|ll}
      \toprule
      \multirow{2}{*}{Models}            & \multicolumn{2}{c|}{StanfordCars}                      & \multicolumn{2}{c|}{Caltech101}                        & \multicolumn{2}{c}{CIFAR-10}                           \\
                                         & \multicolumn{1}{c}{Linear} & \multicolumn{1}{c|}{MLP}  & \multicolumn{1}{c}{Linear} & \multicolumn{1}{c|}{MLP}  & \multicolumn{1}{c}{Linear} & \multicolumn{1}{c}{MLP}   \\ \hline
      ViT-L                              & 93.38 $\pm$ 0.76           & 94.41 $\pm$ 1.05          & 92.07 $\pm$ 1.19           & 95.20 $\pm$ 1.12           & 97.99 $\pm$ 0.95           & 98.35 $\pm$ 0.95          \\
      ViT-L + $\mathcal{L}_{gm}$         & \textbf{93.71 $\pm$ 1.37}  & \textbf{94.56 $\pm$ 1.50} & \textbf{95.01 $\pm$ 0.97}  & \textbf{95.29 $\pm$ 1.27} & \textbf{98.07 $\pm$ 0.74}  & \textbf{98.48 $\pm$ 0.60} \\ \hline
      ConvNext-L                         & 86.01 $\pm$ 1.48           & 88.68 $\pm$ 0.89          & 91.02 $\pm$ 0.79           & 94.47 $\pm$ 0.53          & 97.49 $\pm$ 1.36           & 98.09 $\pm$ 0.85          \\
      ConvNext-L+$\mathcal{L}_{gm}$      & \textbf{86.78 $\pm$ 1.32}  & \textbf{89.06 $\pm$ 1.19} & \textbf{94.11 $\pm$ 1.19}  & \textbf{94.93 $\pm$ 0.88} & \textbf{97.59 $\pm$ 0.52}  & \textbf{98.15 $\pm$ 0.71} \\ \hline
      EfficientNet-B3                    & 56.20 $\pm$ 0.54            & \textbf{58.57 $\pm$ 1.11} & 89.43 $\pm$ 0.78           & 91.22 $\pm$ 1.23          & 94.04 $\pm$ 1.19           & 95.73 $\pm$ 1.07          \\
      EfficientNet-B3+$\mathcal{L}_{gm}$ & \textbf{57.02 $\pm$ 1.28}  & 58.15 $\pm$ 1.12          & \textbf{90.25 $\pm$ 1.43}  & \textbf{91.55 $\pm$ 0.90} & \textbf{94.11 $\pm$ 0.88}  & \textbf{95.96 $\pm$ 0.86} \\ \hline
      ResNetv2-152x2                     & 56.95 $\pm$ 1.21           & \textbf{59.18 $\pm$ 1.34} & 91.40 $\pm$ 1.47            & 92.48 $\pm$ 1.30          & 96.28 $\pm$ 1.16           & 96.91 $\pm$ 0.89          \\
      ResNetv2-152x2+$\mathcal{L}_{gm}$  & \textbf{58.78 $\pm$ 1.16}  & 58.67 $\pm$ 1.27          & \textbf{93.83 $\pm$ 0.91}  & \textbf{93.95 $\pm$ 0.94} & \textbf{96.31 $\pm$ 0.85}  & \textbf{97.03 $\pm$ 0.53} \\ \hline
      Swin-L                             & 68.17 $\pm$ 0.98           & \textbf{74.11 $\pm$ 0.60} & 92.58 $\pm$ 0.95           & 94.09 $\pm$ 1.04          & 98.26 $\pm$ 0.89           & 98.61 $\pm$ 0.78          \\
      Swin-L+$\mathcal{L}_{gm}$          & \textbf{69.31 $\pm$ 1.07}  & 73.71 $\pm$ 0.94          & \textbf{93.65 $\pm$ 1.42}  & \textbf{94.62 $\pm$ 0.64} & \textbf{98.41 $\pm$ 0.91}  & \textbf{98.72 $\pm$ 1.28} \\ \bottomrule
    \end{tabular}}
  \vspace{-3mm}
\end{table}
\begin{table}[H]
  \centering
  \resizebox{\textwidth}{!}{
    \begin{tabular}{ll|ll|ll|ll}
      \hline
      \multicolumn{2}{c|}{CIFAR-100}                         & \multicolumn{2}{c|}{EuroSAT}                           & \multicolumn{2}{c|}{FGVCAircraft}                      & \multicolumn{2}{c}{OxfordPet}                          \\
      \multicolumn{1}{c}{Linear} & \multicolumn{1}{c|}{MLP}  & \multicolumn{1}{c}{Linear} & \multicolumn{1}{c|}{MLP}  & \multicolumn{1}{c}{Linear} & \multicolumn{1}{c|}{MLP}  & \multicolumn{1}{c}{Linear} & \multicolumn{1}{c}{MLP}   \\ \hline
      \textbf{88.07 $\pm$ 0.58}  & 89.49 $\pm$ 0.52          & 97.53 $\pm$ 1.13           & 97.75 $\pm$ 0.61          & 65.76 $\pm$ 0.73           & 68.43 $\pm$ 0.78          & 91.65 $\pm$ 1.18           & 93.97 $\pm$ 1.50          \\
      88.06 $\pm$ 0.93           & \textbf{89.58 $\pm$ 0.93} & \textbf{97.83 $\pm$ 0.73}  & \textbf{98.03 $\pm$ 0.60} & \textbf{66.63 $\pm$ 1.08}  & \textbf{68.67 $\pm$ 1.07} & \textbf{93.18 $\pm$ 0.81}  & \textbf{94.17 $\pm$ 1.10} \\ \hline
      \textbf{86.76 $\pm$ 0.91}  & 87.79 $\pm$ 1.27          & 95.57 $\pm$ 1.22           & 96.31 $\pm$ 1.11          & 57.18 $\pm$ 1.12           & 62.25 $\pm$ 0.66          & 94.98 $\pm$ 0.54           & 95.80 $\pm$ 0.75           \\
      86.46 $\pm$ 1.04           & \textbf{87.88 $\pm$ 1.24} & \textbf{96.05 $\pm$ 1.46}  & \textbf{96.74 $\pm$ 1.27} & \textbf{58.35 $\pm$ 0.68}  & \textbf{63.61 $\pm$ 0.83} & \textbf{95.39 $\pm$ 1.21}  & \textbf{95.99 $\pm$ 0.92} \\ \hline
      \textbf{77.34 $\pm$ 0.86}  & 80.28 $\pm$ 1.02          & 94.81 $\pm$ 1.35           & 95.90 $\pm$ 0.88           & 44.73 $\pm$ 1.31           & 46.23 $\pm$ 0.92          & 93.84 $\pm$ 1.14           & 94.79 $\pm$ 1.14          \\
      77.16 $\pm$ 1.11           & \textbf{80.47 $\pm$ 1.43} & \textbf{95.20 $\pm$ 0.52}   & \textbf{96.07 $\pm$ 1.18} & \textbf{45.33 $\pm$ 0.61}  & \textbf{47.07 $\pm$ 0.57} & \textbf{94.63 $\pm$ 1.03}  & \textbf{94.98 $\pm$ 1.14} \\ \hline
      \textbf{84.30 $\pm$ 1.18}   & \textbf{84.68 $\pm$ 1.33} & 97.12 $\pm$ 1.46           & 97.46 $\pm$ 1.28          & 42.03 $\pm$ 0.72           & 48.39 $\pm$ 0.85          & 91.93 $\pm$ 0.68           & 92.99 $\pm$ 1.40          \\
      84.28 $\pm$ 1.22           & 84.29 $\pm$ 1.38          & \textbf{97.35 $\pm$ 1.12}  & \textbf{97.59 $\pm$ 0.72} & \textbf{45.69 $\pm$ 0.80}  & \textbf{48.84 $\pm$ 0.61} & \textbf{92.61 $\pm$ 0.87}  & \textbf{93.45 $\pm$ 1.32} \\ \hline
      89.68 $\pm$ 1.33           & 90.74 $\pm$ 0.98          & \textbf{97.11 $\pm$ 0.72}  & 97.59 $\pm$ 0.63          & 54.96 $\pm$ 1.24           & \textbf{61.17 $\pm$ 1.35} & 92.17 $\pm$ 0.64           & 94.38 $\pm$ 1.21          \\
      \textbf{89.79 $\pm$ 0.52}  & \textbf{91.18 $\pm$ 1.21} & 97.09 $\pm$ 1.11           & \textbf{97.71 $\pm$ 1.08} & \textbf{56.10 $\pm$ 0.67}   & 60.99 $\pm$ 0.73          & \textbf{93.86 $\pm$ 0.85}  & \textbf{94.57 $\pm$ 1.11} \\ \hline
      \end{tabular}}
  \vspace{-3mm}
\end{table}
\begin{table}[H]
  \centering
  \resizebox{\textwidth}{!}{
    \begin{tabular}{ll|ll|ll|ll}
      \hline
      \multicolumn{2}{c|}{Food101}                           & \multicolumn{2}{c|}{Flowers102}                        & \multicolumn{2}{c|}{DTD}                               & \multicolumn{2}{c}{SVHN}                               \\
      \multicolumn{1}{c}{Linear} & \multicolumn{1}{c|}{MLP}  & \multicolumn{1}{c}{Linear} & \multicolumn{1}{c|}{MLP}  & \multicolumn{1}{c}{Linear} & \multicolumn{1}{c|}{MLP}  & \multicolumn{1}{c}{Linear} & \multicolumn{1}{c}{MLP}   \\ \hline
      90.51 $\pm$ 1.31           & 91.04 $\pm$ 1.35          & 94.04 $\pm$ 1.13           & 97.83 $\pm$ 0.74          & 80.53 $\pm$ 1.06           & 83.29 $\pm$ 0.86          & 78.82 $\pm$ 1.25           & 84.74 $\pm$ 1.08          \\
      \textbf{90.62 $\pm$ 1.37}  & \textbf{91.23 $\pm$ 0.52} & \textbf{96.67 $\pm$ 1.41}  & \textbf{98.06 $\pm$ 1.28} & \textbf{82.76 $\pm$ 0.71}  & \textbf{83.77 $\pm$ 0.98} & \textbf{79.80 $\pm$ 0.88}  & \textbf{84.59 $\pm$ 1.38} \\ \hline
      \textbf{89.09 $\pm$ 1.06}  & \textbf{90.21 $\pm$ 0.87} & 94.71 $\pm$ 1.31           & 98.78 $\pm$ 1.17          & 76.01 $\pm$ 1.03           & 78.67 $\pm$ 1.15          & 66.16 $\pm$ 0.87           & 72.76 $\pm$ 0.78          \\
      88.62 $\pm$ 0.72           & 90.10 $\pm$ 1.39          & \textbf{97.12 $\pm$ 0.95}  & \textbf{98.99 $\pm$ 0.99} & \textbf{77.92 $\pm$ 0.72}  & \textbf{80.05 $\pm$ 1.44} & \textbf{68.43 $\pm$ 1.37}  & \textbf{73.18 $\pm$ 0.67} \\ \hline
      \textbf{76.95 $\pm$ 0.82}  & \textbf{81.78 $\pm$ 0.89} & 84.19 $\pm$ 0.61           & 88.97 $\pm$ 1.32          & 69.09 $\pm$ 1.27           & 73.08 $\pm$ 1.30          & 54.26 $\pm$ 0.87           & 61.38 $\pm$ 0.82          \\
      76.35 $\pm$ 0.74           & 81.33 $\pm$ 1.45          & \textbf{86.19 $\pm$ 1.08}  & \textbf{89.42 $\pm$ 0.62} & \textbf{71.27 $\pm$ 1.24}  & \textbf{73.82 $\pm$ 1.40} & \textbf{56.74 $\pm$ 0.73}  & \textbf{63.56 $\pm$ 0.82} \\ \hline
      \textbf{84.83 $\pm$ 1.36}  & 84.15 $\pm$ 1.27          & 96.76 $\pm$ 0.88           & 98.27 $\pm$ 0.53          & 72.23 $\pm$ 0.94           & 76.11 $\pm$ 1.25          & 60.75 $\pm$ 0.89           & 64.87 $\pm$ 1.45          \\
      84.41 $\pm$ 0.88           & \textbf{84.64 $\pm$ 1.04} & \textbf{98.08 $\pm$ 1.23}  & \textbf{98.84 $\pm$ 0.82} & \textbf{74.73 $\pm$ 1.38}  & \textbf{77.12 $\pm$ 1.39} & \textbf{62.04 $\pm$ 1.01}  & \textbf{65.06 $\pm$ 0.62} \\ \hline
      90.23 $\pm$ 0.64           & 92.23 $\pm$ 0.55          & 97.28 $\pm$ 0.92           & 99.51 $\pm$ 1.17          & 75.85 $\pm$ 0.88           & 80.74 $\pm$ 1.45          & 62.77 $\pm$ 1.42           & \textbf{69.53 $\pm$ 1.22} \\
      \textbf{90.26 $\pm$ 1.14}  & \textbf{92.32 $\pm$ 1.00} & \textbf{99.12 $\pm$ 1.05}  & \textbf{99.60 $\pm$ 0.51} & \textbf{77.44 $\pm$ 1.00}  & \textbf{80.91 $\pm$ 0.83} & \textbf{64.83 $\pm$ 0.98}  & 68.97 $\pm$ 1.12          \\ \hline
      \end{tabular}}
  \vspace{-3mm}
\end{table}
\begin{table}[H]
  \centering
  \resizebox{\textwidth}{!}{
    \begin{tabular}{ll|ll|ll|ll}
      \hline
      \multicolumn{2}{c|}{resisc45}                          & \multicolumn{2}{c|}{rsst2}                             & \multicolumn{2}{c|}{pcam}                              & \multicolumn{2}{c}{Avg}                                \\
      \multicolumn{1}{c}{Linear} & \multicolumn{1}{c|}{MLP}  & \multicolumn{1}{c}{Linear} & \multicolumn{1}{c|}{MLP}  & \multicolumn{1}{c}{Linear} & \multicolumn{1}{c|}{MLP}  & \multicolumn{1}{c}{Linear} & \multicolumn{1}{c}{MLP}   \\ \hline
      95.44 $\pm$ 1.41           & 95.79 $\pm$ 0.57          & 67.65 $\pm$ 1.12           & 73.58 $\pm$ 1.31          & \textbf{82.65 $\pm$ 0.57}  & \textbf{83.92 $\pm$ 0.56} & 86.86            & 89.12          \\
      \textbf{95.73 $\pm$ 1.19}  & \textbf{95.93 $\pm$ 1.28} & \textbf{71.82 $\pm$ 0.55}  & \textbf{74.24 $\pm$ 0.76} & 82.55 $\pm$ 1.49           & 83.78 $\pm$ 0.97          & \textbf{88.03}  & \textbf{89.31} \\ \hline
      92.65 $\pm$ 1.25           & 93.09 $\pm$ 0.89          & 60.73 $\pm$ 1.30           & 66.0 $\pm$ 0.55           & 72.21 $\pm$ 0.72           & 77.08 $\pm$ 0.53          & 82.89            & 85.71          \\
      \textbf{92.93 $\pm$ 0.54}  & \textbf{93.31 $\pm$ 1.31} & \textbf{64.14 $\pm$ 0.99}  & \textbf{67.49 $\pm$ 0.63} & \textbf{73.1 $\pm$ 1.35}   & \textbf{78.31 $\pm$ 1.10} & \textbf{84.07}  & \textbf{86.27} \\ \hline
      87.19 $\pm$ 1.05           & 89.01 $\pm$ 0.70          & \textbf{50.46 $\pm$ 1.02}  & 50.74 $\pm$ 1.05          & 53.32 $\pm$ 0.59           & \textbf{51.10 $\pm$ 1.34} & 73.27           & 75.62          \\
      \textbf{87.33 $\pm$ 0.60}  & \textbf{89.17 $\pm$ 1.34} & 50.19 $\pm$ 0.74           & \textbf{51.07 $\pm$ 0.92} & \textbf{54.52 $\pm$ 1.27}  & 50.10 $\pm$ 0.92          & \textbf{74.02}  & \textbf{75.90} \\ \hline
      90.96 $\pm$ 0.81           & 91.19 $\pm$ 0.57          & 50.90 $\pm$ 0.73           & 49.91 $\pm$ 1.08          & 77.62 $\pm$ 0.85           & 77.86 $\pm$ 1.16          & 78.14           & \textbf{79.60} \\
      \textbf{91.17 $\pm$ 0.64}  & \textbf{91.36 $\pm$ 0.73} & \textbf{54.25 $\pm$ 0.69}  & \textbf{49.92 $\pm$ 1.03} & \textbf{79.44 $\pm$ 0.52}  & \textbf{78.48 $\pm$ 0.53} & \textbf{79.49}  & 79.45          \\ \hline
      92.79 $\pm$ 1.25           & 94.09 $\pm$ 0.99          & 50.96 $\pm$ 1.41           & 53.59 $\pm$ 1.20          & 77.32 $\pm$ 0.68           & 78.38 $\pm$ 1.04          & 81.43           & 84.19          \\
      \textbf{93.41 $\pm$ 1.32}  & \textbf{94.42 $\pm$ 0.96} & \textbf{53.48 $\pm$ 0.89}  & \textbf{54.96 $\pm$ 0.90} & \textbf{81.18 $\pm$ 1.34}  & \textbf{79.29 $\pm$ 1.31} & \textbf{82.70}  & \textbf{84.42} \\ \hline
      \end{tabular}}
  \caption{Detailed accuracy of 5 vision backbone models on \textbf{14} commonly-used vision datasets.}
  \label{tab:final}
\end{table}

\subsection{Runtime Analysis}
\label{app:time}
Since all our models are black-box models, we first process all samples into vector-form features and then probe them. All models described in this paper can run on a single NVIDIA RTX 4090 GPU. Extracting all these features requires a total of 10 GPU hours. Subsequently, training these Linear or MLP Probes requires approximately 200 GPU hours in total.

\section{Limitations}
\label{limitation}
Firstly, due to limitations in computational resources and costs, we pre-train only the GPT-2 124M and 2.7B model on the OpenWebText dataset. Additionally, the types of noise considered are limited to uniform and Gaussian distributions. However, based on Proposition~\ref{pro1}, we argue that training GPT-2 on the Synthetic OpenWebText dataset is sufficient to uncover the essence of the issue, as Proposition~\ref{pro1} makes no assumptions about data distribution or model architecture.

Secondly, on the theoretical front, we consider neural networks as black boxes and focus on analyzing the properties of global minima. Due to limited mathematical skills, we do not delve into the dynamical aspects to specifically examine how random noise within datasets influences model gradients, nor do we explore the differences between global and local minima obtained through stochastic gradient descent. However, experimental results indicate that neural networks trained with stochastic gradient descent do not suffer from significant disturbances.

\end{document}